
\documentclass[12pt]{article}
\usepackage[T1]{fontenc}
\usepackage[utf8]{inputenc}
\usepackage[english]{babel}
\usepackage{url}
\usepackage{bm}
\usepackage{amsmath}
\usepackage{amssymb}
\usepackage{amsthm}
\usepackage{mathtools}
\usepackage{amsmath}
\usepackage{amssymb}
\usepackage{stmaryrd}
\usepackage{babel}
\usepackage{comment,xspace}
\usepackage{tablefootnote,subfig,subfloat}
\usepackage{booktabs,multirow,float}
\usepackage{graphicx}
\usepackage{diagbox}
\usepackage{natbib}

\usepackage{wasysym}
\usepackage{adjustbox}
\usepackage[pagewise]{lineno}\linenumbers
\modulolinenumbers[5]
\nolinenumbers
\usepackage{soul,amsthm}




\providecommand{\tabularnewline}{\\}

\providecommand{\theoremname}{Theorem}
\theoremstyle{plain}
\newtheorem{thm}{\protect\theoremname}
\providecommand{\lemmaname}{Lemma}
\theoremstyle{plain}
\newtheorem{lem}[thm]{\protect\lemmaname}
\theoremstyle{plain}

%

\usepackage{enumitem}
\usepackage[section]{placeins}
\usepackage{breakcites}
\usepackage{booktabs}
\usepackage{amsthm}

\newtheorem{prop}{Proposition}
\newtheorem{defn*}{Definition}
\usepackage{thmtools}
\usepackage{colortbl}

\usepackage{url}
\usepackage{color}
\usepackage[ruled,linesnumbered,resetcount]{algorithm2e}
\usepackage{algcompatible}
\newcounter{algoline}
\setcounter{algoline}{0}

\usepackage{multibib}

\newcites{SM}{SM References}

\newcommand{\blind}{1}

\usepackage{geometry}
\geometry{verbose,tmargin=1in,bmargin=1in,lmargin=1in,rmargin=1in}

\setcounter{topnumber}{9}
\setcounter{bottomnumber}{9}
\setcounter{totalnumber}{20}
\setcounter{dbltopnumber}{9}
\usepackage[unicode=true,
 bookmarks=false,
 breaklinks=true,pdfborder={0 0 1},backref=false,colorlinks=false]
 {hyperref}

\providecommand{\tabularnewline}{\\}

\usepackage{xcolor}

\newlength\myindent
\setlength\myindent{2em}

\providecommand{\theoremname}{Theorem}

\usepackage{setspace}
\onehalfspacing
\begin{document}

\if1\blind
{
  \title{\bf Efficient Decision Trees for Tensor Regressions} 
  \author{Hengrui Luo\\
Department of Statistics, Rice University;\\ Computational Research Division, Lawrence Berkeley National Laboratory\\
and \\
    Akira Horiguchi \\
    Department of Statistics, University of California Davis\\
and \\
    Li Ma\\
    Department of Statistics, University of Chicago}
\date{}
\maketitle
} 
\fi

\if0\blind
{
  \bigskip
  \bigskip
  \bigskip
  \begin{center}
    {\LARGE\bf Efficient Decision Trees for Tensor Regressions}
\end{center}
  \medskip
} \fi

\begin{abstract}
 We proposed the tensor-input tree (TT) method for  scalar-on-tensor and tensor-on-tensor regression
problems. %
We first
address scalar-on-tensor problem by proposing scalar-output regression
tree models whose input variables are tensors (i.e., multi-way arrays).
We devised and implemented fast randomized and deterministic algorithms for efficient fitting of scalar-on-tensor trees, making TT competitive against tensor-input GP models \citep{sun2023tensor,yu2018tensor}. %
Based on scalar-on-tensor tree models,
we extend our method to tensor-on-tensor problems using additive tree
ensemble approaches. Theoretical justification and extensive experiments, {including testing robustness to entrywise input tensor noise}, are provided on real and synthetic datasets to illustrate the performance of TT. 

Our implementation is provided at \url{http://www.github.com/hrluo}. 
\end{abstract}
\textit{Keywords: }Decision tree regressions, scalar-on-tensor regressions, tensor-on-tensor regressions, ensemble methods.  
\section{Introduction}

\subsection{Problem and Settings}

In recent years, the intersection of tensor data analysis and non-parametric
modeling methods \citep{guhaniyogi2017bayesian,papadogeorgou2021soft,wang2024bayesian}
has garnered considerable interest among mathematicians and statisticians.
Non-parametric tensor models have the potential to handle complex
multi-dimensional data \citep{bi2021tensors} and represent spatial
correlation between entries of data. This paper addresses both scalar-on-tensor
(i.e., to predict a scalar response based on a tensor input) and tensor-on-tensor 
(i.e., both the input and output are tensors)
non-linear regression problems using
recursive partitioning methods, often referred to as tree(-based)
models.

Supervised learning on tensor data, such as tensor regression, has
significant relevance due to the proliferation of multi-dimensional
data in modern applications. Tensor data naturally arises in various
fields such as imaging \citep{wang2024bayesian}, neuroscience \citep{li2018tucker},
and computer vision \citep{LLM2023}, where observations often take
the form of multi-way arrays. Traditional regression models typically
handle vector inputs and outputs, and thus can fail to
capture the structural information embedded within tensor data.

Tree-based methods \citep{breiman1984classification,Breiman01,friedman2004discussion,hastie2009elements}, on the other
hand, offer a flexible and interpretable framework for regression.
They can capture non-linear relationships and interactions
between features, making them particularly well-suited for the intricate
nature of tensor data. The ability to apply tree models directly to
tensor inputs provides a powerful tool for researchers and practitioners
dealing with high-dimensional and multi-way data.

%
Existing tree regression methods like CART, random forest, and boosting do not handle tensor data's multi-array characteristics. We develop regression tree models for tensor inputs and outputs to enable non-linear and possibly non-parametric modeling that captures complex tensor interactions. We introduce scalar-on-tensor regression with new scalar-output tree algorithms featuring strategies and loss functions designed for tensor data, integrating low-rank tensor approximation methods.

We then address the tensor-on-tensor problem by constructing an additive
tree ensemble model analogous to that for tree boosting in regression~\citep{chipman1998bayesian,denison1998bayesian,Friedman01,friedman2004discussion}. Our tensor tree boosting uses our scalar-on-tensor tree models as ``weak'' learners
to achieve competitive predictive performance. We show that the tree
ensemble approach is particularly effective for complex outcome spaces
such as in tensor-response regression.

In addition to developing methodology, we also address the
computational scalability and basic theoretical guarantees of our algorithms. This is particularly
important in the ensemble approach, as a large number of single trees
are trained. 
Our algorithms are competitive with existing tensor regression approaches such as tensor
Gaussian processes \citep{yu2018tensor,sun2023tensor}, and alternative non-parametric models. %
Non-parametric tree-based models provide appropriate data partitions in conjunction with ensemble methods to extend these models
into the domain of tensor inputs and outputs, integrating methodological ideas and theories from
tensor decompositions and low-rank approximations in parametric tensor
regression models \citep{zhou2013tensor,li2017parsimonious,li2018tucker}. Our proposed non-parametric regression method of \emph{tensor-input trees} (TT) allows us to capture multi-way dependence expressed in tensor covariates, and presents a scalable model for potentially heterogeneous tensors. In contrast to smoothing methods, TT is particularly suitable when there are non-smooth or change-of-pattern behavior in the tensor. 

In connection with generalized tree-based models with inputs in Fr\'{e}chet spaces \citep{capitaine2024frechet} (a.k.a., Fr\'{e}chet trees in \citet{qiu2024random}), the tensor space can be viewed as a Fr\'{e}chet space with additional structure that admits unique moments \citep{LLM2023} and low-rank decompositions \citep{kolda2009tensor}. This allows us to introduce novel tensor-specific splitting rules (i.e., LAE and LRE in Sec \ref{subsec:Splitting-Criterion}), whereas existing works \citep{capitaine2024frechet,qiu2024random} only consider a variance-based splitting criteria using clustering following the spirit of honest forest \citep{athey2019generalized}. In light of this, \citet{krawczyk2021tensor} developed a clustering-based splitting tensor-input tree model. To reduce the computational burden introduced by solving Fr\'{e}chet moments in splitting, \citet{bulte2024medoid} propose to solve the optimization among a smaller fraction of solution space called Fr\'{e}chet Medoid, which is very similar to our subsampling method, but different from our branch-and-bound strategy. In addition to single tensor-input trees, we also leverage boosting ensembles \citep{Friedman01} in order to reduce bias in tensor-output cases; in contrast, a random forest approach leaves the bias unchanged and instead aims to reduce variance. 

\subsection{Regression Trees Revisited}

We start by reviewing a scalar-on-vector regression problem. Consider
the regression setup with $n$ data pairs $(X_{i},y_{i})$ where the input
and response variables are $X_{i}\in\mathbb{R}^{d},y_{i}\in\mathbb{R}$
\begin{align}
y_{i}=f(X_{i})+\epsilon_{i},i=1,\ldots,n\label{eq:vector-input regression}
\end{align}
where $f\colon \mathbb{R}^{d}\rightarrow\mathbb{R}$ is a real-valued function
and the $\epsilon_{i}$s are independent mean zero noises.

A single regression tree assumes a vector input in $\mathbb{R}^{d}$
and is built by recursively partitioning \citep{breiman1984classification,hastie2009elements}
the input space into disjoint regions $R_{1},R_{2},\ldots,R_{J}\subset\mathbb{R}^{d},R_{j}\cap R_{k}=\emptyset$
when $j\neq k$, before fitting a regression
mean model $m_{j}$ in each region. 
The regression tree model can be written as 
\begin{equation}
y_{i}=g(X_{i})+\epsilon_{i}, \qquad
g(X;\mathcal{T},\mathcal{M})=\sum_{j=1}^{J}m_{j}(X)\cdot I(X\in R_{j}),\label{eq:single_tree}
\end{equation}
where $I(X\in R_{j})$ is an indicator function, $\mathcal{T}$
is the tree structure that dictates the partition of the input space, and $\mathcal{M}$ determines the tree model's prediction values. 
The partition is created by minimizing a splitting
criterion such as the sum of squared residuals (or sum of variances) 
of responses within each region: 
\begin{equation}
\text{SSE}(\mathcal{T};y_{1},\cdots,y_{n})=\sum_{j=1}^{J}\frac{1}{N_j}\sum_{X_{i}\in R_{j}}(y_{i}-\hat{y}_{R_{j}})^{2}.\label{eq:SSE}
\end{equation}
Typically $J=2$ so that each tree split creates left and right children nodes which correspond to two (sub)regions defined by bisecting a given region along the $j_1$th axis at the observed $\bm{X}[j_0,j_{1}]$, i.e., at the $(j_0,j_{1})$-th entry in the design matrix $\bm{X}\in\mathbb{R}^{n\times d}$ created by stacking the vectors $X_{1},\cdots,X_{n}$ as rows.
We adopt a top-down greedy algorithm using criterion \eqref{eq:SSE} (see
Algorithm~\ref{alg:Vector-input-decision}) to create the partition. 
%
%
%
%
%
%
%
The model then assigns a prediction value to each region; 
this value is often the sample mean of the responses of the training data in the region, i.e., $m_{j}(X)=\hat{y}_{R_{j}}$
where $\hat{y}_{R_{j}}=\frac{1}{N_{j}}\sum_{i=1}^{n}y_{i}\cdot I(X_{i}\in R_{j})$ and $N_{j}=\sum_{i=1}^{n}I(X_{i}\in R_{j})$.


While there exists Bayesian regression tree methods \citep{chipman1998bayesian,denison1998bayesian,chipman2010bart,Chipman12}, this paper will focus
on non-Bayesian approaches of ensemble construction, but 
generalizing our model to Bayesian models is natural in most scenarios
\citep{wang2024bayesian}.

\subsection{Tensor Input Linear Regression}

Instead of adopting a constant baseline,
which ignores any further dependence between the outcome and predictors, 
our base model will use existing linear regression models tailored for tensor inputs. 
The more flexible baseline model will not only improve predictive performance, but it will also 
generally lead to more parsimonious trees and therefore improve computational
efficiency. 

Next we briefly review the relevant tensor-input linear
models considered later as base models.
When the regression function $f$ in \eqref{eq:vector-input regression} is assumed to
be   linear, %
we can fit a \emph{vector-input linear regression model }for
estimation and prediction of the output $y$. In matrix product notation,
this model is $\bm{y}=\bm{X}\bm{\beta}+\bm{\epsilon}$
with $\bm{X}\in\mathbb{R}^{n\times d}$ and $\bm{y}\in\mathbb{R}^{n\times1}$.
We can estimate the model by solving the loss function (i.e., $L_{2}$
residual) $\|\bm{y}-\bm{X}\bm{\beta}\|_{2}$ for the coefficient $\bm{\beta}\in\mathbb{R}^{d\times1}$.
This can be generalized to \emph{tensor-input (multi-)linear regression} 
with a tensor input and
a scalar response \citep{zhou2013tensor,LLM2023}. %
Without loss of generality, 
we consider data pairs consisting of a 3-way input tensor $\bm{X}_{i}\in\mathbb{R}^{1\times d_{1}\times d_{2}}$, where
the first dimension is an observation index as in vector-input models, and a scalar response $y_{i}\in\mathbb{R}^{1}$. 
This setup can directly handle tensor
inputs via the tensor product $\circ$ in the tensor regression model: 
\begin{align}
y_{i} =\bm{X}_{i}\circ\bm{B}+E_{i}, \quad i=1,\cdots,n,\quad
\label{eq:tensor_linear_model_bm}
\end{align}
where the matrix $\bm{y}\in\mathbb{R}^{n\times1}$ stacks the responses $y_1, \ldots, y_n$, 
the tensor $\bm{B}\in\mathbb{R}^{d_{1}\times d_{2}}$ collects the regression coefficients, and the tensor $\bm{E}\in\mathbb{R}^{n\times1}$ stacks the i.i.d.\ Gaussian noise $E_{i}\in\mathbb{R}^{1}$.
{The tensor products are defined as $\bm{X}\circ\bm{B}=\sum_i^n\sum_{j_1}^{d_1}\sum_{j_2}^{d_2}\bm{X}[i,j_1,j_2]\bm{B}[j_1,j_2]$ in} \citet{zhou2013tensor,guo2011tensor} {(Eq. (5)) but the assumed model} \eqref{eq:tensor_linear_model_bm} {is compatible with other tensor products as well.
}%
As in the vector-input case, the solution to \eqref{eq:tensor_linear_model_bm} corresponds to
the least-squares problem $\min_{\bm{B}^{(n)}}\|\bm{y}^{(n)}-\bm{X}^{(n)}\circ\bm{B}^{(n)}\|_{2}$. In this paper, the superscript $(n)$ will denote an object created by stacking $n$ quantities (corresponding to $n$ observations) along the first dimension. {For $\bm{X}^{(n)}$ we stack all $n$ samples; for $\bm{B}^{(n)}$ the shared coefficient $\bm{B}$ is simply repeated $n$ times.} If it is obvious that the object is stacked, we will omit the superscript.

As \citet{zhou2013tensor} and \citet{li2018tucker} noted, scalar-on-tensor linear regression can extend to $D$-mode tensor inputs $\bm{X}\in\mathbb{R}^{n\times d_{1}\times d_{2}\times\cdots\times d_{D}}$ with more complex notations. We focus on $D=3$ and $D=4$ (i.e., $\bm{X}\in\mathbb{R}^{n\times d_{1}\times d_{2}}$ and $\bm{X}\in\mathbb{R}^{n\times d_{1}\times d_{2}\times d_{3}}$), which are common in applications, with notation centered on $D=3$. The model \eqref{eq:tensor_linear_model_bm} has high-dimensional coefficients $\bm{B}$  complicating model fitting and optimization. Current state-of-the-art implementations like CatBoost \citep{prokhorenkova2018catboost} and XGBoost \citep{Chen16} support up to $D=2$ and overlook spatial correlation in $\bm{X}\circ\bm{B}$. We use the state-of-the-art $\mathtt{tensorly}$
\citep{JMLR:v20:18-277} implementation for scalar-on-tensor CP/Tucker regressions for leaf models 
in our tree regression method.

Tensor decomposition \citep{kolda2009tensor,johndrow2017tensor} is
a popular approach to reduce dimensionality and hence model
complexity by aiming to capture multi-way interactions in the
data using lower-dimensional representations. 
In particular, CP and Tucker decompositions
extend the idea of low-rank approximation (e.g., SVD) from matrices
to tensors, offering unique ways to explore and model the possible
sparse structure in tensors. \emph{CP decomposition} represents
a tensor $\bm{T}\in\mathbb{R}^{t_{1}\times t_{2}\times t_{3}}$
as a linear combination of $R$ rank-$1$ tensors $\mathbf{a}_{r}\times\mathbf{b}_{r}\times\mathbf{c}_{r}$:
\begin{equation}
\bm{T}=\sum_{r=1}^{R}\lambda_{r}\mathbf{a}_{r}\times\mathbf{b}_{r}\times\mathbf{c}_{r}\label{eq:CP}
\end{equation}
where $\times$ denotes the outer product (i.e., Kronecker product)
between vectors, and $\lambda_{r}$ is the scalar weight of the $r$-th
component for \emph{factor vectors} $\mathbf{a}_{r}\in\mathbb{R}^{t_{1}},\mathbf{b}_{r}\in\mathbb{R}^{t_{2}},\mathbf{c}_{r}\in\mathbb{R}^{t_{3}}$,
reducing the number of coefficients in $\bm{B}$ from $nd_{1}d_{2}$
to $R(1+t_{1}+t_{2}+t_{3})\leq R(1+n+d_{1}+d_{2})$. \emph{Tucker
decomposition} represents the tensor $\bm{T}$ 
using a (usually dense)
core tensor $\mathbf{G}\in\mathbb{R}^{R_{1}\times R_{2}\times R_{3}}$,
and factor matrices $\mathbf{A}_{1}\in\mathbb{R}^{R_{1}\times t_{1}},\mathbf{A}_{2}\in\mathbb{R}^{R_{2}\times t_{2}},\mathbf{A}_{3}\in\mathbb{R}^{R_{3}\times t_{3}}$
where $R_{1}\leq t_{1},R_{2}\leq t_{2},R_{3}\leq t_{3}$: 
\begin{equation}
\bm{T}=\mathbf{G}\times_{1}\mathbf{A}_{1}\times_{2}\mathbf{A}_{2}\times_{3}\mathbf{A}_{3}=\sum_{r_{1}=1}^{R_{1}}\sum_{r_{2}=1}^{R_{2}}\sum_{r_{3}=1}^{R_{3}}\lambda_{r_{1},r_{2},r_{3}}\mathbf{a}_{r_{1}}\times\mathbf{b}_{r_{2}}\times\mathbf{c}_{r_{3}}.\label{eq:Tucker}
\end{equation}
Here $\times_{q}$ denotes the \emph{product
along mode-$q$} ($q=1,2,\cdots,D$), which is the multiplication of a tensor by another tensor
along a specific mode by permuting the mode-$q$ in front and permuting
it back. That is, we pull one dimension to the front,
flatten it, transform it by a matrix, then put everything
back the way it was, except that the numbers along one dimension are
mixed and changed \citep{kolda2009tensor}. The core tensor $\mathbf{G}$
has rank $(R_{1},R_{2},R_{3})$ along each mode but is typically
much smaller in size than $\bm{T}$. We use the second equivalent
summation as in (3) of \citet{li2018tucker} for $\lambda_{r_{1},r_{2},r_{3}}\in\mathbb{R},\mathbf{a}_{r_{1}}\in\mathbb{R}^{t_{1}},\mathbf{b}_{r_{2}}\in\mathbb{R}^{t_{2}},\mathbf{c}_{r_{3}}\in\mathbb{R}^{t_{3}}$,
reducing the number of coefficients parameters from $nd_{1}d_{2}$
to $(1+t_{1}R_{1}+t_{2}R_{2}+t_{3}R_{3})\leq(1+nR+d_{1}R+d_{2}R)$
when we use the same rank $R$ along each mode.

For tensor regression \eqref{eq:tensor_linear_model_bm}, we adopt
a low-rank decomposition for the coefficient $\bm{B}$ 
using CP or Tucker decompositions and use the factor matrices through
the linear model \eqref{eq:tensor_linear_model_bm}, following the state-of-the-art practice in \citet{li2018tucker,zhou2013tensor}.

\paragraph{Organization.}
The rest of the paper is organized as follows: Section~\ref{sec:Trees-for-Tensor}
introduces the ingredients for fitting a single tree model with tensor
inputs, including a discussion of splitting criteria and complexity-based pruning. 
Section~\ref{sec:Ensemble-of-Trees}
reviews ensemble techniques for improving single trees before introducing entry-wise
and low-rank methods for handling tensor output using ensembles (Section~\ref{subsec:Multi-way-Output-Tensor}). Section~\ref{sec:Theoretical-Guarantees}
provides two groups of theoretical results including the consistency
of leaf models and the oracle bounds for predictions. Section
\ref{sec:Data-Experiments} investigates the effect of using novel
splitting criteria (Section~\ref{subsec:Effect-of-different}), compares
to other tensor models in terms of prediction and efficiency (Section~\ref{subsec:Comparison-with-Other}), and concludes with a tensor-on-tensor
application example (Section~\ref{subsec:Tensor-on-tensor:-Image-Recovery}).
Section~\ref{sec:Conclusion} provides discussions and future works.
{The Supplementary Material contains additional simulation experiments}, including for robustness to entrywise input noise, and for comparing to existing random-forest methods.

\section{\label{sec:Trees-for-Tensor}Trees for Tensor Inputs}
We now propose and implement fast algorithms for the scalar-on-tensor
regression problem %
\begin{align}\label{eq:tensor-input regression}
y_{i}=g^*(\bm{X}_{i})+E_{i},i=1,\ldots,n
\end{align}
for inputs $\bm{X}_{i}\in\mathbb{R}^{d_1 \times d_2}$ and responses $y_{i}\in\mathbb{R}$.
Here $g^*\colon \mathbb{R}^{d_1 \times d_2}\rightarrow\mathbb{R}$ is a real-valued function
we wish to estimate and the $E_{i}$s are independent mean zero noises.
Extending the tree model \eqref{eq:single_tree}
and creating ensembles for tensor input regression require ingredients
from low-rank tensor approximations (i.e., \eqref{eq:CP} and \eqref{eq:Tucker})
and associated low-rank tensor input regressions. 
\subsection{\label{subsec:Splitting-Criterion}Splitting Criterion}

For continuous vector inputs, the split rule $\bm{X}[:,j_{1}]>c$ in \eqref{eq:single_tree}
and Algorithm \ref{alg:Vector-input-decision} can be written as the rule $\bm{X}^\top\bm{e}_{j_1}>c$
using the design matrix $\bm{X}$ and unit vector
$\bm{e}_{j_1}$ along splitting coordinate $j_1$. 
For continuous \textit{tensor} inputs,
we can consider the split rule $\bm{X}\circ\bm{e}_{j_{1},j_{2}}>c$ and the following splitting criteria: 

\noindent \textbf{Variance criterion (SSE).} Under this criterion, which generalizes the well-accepted criterion \eqref{eq:SSE} in the tree regression literature
\citep{breiman1984classification} to scalar-on-tensor regressions \eqref{eq:tensor-input regression}, we choose a dimension pair $(j_{1},j_{2})$ and observed value $\bm{X}[j_{0},j_{1},j_{2}]$ so that the induced pair of children
\begin{align}
R_{1}\coloneqq\left\{ \bm{X}\mid\bm{X}[:,j_{1},j_{2}]\leq\bm{X}[j_{0},j_{1},j_{2}]\right\} \quad
R_{2}\coloneqq\left\{ \bm{X}\mid\bm{X}[:,j_{1},j_{2}]>\bm{X}[j_{0},j_{1},j_{2}]\right\}\label{eq:tensor_leftright}
\end{align}
minimizes the sum of variances (here $I_{j}\coloneqq\{i\mid\bm{X}[i,j_{1},j_{2}]\in R_{j}\}$ and $\bm{y}[I_j,:]=\{y_{i}\mid i\in I_j\}$) 
\begin{align}
\text{SSE}(j_{0},j_{1},j_{2}) = \sum_{j=1}^{2}\frac{1}{N_j}\sum_{\bm{X}_{i}\in R_{j}}(y_{i}-\hat{y}_{R_{j}})^{2} 
=\sum_{j=1}^{2}\frac{1}{N_j} \left\Vert \bm{y}[I_{j},:]-\frac{1}{|I_{j}|}\bm{1}_{|I_{j}|}^\top\bm{y}[I_{j},:]\cdot\bm{1}_{|I_{j}|}\right\Vert _{2}^{2}\label{eq:sse-tensor}
\end{align}
over all such possible pairs of children. 

\noindent \textbf{Low-rank approximation error (LAE).} 
This criterion aims to partition the input space by leveraging the potential low-rank structure in the tensor input $\bm{X}$. 
Rather than use the predictors, we
can split based on how closely the children tensor inputs match their low-rank CP or Tucker approximations.
Namely, we choose the child pair \eqref{eq:tensor_leftright}
whose tensor inputs $\bm{X}[I_{j},:,:]\in\mathbb{R}^{|I_{j}|\times d_{1}\times d_{2}}$ (for $j=1,2$) and corresponding tensor low-rank approximations $\widetilde{\bm{X}}[I_{j},:,:]$ minimize the error (again, here $I_{j}\coloneqq\{i\mid\bm{X}[i,j_{1},j_{2}]\in R_{j}\}$)
\begin{align}
\text{LAE}(j_{0},j_{1},j_{2})=\sum_{j=1}^{2}\sum_{i\in I_{j}}\left\Vert \widetilde{\bm{X}}[i,:,:]-\bm{X}[i,:,:]\right\Vert _{F}^{2}\label{eq:low-rank LAE}
\end{align}
among all possible such child pairs.
This criterion does not use the predictors $\bm{y}$ and is more expensive to compute than
\eqref{eq:sse-tensor} as described by Lemma \ref{lem:(CP-ALS-per-iteration}
and \ref{lem:(Tucker-ALS-per-iteration} below.

\noindent \textbf{Low-rank regression error (LRE).} Alternatively, we can find
a split that minimizes the low-rank \textit{regression} error in each child: 
\begin{align}
\text{LRE}(j_{0},j_{1},j_{2})=\sum_{j=1}^{2}\sum_{i\in I_{j}}\left\Vert \widetilde{\bm{y}}\left(\bm{X}[i,:,:]\right)-y_{i}\right\Vert _{F}^{2},\label{eq:low-rank-reg LRE}
\end{align}
where $\widetilde{\bm{y}}\left(\bm{X}[I_{j},:,:]\right)$ is the
chosen CP \eqref{eq:CP} or Tucker \eqref{eq:Tucker} low-rank regression
model (of a given target rank) of $\bm{X}[I_{j},:,:]$ against $\bm{y}[I_{j}]$.
Intuitively, this minimizes the error of predicting $\bm{y}$ in each child, which is analogous to CART minimizing
the sum of variances of $\bm{y}$ in each node. Because we fit two models for two children nodes, \eqref{eq:low-rank-reg LRE}
is slightly more expensive than \eqref{eq:low-rank LAE}. 
Any given split with $D=3$ requires a split-coordinate pair $(j_{1},j_{2})$
and index $j_{0}$ for the split value $c=\bm{X}[j_{0},j_{1},j_{2}]$.
The $j_{0},j_{1},j_{2}$ triplet is usually obtained by solving the following
mixed integer problem of dimensionality $nd_{1}d_{2}$: 
\begin{equation}
\min_{(j_{0},j_{1},j_{2})\in\{1,\cdots,n\}\times\{1,\cdots,d_{1}\}\times\{1,\cdots,d_{2}\}}\mathcal{L}(j_{0},j_{1},j_{2})\label{eq:generic_opt}
\end{equation}
where the criterion function $\mathcal{L}$ is chosen as one of
\eqref{eq:sse-tensor}, \eqref{eq:low-rank LAE} or \eqref{eq:low-rank-reg LRE}.
This is a major computational overhead for generating a tree structure
via (binary) splitting.

The splitting criteria SSE and LRE involve
both $\bm{X}$ and $\bm{y}$, whereas LAE is specifically designed for
low-rank tensors $\bm{X}$ (or mat-vec that essentially has low-rank
structures). Because LAE does not involve the response $\bm{y}$,
the learned tree based on LAE will well approximate
the mean response only when the underlying latent tree structure
in $\bm{X}$ corresponds to that in $\bm{y}$. %
On the other hand, SSE and LRE more directly target the prediction
task and more easily produce a good fit to the data, but also make
the algorithm more prone to overfitting, though this issue can be
easily addressed with proper regularization techniques such as pruning and out-of-sample
validation.

\begin{figure}[t]
\centering

\includegraphics[height=2.5cm]{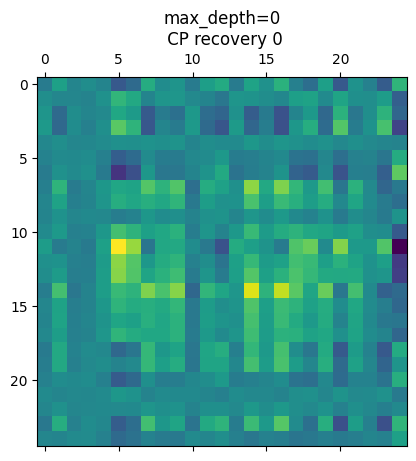}\\
 \includegraphics[height=2.5cm]{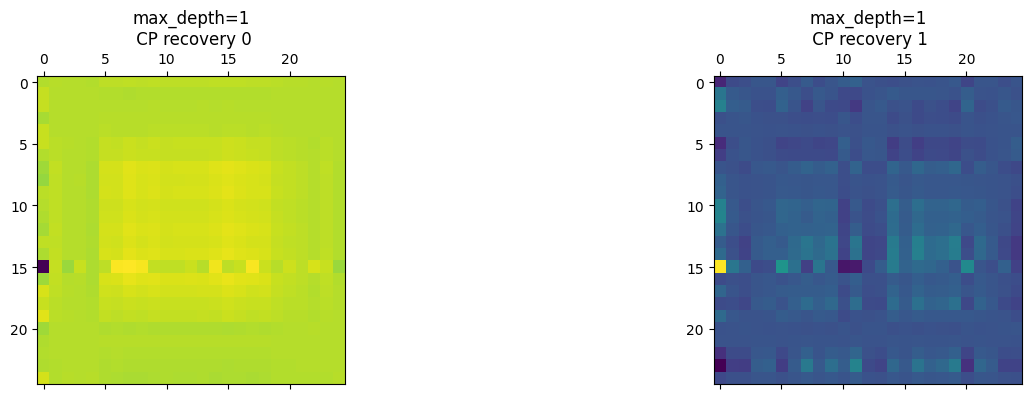}\\
 \includegraphics[height=2.5cm]{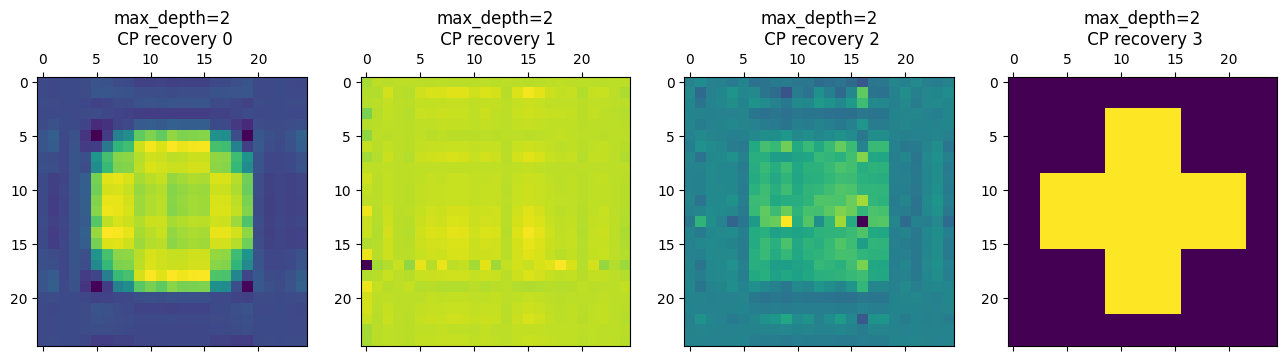}\\

\includegraphics[height=2.5cm]{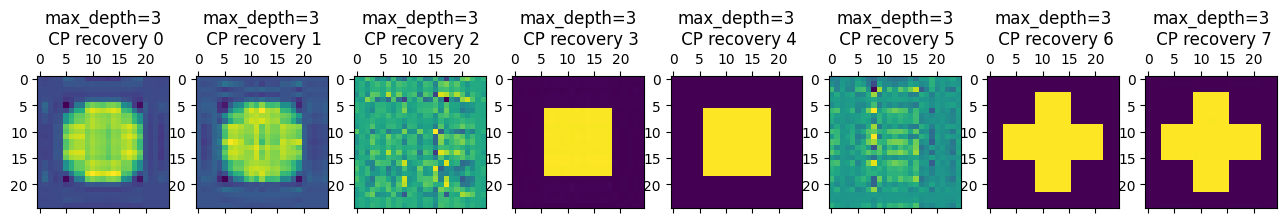}

\caption{\label{fig:Comparison-of-different-depths}Tensor-input decision
tree regression estimated coefficients $\hat{\bm{B}}$ on a mixture-of-classes
model ($\tau=0.1$) with rank-2 CP regression at leaf models from the circle-rectangle-cross example in \citet{zhou2013tensor}.
Each row corresponds to a max\_depth$\in\{0,1,2,3\}$.
}
\end{figure}

The key idea behind the LRE loss is to incorporate low-rank tensor
regression as the baseline models on the leaves of a partition tree.
Trees can capture non-linear local structures
in the underlying mean functions but are ineffective at approximating
global smooth structures such as linear patterns on some latent dimensions
which low-rank tensor regression targets. As such, we construct
trees using novel splitting criteria and complexity measures,
then fit tensor regressions at each leaf node.
{
To show the effectiveness of our model, we first generate three classes of coefficient tensors
$\bm{B}_{1},\bm{B}_{2},\bm{B}_{3}$, and then simulate i.i.d. Gaussian
tensors $\bm{X}_{i}$ to yield the response $\bm{y}_{i}=\bm{X}_{i}\circ\bm{B}_{i}+\delta_{i}$
for each class, where the elements of $\delta_{i}$ are independent 
shifted noises $N(i,\sigma^{2})$
to ensure identifiability between these three classes.
}
Figure \ref{fig:Comparison-of-different-depths} shows a scenario
of mixed class data \citep{zhou2013tensor} in which existing tensor
regressions (i.e., the top row) fail to recover each class but a \textit{tree}-based regression model is successful with appropriate splits. 
In addition, this tree-based model can realize
typical low-rank CP and Tucker regression models as special cases
when the max depth of the tree is set to be zero (i.e., when the tree has only one node) and the only model is a single low-rank regression model. 
The effects of the splitting and pruning  criteria will be further investigated in Section~\ref{subsec:Effect-of-different} and Appendix~\ref{subsec:Pruning-exp}.

{As a final note, the variance and LRE splitting criteria can be used to tackle a regression problem with a tensor response and vector predictor. On the other hand, a CP or Tucker decomposition of a vector predictor is just the predictor itself, so the LAE error would be zero for any pair of potential children and hence cannot be used with a vector predictor.}

\begin{algorithm}

\SetAlgoLined \KwResult{Train a tensor-input single decision tree
regressor with exhaustive search}

\textbf{Function} fit($\bm{X}\in\mathbb{R}^{n\times d_{1}\times d_{2}}$,
$\bm{y}\in\mathbb{R}^{n}$):

\Begin{ Initialize root node.\ %
\For{node
in tree}{ Calculate for all potential splits along each dimension pair $(j_{1},j_{2})$:
Find $\arg\min_{j_{0}\in\{1,\cdots,n\},j_{1}\in\{1,\cdots,d_{1}\},j_{2}\in\{1,\cdots,d_{2}\}}LAE(j_{0},j_{1},j_{2})$
or $LRE(j_{0},j_{1},j_{2})$  \; 
With the minimizer $(j_{0}^{*},j_{1}^{*},j_{2}^{*})$, split the dataset on
the chosen dimension pair $(j_{1}^{*},j_{2}^{*})$ and split value $\bm{X}[j_{0}^{*},j_{1}^{*},j_{2}^{*}]$, creating left and right
child nodes using data in $R_{1},R_{2}$ \; 
} 
Fit the chosen mean/CP/Tucker models at each of the leaf nodes.
}


\caption{\label{alg:Tensor-input-decision-exhaustive}\textbf{Exhaustive search method for fitting tensor-input decision tree regressor with low-rank
splitting criteria \eqref{eq:low-rank LAE} and \eqref{eq:low-rank-reg LRE}.}}
\end{algorithm}

\subsection{\label{subsec:Reducing-Computational-Cost}Reducing Computational
Cost }

There are two major computational overheads in fitting the tree tensor
regression model we just proposed: leaf model fitting and splitting criteria computation. 
The cost of the former is
alleviated by the tree structure providing a partitioning
regime that allows divide-and-conquer scaling. 
The latter, however, is a new computational challenge born from
the nature of tensor data as we explain below.

It is computationally very expensive to exhaustively search (see Algorithm \ref{alg:Tensor-input-decision-exhaustive}) for the best split node in tensor-input decision trees. This process evaluates all possible splits along each dimension pair $(j_{1}, j_{2})$, which requires examining every combination of $j_0$, $j_1$, and $j_2$.
Consequently, the search space size for each split decision is $n \cdot d_{1} \cdot d_{2}$, which scales poorly with increasing dataset size and tensor dimensions, making exhaustive search computationally intensive and impractical for large-scale data or real-time applications.%

In addition, the computational complexity of \eqref{eq:low-rank LAE}
and \eqref{eq:low-rank-reg LRE} is rather high since both CP and
Tucker decompositions rely on expensive alternating algorithms, which
often have convergence issues. Therefore, the greedy search approach
over a space of size $d_{1}d_{2}$ to solve $\min_{(j_{0},j_{1},j_{2})}\mathcal{L}(j_{0},j_{1},j_{2})$
is no longer an efficient algorithmic design, given the frequency
of splitting in fitting a regression tree structure.

\paragraph{Splitting criteria complexity analysis.}

Computing the variance in \eqref{eq:sse-tensor} for scalar responses
has complexity $\mathcal{O}\left(n\right)$, which can
be vectorized and does not scale with dimensions. On the other hand, the complexities
of \eqref{eq:low-rank LAE} and \eqref{eq:low-rank-reg LRE} are dominated
by the CP and Tucker decomposition of the coefficient tensors.
Although many scalable methods are proposed for CP and Tucker
decompositions, the well-adopted alternating least square (ALS) \citep{malik2018low} solvers
are of iterative nature and hence can have relatively high complexities: 
\begin{lem}
\label{lem:(CP-ALS-per-iteration}(CP-ALS per iteration complexity
\citet{minster2023cp} Section 3.2) In the alternating least square
algorithm for the rank-$R$ CP decomposition of a 
$n\times d_{1}\times d_{2}\times\cdots\times d_{K}$ tensor
where $R<n$ and $R\ll d_{i},i=1,\cdots,K$, each iteration has time complexity
$\mathcal{O}((K+1)\cdot\prod_{i=1}^{K}d_{i}\cdot R)$.
Therefore, for $N^{*}$ iterations in CP-ALS, the overall complexity
is bounded from above by $\mathcal{O}\left(N^{*}\cdot(K+1)R\cdot(\max_{i}d_{i}\right)^{K})$. 
\end{lem}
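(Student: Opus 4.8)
The plan is to verify the per-iteration cost by walking through one full sweep of CP-ALS and showing that each of the $K+1$ factor-matrix updates costs $\mathcal{O}\!\left(\prod_{i} d_i \cdot R\right)$, so that a single sweep costs $\mathcal{O}\!\left((K+1)\prod_i d_i\, R\right)$ and $N^*$ sweeps cost $N^*$ times that. I would index the modes $0,1,\dots,K$ with sizes $d_0 = n, d_1,\dots,d_K$, and write $P = \prod_{i=0}^{K} d_i$ for the total number of tensor entries. The key identity is that updating the mode-$q$ factor $\mathbf{A}_q \in \mathbb{R}^{d_q \times R}$ amounts to the normal-equations solve
\begin{equation}
\mathbf{A}_q \leftarrow \bm{T}_{(q)}\, \mathbf{K}_q \left( \mathbf{H}_q \right)^{+},
\end{equation}
where $\bm{T}_{(q)} \in \mathbb{R}^{d_q \times (P/d_q)}$ is the mode-$q$ unfolding, $\mathbf{K}_q$ is the Khatri--Rao product of all factors except the $q$th, and $\mathbf{H}_q = \circledast_{i \neq q} (\mathbf{A}_i^\top \mathbf{A}_i) \in \mathbb{R}^{R \times R}$ is the Hadamard product of the Gram matrices.

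First I would account for the dominant term, the matricized-tensor-times-Khatri--Rao-product (MTTKRP) $\bm{T}_{(q)} \mathbf{K}_q$: naively this is a $d_q \times (P/d_q)$ times $(P/d_q) \times R$ matrix product costing $\mathcal{O}(P R)$ flops, and there is one such product per mode. Then I would note the cheaper pieces: forming each $\mathbf{A}_i^\top \mathbf{A}_i$ costs $\mathcal{O}(d_i R^2)$, the Hadamard product of $K$ such $R\times R$ matrices costs $\mathcal{O}(K R^2)$, and the pseudo-inverse of the $R \times R$ matrix $\mathbf{H}_q$ costs $\mathcal{O}(R^3)$; multiplying the $d_q \times R$ MTTKRP result by this $R\times R$ matrix costs $\mathcal{O}(d_q R^2)$. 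Under the stated regime $R < n$ and $R \ll d_i$, we have $R^2 \le R\, d_i \le R\,(P/d_q) $ and $R^3 \le R \cdot d_i d_j \le RP$, so every one of these auxiliary costs is dominated by the $\mathcal{O}(PR)$ MTTKRP. Summing over the $K+1$ modes in a sweep gives $\mathcal{O}((K+1) P R) = \mathcal{O}\!\left((K+1)\prod_{i=0}^K d_i\, R\right)$ per iteration, which, after absorbing the observation-mode factor $n$ into the product $\prod_i d_i$ as written in the statement, is exactly the claimed bound.

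For the overall bound I would simply multiply by $N^*$ and then coarsen $\prod_{i=1}^{K} d_i \le (\max_i d_i)^K$ to get $\mathcal{O}\!\left(N^* (K+1) R\, (\max_i d_i)^K\right)$; here I am treating $n = d_0$ as comparable to or dominated by the $d_i$ so it is folded into the max, consistent with how the statement is phrased. The main obstacle — really the only subtle point — is bookkeeping the MTTKRP cost correctly: a reader might worry that forming $\mathbf{K}_q$ explicitly costs $\mathcal{O}((P/d_q) R)$ extra, but this is already dominated by $\mathcal{O}(PR)$, and more importantly one should note that the textbook $\mathcal{O}(PR)$ figure is the standard (non-optimized) count, which is all that is needed here; the fused/dimension-tree implementations that shave this to $\mathcal{O}(PR)$ amortized across modes only improve constants and do not affect the stated big-$\mathcal{O}$. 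I would therefore present the argument at the level of counting flops per mode-update and invoking $R \ll d_i$ to discard lower-order terms, citing \citet{minster2023cp} Section 3.2 for the detailed MTTKRP accounting rather than reproducing it.
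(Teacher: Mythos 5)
The paper never proves this lemma---it is quoted verbatim from \citet{minster2023cp}, Section~3.2, and the appendix contains no argument for it---so your flop count is a reconstruction rather than a parallel of a paper proof; as an accounting of a standard CP-ALS sweep (MTTKRP dominates, Gram/Hadamard/pseudo-inverse terms are lower order under $R\ll d_i$) it is the right argument. The genuine gap is the final reconciliation step. Your count gives $\mathcal{O}\bigl((K+1)\,R\,P\bigr)$ per sweep with $P=n\prod_{i=1}^{K}d_i$, i.e.\ it necessarily carries the observation-mode factor $n$, whereas the lemma's stated per-iteration bound $\mathcal{O}\bigl((K+1)\,R\,\prod_{i=1}^{K}d_i\bigr)$ does not: the product in the statement runs only over $i=1,\dots,K$, and the overall bound has $(\max_i d_i)^{K}$, not $(\max_i d_i)^{K+1}$. ``Absorbing'' $n$ into that product, or folding $n$ into the max as you suggest, changes the bound by a factor of $n$ (equivalently raises the exponent to $K+1$), so what your argument actually establishes is $\mathcal{O}\bigl(N^{*}(K+1)R\,n\,(\max_i d_i)^{K}\bigr)$, not the displayed claim. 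The hypotheses $R<n$, $R\ll d_i$ cannot close this gap: every iteration of dense CP-ALS touches all $n\prod_{i=1}^K d_i$ tensor entries in each MTTKRP, so when $(K+1)R<n$ a per-iteration cost of order $R\prod_{i=1}^{K}d_i$ is below the cost of even reading the tensor and is unattainable by the algorithm you are analyzing.

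To make the write-up sound, either state the per-iteration cost with the first-mode factor, $\mathcal{O}\bigl((K+1)R\,n\prod_{i=1}^{K}d_i\bigr)$---which is what the flop count and \citet{minster2023cp} actually give---and note explicitly that the lemma as printed suppresses $n$, or identify the convention under which that suppression is intended. The distinction is not cosmetic: the paper's use of this lemma in the proof of Proposition~\ref{prop:(Computational-complexity-for} substitutes $R=n2^{-i}$, $K=2$ into the $n$-free form to get $\mathcal{O}(d_1d_2\,n2^{-i})$ per node, so an honest version of your derivation would propagate an extra factor of the node sample size there as well; your proof as written hides exactly the step where that factor disappears.
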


\begin{lem}
\label{lem:(Tucker-ALS-per-iteration}(Tucker-ALS per iteration complexity
\citet{oh2018scalable} Section II.C) In the alternating least square
algorithm for the rank-$(R,d_{1}^{'},d_{2}^{'},\cdots,d_{K}^{'})$ (i.e.,
core tensor dimension) Tucker decomposition of a $n\times d_{1}\times d_{2}\times\cdots\times d_{K}$ tensor,
each iteration has complexity 
\[
\mathcal{O}\left(\min\left\{n\prod_{j=1}^{K}d_{j}^{'2},n^{2}\prod_{j=1}^{K}d_{j}^{'}\right\}+\sum_{i=1}^{K}\min\left\{R^{2}\cdot d_{i}\prod_{j\neq i}d_{j}^{'2},R\cdot d_{i}^{2}\prod_{j\neq i}d_{j}^{'}\right\}\right).
\]
When $R<n$ and $d_{i}^{'}\asymp d_{i}\ll n,i=1,\cdots,K$, for $N^{*}$
iterations in Tucker-ALS, the overall complexity is at most $\mathcal{O}\left(N^{*}\cdot n\cdot(\max_{i}d_{i}\right)^{2K})$. 
\end{lem}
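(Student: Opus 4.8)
The plan is to establish the per-iteration complexity of Tucker-ALS by analyzing the two dominant sub-steps of each iteration: (i) the computation of the mode-$i$ products that project the data tensor onto the current factor matrices (forming the ``core tensor'' intermediates), and (ii) the truncated SVD (or eigendecomposition of a Gram matrix) used to update each factor matrix. I would set up the standard higher-order orthogonal iteration (HOOI) description of Tucker-ALS, where in each sweep we loop over modes $i = 1, \ldots, K$ (plus the observation mode of size $n$), and for each mode we (a) contract the $n \times d_1 \times \cdots \times d_K$ tensor against all factor matrices except the $i$th, yielding a tensor of size roughly $R \times d_1' \times \cdots \times d_i \times \cdots \times d_K'$, and (b) unfold along mode $i$ and compute the leading singular subspace of dimension $d_i'$.

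First I would carefully account for step (a): contracting the full tensor against one factor matrix $\mathbf{A}_j$ (of size $d_j' \times d_j$) costs a matrix–matrix multiply whose cost is the product of the tensor's current size times $d_j'$ (or $d_j$, depending on contraction direction). Doing this for all $K$ modes gives a telescoping product; whether one contracts the ``big'' dimensions $d_j$ or the ``small'' dimensions $d_j'$ first determines which of the two terms $n\prod_j d_j'^2$ versus $n^2\prod_j d_j'$ dominates (hence the $\min$ over the observation mode), and similarly within each of the $K$ summands the $\min\{R^2 d_i \prod_{j\neq i} d_j'^2,\ R d_i^2 \prod_{j\neq i} d_j'\}$ arises from whether the $R$-sized mode or the $d_i$-sized mode is contracted first. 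I would cite \citet{oh2018scalable} Section II.C for the precise bookkeeping rather than re-derive every matricization. Step (b), the SVD of a $d_i \times (\text{rest})$ unfolding truncated to rank $d_i'$, is subdominant (or absorbed) under the stated regime and I would note this.

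Next I would specialize to the asymptotic regime $R < n$ and $d_i' \asymp d_i \ll n$ for all $i$. Under $d_i' \asymp d_i$, the two branches of each $\min$ become comparable up to constants, so I can replace $d_j'$ by $d_j$ throughout; writing $d \coloneqq \max_i d_i$, the first term is $\mathcal{O}(\min\{n d^{2K}, n^2 d^K\})$ and since $d \ll n$ we have $n^2 d^K = n \cdot (n d^K) \geq n \cdot d^{K} d^{K} = n d^{2K}$ only when $n \geq d^K$ — so I would instead observe that $d \ll n$ does not by itself order the two, but combined with $R < n$ the dominant contribution across \emph{all} $K+1$ terms is bounded by $\mathcal{O}(n \cdot d^{2K})$: the observation-mode term contributes at most $n d^{2K}$ (taking the first branch), and each of the $K$ mode-$i$ terms contributes at most $R \cdot d_i^2 \prod_{j\neq i} d_j'^2 \leq n \cdot d^{2K}$ using $R < n$ (second-branch-style bound with $d_j' \asymp d_j$), so the sum of $K+1$ such terms is $\mathcal{O}(K \cdot n \cdot d^{2K}) = \mathcal{O}(n \cdot d^{2K})$ with $K$ treated as a constant. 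Multiplying by $N^*$ iterations gives the claimed $\mathcal{O}(N^* \cdot n \cdot (\max_i d_i)^{2K})$.

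The main obstacle I anticipate is the case analysis inside the two $\min$ operators: making sure that, regardless of which branch is active for the given ordering of dimensions, the final crude bound $\mathcal{O}(n \cdot d^{2K})$ still dominates — in particular checking that the term $n \prod_j d_j'^2$ is not secretly larger than $n \cdot d^{2K}$ (it is not, since $d_j' \asymp d_j \leq d$) and that replacing $R$ by $n$ via $R < n$ in each mode-$i$ summand is legitimate for the branch we choose. A secondary subtlety is confirming that the cost of the truncated SVD / Gram-eigendecomposition step is genuinely dominated by the contraction cost under $d_i' \asymp d_i \ll n$; I would dispatch this by noting the relevant unfolded matrix has at most $n \cdot d^{K-1}$ columns and $d$ rows, so its economy SVD costs $\mathcal{O}(n d^{K+1})$, which is $\leq \mathcal{O}(n d^{2K})$ for $K \geq 1$. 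Everything else is routine arithmetic on exponents, so I would keep the write-up short and lean on the cited references for the exact per-mode flop counts.
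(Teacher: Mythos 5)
Your proposal is correct and follows essentially the same route as the paper: the per-iteration count is taken from \citet{oh2018scalable} (attributed to the contraction/SVD steps), and the simplification to $\mathcal{O}(N^{*}\cdot n\cdot(\max_{i}d_{i})^{2K})$ is the same routine bounding of each $\min$ under $d_{i}'\asymp d_{i}$, $R<n$, with $K$ and $N^{*}$ treated as constants. Your per-mode bound $R\,d_{i}^{2}\prod_{j\neq i}d_{j}'^{2}$ is slightly cruder than the paper's choice of the second branch $R\,d_{i}^{2}\prod_{j\neq i}d_{j}'$, but it still dominates that branch and stays below $n(\max_i d_i)^{2K}$, so the conclusion is unaffected.
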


To simplify our subsequent analysis, we assume $n\asymp R$
and always choose a constant maximal iteration number $N^{*}$
regardless of the convergence. Then both decompositions have time
complexities at most $\mathcal{O}(N^{*}\cdot n\cdot\left(\max_{i}d_{i}\right)^{2K})$
which scales linearly with $n$. However, the 
factor $N^{*}\cdot\left(\max_{i}d_{i}\right)^{2K}$ greatly increases complexity in greedy-style tree-fitting.

\paragraph{Adoption of mean splitting values.}

Minimizing these three 
criteria
can be a strategy to choose the split-value index $j_{0}$
and splitting coordinates $(j_{1},j_{2})$, and can easily generalize to regression trees for tensor inputs. Finding the best combination of $(j_{0},j_{1},j_{2})$ using \eqref{eq:SSE},
\eqref{eq:low-rank LAE}, \eqref{eq:low-rank-reg LRE}, in a worst-case
scenario, needs $n\cdot\prod_{j=1}^{K}d_{j}$-many variance evaluations
and contributes to a quadratic complexity in $n$. 
But if we do not optimize over $j_{0}$ and instead split on the sample mean $m_{j_1,j_2} \coloneqq \frac{1}{n}\sum_{i=1}^{n}\bm{X}[i,j_{1},j_{2}]$, 
we can simplify the partitions
in \eqref{eq:SSE}, \eqref{eq:low-rank LAE}, \eqref{eq:low-rank-reg LRE}
as: 
\begin{align}
\bar{R}_{1}\coloneqq\left\{ \bm{X}\mid\bm{X}[:,j_{1},j_{2}]\leq m_{j_1,j_2}\right\}, \quad
\bar{R}_{2}\coloneqq\left\{ \bm{X}\mid\bm{X}[:,j_{1},j_{2}]> m_{j_1,j_2}\right\} \label{eq:tensor_leftright-1}. 
\end{align}
Since we no longer need to determine $j_{0}$, we can complete
the search within linear time in $n$, without losing too much empirical
prediction performance. %
Then the corresponding loss functions in our reduced optimization problems become (here $I_{j} \coloneqq \{i\mid\bm{X}[i,j_{1},j_{2}]\in\bar{R}_{j}\}$):
%
\begin{align}
\overline{\text{SSE}}(j_{1},j_{2})=\sum_{j=1}^{2}\frac{1}{|I_j|}\sum_{i\in I_{j}}\left\Vert \hat{y}_{\bar{R}_j} - y_{i} \right\Vert _{2}^{2} ,\label{eq:mean-SSE}\\
\overline{\text{LAE}}(j_{1},j_{2})=\sum_{j=1}^{2}\sum_{i\in I_{j}}\left\Vert \widetilde{\bm{X}}[i,:,:]-\bm{X}^{(n)}[i,:,:]\right\Vert _{F}^{2},\label{eq:mean-LAE}\\
\overline{\text{LRE}}(j_{1},j_{2})=\sum_{j=1}^{2}\sum_{i\in I_{j}}\left\Vert \widetilde{\bm{y}}\left(\bm{X}^{(n)}[i,:,:]\right)-y_{i}\right\Vert _{F}^{2}.\label{eq:mean-LRE}
\end{align}
Removing the search along the first mode shrinks the dimensionality of the optimization
problem $\min_{(j_{0},j_{1},j_{2})}\mathcal{L}(j_{0},j_{1},j_{2})$
to reduce the complexity per evaluation of the split criterion.

Figure~\ref{fig:Comparison-of-different-SMs} shows
that the loss functions LAE and LRE lead to similar complexities and predictive behavior.
Although both losses are based on low-rank
tensor decomposition, LRE (like SSE) focuses on the children's predictive
loss. In what follows, we mainly study TT models with LRE
in \eqref{eq:low-rank-reg LRE} or its mean version \eqref{eq:mean-LRE}
as our default loss functions, unless otherwise stated.
Appendix~\ref{sec:LS_BB_methods} details two advanced searching techniques, namely leverage score sampling (LS) and branch-and-bound (BB) methods for even more efficient searches. 

In the spirit of Lemma 11 of \citet{hrluo_2022e} and the above Lemmas \ref{lem:(CP-ALS-per-iteration} and \ref{lem:(Tucker-ALS-per-iteration}, we can derive the per-iteration complexity (of ALS used for computing CP and Tucker decompositions) by observing that there are $\mathcal{O}(n \log k )$ evaluations of the loss function in a tree with $k$ nodes ($\log k$ layers) and $n\geq 1$ sample points:
\begin{prop}
    
\label{prop:(Computational-complexity-for}(Computational complexity
for TT) If there are $n\geq 1$ samples with at most $k$ nodes  for the splitted  
tree structure and no more than $N^*<\infty$ iterations for all decompositions, the worst-case computational complexity 
for generating this tree structure (via exhaustive search for problem \eqref{eq:generic_opt}) is 

(1) $\mathcal{O}(nd_1d_2 \log k )$ if the loss function is \eqref{eq:sse-tensor}. 

(2) $\mathcal{O}(n^2d_1^2 d_2^2 \log k )$ if the loss function is \eqref{eq:low-rank LAE} or \eqref{eq:mean-LAE} with CP low-rank approximations. 

(3) $\mathcal{O}(n^2d_1^2 d_2^2 \cdot  \left[ \min( d_1 d_2, n) + \min(d_1, d_2) \right] \log k  )$ if the loss function is \eqref{eq:low-rank-reg LRE} or \eqref{eq:mean-LRE} with Tucker low-rank approximations. 
\end{prop}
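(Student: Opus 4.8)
The plan is to bound the running time of the exhaustive-search builder in Algorithm~\ref{alg:Tensor-input-decision-exhaustive} by a layerwise accounting argument that reduces the whole tree-building cost to a universal factor $\mathcal{O}(n d_1 d_2 \log k)$ times the cost $C_{\mathcal{L}}$ of evaluating the chosen criterion $\mathcal{L}$ on one candidate split, and then to read off $C_{\mathcal{L}}$ for each of \eqref{eq:sse-tensor}, \eqref{eq:low-rank LAE}, and \eqref{eq:low-rank-reg LRE} from Lemmas~\ref{lem:(CP-ALS-per-iteration}--\ref{lem:(Tucker-ALS-per-iteration}. First I would set up the accounting. A (balanced) binary tree with at most $k$ nodes has $L=\mathcal{O}(\log k)$ layers, and at each layer the $n$ training indices are partitioned among that layer's nodes, so writing $n_v$ for the number of samples routed to node $v$ we have $\sum_{v\text{ in layer }\ell} n_v \le n$ and hence $\sum_{\text{all }v} n_v \le nL = \mathcal{O}(n\log k)$. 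At node $v$ the optimization \eqref{eq:generic_opt} ranges over the $n_v d_1 d_2$ triples $(j_0,j_1,j_2)$, and each one either triggers a fresh criterion evaluation (for LAE/LRE, a low-rank fit of the two children) or, for \eqref{eq:sse-tensor}, an $\mathcal{O}(1)$ amortized update of running sums of $y_i$ and $y_i^2$ along the sorted sweep in $j_0$, the sorting being done once at the root and inherited down the recursion. Because every evaluation at $v$ acts on data whose sample mode has size at most $n_v\le n$, and the ALS costs in Lemmas~\ref{lem:(CP-ALS-per-iteration}--\ref{lem:(Tucker-ALS-per-iteration} are monotone in the tensor dimensions, each evaluation costs at most $C_{\mathcal{L}}=C_{\mathcal{L}}(n,d_1,d_2)$; summing over a layer, $\sum_{v\text{ in layer }\ell} n_v d_1 d_2\, C_{\mathcal{L}} \le n d_1 d_2\, C_{\mathcal{L}}$, and over all $L$ layers the total is $\mathcal{O}(n d_1 d_2 \log k)\cdot C_{\mathcal{L}}$.

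Then I would read off $C_{\mathcal{L}}$ in the three cases. For \eqref{eq:sse-tensor} the running-sum sweep gives $C_{\mathcal{L}}=\mathcal{O}(1)$, hence total $\mathcal{O}(n d_1 d_2 \log k)$: case~(1). For \eqref{eq:low-rank LAE}/\eqref{eq:mean-LAE} with CP, one evaluation is a rank-$R$ CP-ALS fit of two child tensors of shape at most $n_v\times d_1\times d_2$; by Lemma~\ref{lem:(CP-ALS-per-iteration} with $K=2$ and the standing simplifications ($N^{*}=\mathcal{O}(1)$, $n\asymp R$) this costs $\mathcal{O}(N^{*}R\,d_1 d_2)=\mathcal{O}(n d_1 d_2)$, so the total is $\mathcal{O}(n d_1 d_2 \log k)\cdot\mathcal{O}(n d_1 d_2)=\mathcal{O}(n^2 d_1^2 d_2^2 \log k)$: case~(2). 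For \eqref{eq:low-rank-reg LRE}/\eqref{eq:mean-LRE} with Tucker, one evaluation is a Tucker low-rank regression fit of two children, whose ALS cost is controlled by Lemma~\ref{lem:(Tucker-ALS-per-iteration}; substituting $K=2$, $d_i'\asymp d_i$, child sample mode $\le n$, $N^{*}=\mathcal{O}(1)$, $n\asymp R$ and comparable mode sizes $d_1\asymp d_2$, the lemma's leading term becomes $\min\{n d_1^2 d_2^2,\, n^2 d_1 d_2\}=n d_1 d_2 \min\{d_1 d_2, n\}$ and the two mode-update terms collapse to $\mathcal{O}(n d_1 d_2 \min\{d_1,d_2\})$, giving $C_{\mathcal{L}}=\mathcal{O}\!\big(n d_1 d_2[\min(d_1 d_2, n)+\min(d_1,d_2)]\big)$ and total $\mathcal{O}\!\big(n^2 d_1^2 d_2^2[\min(d_1 d_2,n)+\min(d_1,d_2)]\log k\big)$: case~(3).

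The layerwise accounting itself is routine. The main obstacle will be case~(3): simplifying the nested minima of Lemma~\ref{lem:(Tucker-ALS-per-iteration} under the paper's asymptotic regime so that they collapse to exactly $\min(d_1 d_2, n)+\min(d_1,d_2)$ — I would have to track which mode plays the role of the lemma's ``first mode $n$'' (the sample mode, bounded by the current node's sample count) as opposed to the regression rank $R\asymp n$, and invoke $d_1\asymp d_2$ (or an equivalent regularity condition) where the raw bound is asymmetric in $d_1$ and $d_2$. A secondary point I would need to make precise is the ``$\mathcal{O}(1)$ amortized per split'' claim for \eqref{eq:sse-tensor} — presort once, carry partial sums $\sum y_i,\sum y_i^2$ down the recursion, re-use sorted orders across children — which is standard but is exactly what separates case~(1)'s $\mathcal{O}(n d_1 d_2 \log k)$ from the naive $\mathcal{O}(n^2 d_1 d_2 \log k)$ one would get from recomputing each variance from scratch.
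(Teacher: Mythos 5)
Your proof is correct and takes essentially the same route as the paper's: a layerwise accounting of the $n_v d_1 d_2$ candidate splits per node across the $\mathcal{O}(\log k)$ levels, multiplied by the per-candidate ALS evaluation cost read off from Lemmas~\ref{lem:(CP-ALS-per-iteration} and \ref{lem:(Tucker-ALS-per-iteration} with $K=2$, rank tied to the node's sample size, and constant $N^*$. Your only deviations are minor elaborations: you spell out the amortized $\mathcal{O}(1)$ SSE evaluation (the paper simply asserts $\mathcal{O}(n_t d_1 d_2)$ per node) and you invoke $d_1\asymp d_2$ to collapse the two Tucker mode-update terms to $\min(d_1,d_2)$, whereas the paper replaces their sum by $\min(nd_1,nd_2,d_1,d_2)$ and uses $n\geq 1$.
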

\begin{proof}
See Appendix~\ref{sec:Proof-of-Complexity}. 
\end{proof}
This complexity result indicates that the tensor tree model has $\mathcal{O}(n^2\cdot C)$ complexity (where $C=C(d_1,d_2)$ depends on the dimensionality of the tensor inputs) 
compared to the larger $\mathcal{O}(n^3)$ complexity (whose constant inside big O also depends on the kernel evaluation) incurred by tensor GP models \citep{yu2018tensor} as shown later in Figure \ref{fig:Scaling_ssize}. %
We can also see that both LAE and LRE share the same big O complexity in terms of $n$. As explained in Section \ref{subsec:Splitting-Criterion}, we will focus on LRE since it parallels the SSE in regular regression trees.  

\begin{figure}[t]
\centering
\includegraphics[width=0.75\paperwidth]{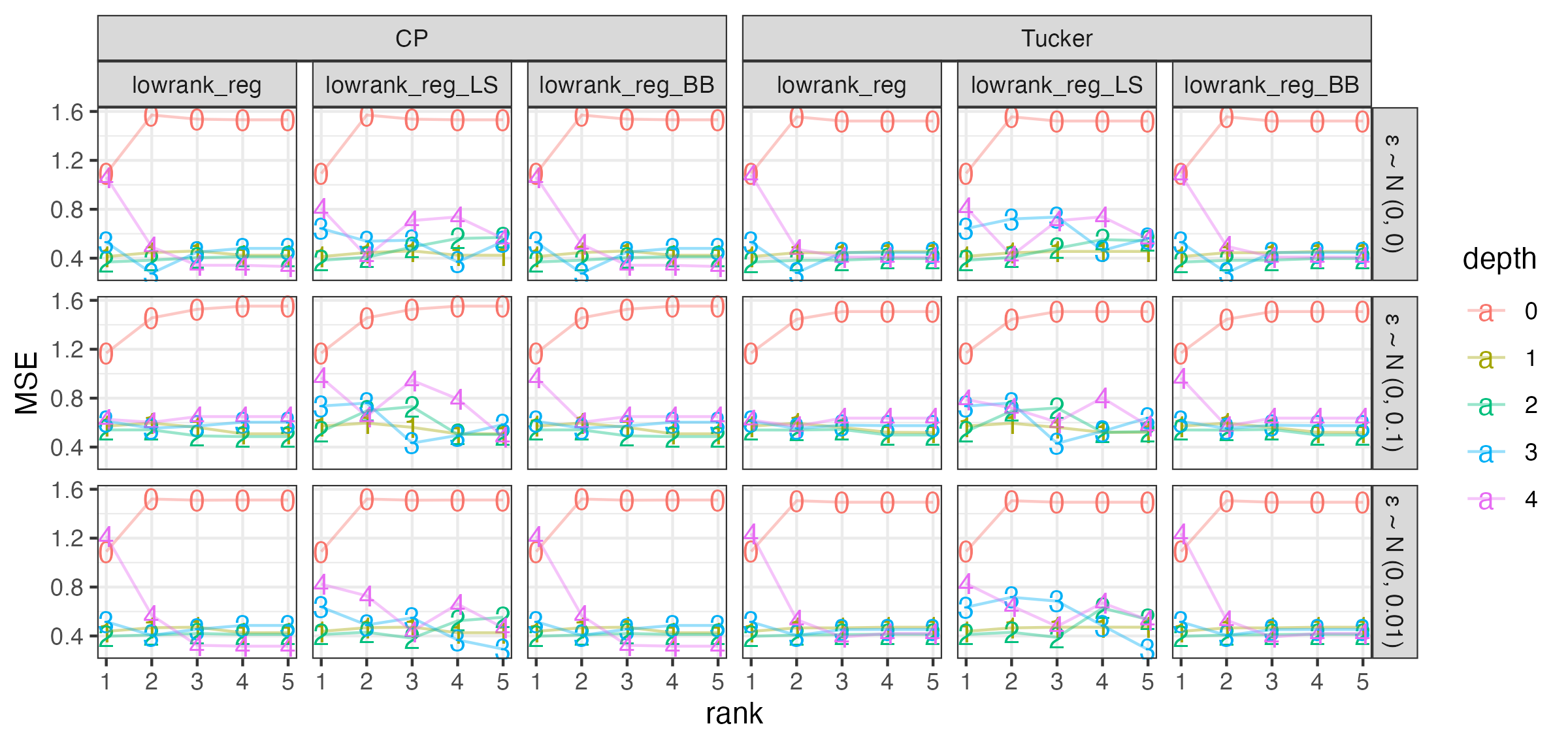}
\caption{\label{fig:Comparison-of-different-SMs}Out-of-sample MSEs for tree regression models of different maximal
depths, using a $(500,5,4)$ input tensor $\bm{X}$ sampled uniformly
randomly from $[-1,1]$ and scalar output $\bm{y}=2\bm{X}[:,0,1]\cdot\bm{X}[:,2,3]+3\bm{X}[:,1,0]\cdot\bm{X}[:,2,0]\cdot\bm{X}[:,3,0]+\varepsilon$.
Columns 1\&4: Full search; Columns 2\&5: LS with $\tau=0.5$; Columns 3\&6: BB with $\delta=0.5$.
}
\end{figure}

\subsection{\label{subsec:Complexity-Measure}Complexity-based Pruning}

As with traditional tree-based regression methods, pruning and other
forms of regularization are necessary to avoid overfitting. Adaptive
pruning can be achieved based on a complexity measure $C_{\alpha}(T)$
following the definition in (9.16) in \citet{hastie2009elements}.

Revisiting the three possible splitting criteria in Section \ref{subsec:Splitting-Criterion},
we find that variance splitting can correspond naturally to $Q_{m}(T)$ in (9.16) of \citet{hastie2009elements}. 
For the low-rank splitting proposed in Algorithm \ref{alg:Tensor-input-decision-LS},
we propose the following modified complexity measure $C_{\text{tl}_{\alpha}}(T)$
which includes a low-rank induced error term $Q_{\text{tl}_{m}}(T)$
and is defined as: 
\begin{equation}
C_{\text{tl}_{\alpha}}(T)=\sum_{m=1}^{T}N_{m}Q_{\text{tl}_{m}}(T)+\alpha T_{l}\label{eq:tl_complexity}
\end{equation}
The measure $C_{\text{tl}_{\alpha}}$ generalizes $C_{\alpha}$
by accounting for the structure and relationships in multi-dimensional
data. 
The term $Q_{\text{tl}_{m}}(T)$ can be chosen as one of \eqref{eq:SSE},
\eqref{eq:low-rank LAE} or \eqref{eq:low-rank-reg LRE} to evaluate
the fitness of the leaf models given a tree structure $\mathcal{T}$
(i.e., bias component). 
This captures the inherent complexity in tensor data and reflects
how well the model fits the underlying tensor structure. The variance
term remains the same, representing the complexity of the tree
through the number of leaves.
By considering the low-rank prediction errors, the measure $C_{\text{tl}_{\alpha}}$ (as discussed in \citet{hastie2009elements})
also balances the trade-off between fitting the tensor structure (the first summation term in \eqref{eq:tl_complexity} as bias)
and the complexity of the decision tree (the second summation term in \eqref{eq:tl_complexity} as variance). 
The measure $C_{\text{tl}_{\alpha}}$ provides a way to prune the tree
$\mathcal{T}$ and maintain a simple model (high bias, low variance),
which is essential for achieving good generalization performance.

In our TT models, we implement recursive pruning (without swapping
or other kinds of reduction operations, for simplicity) based on whether
a specific branch will reduce complexity. %
When we compute the first term in \eqref{eq:tl_complexity} (the second term will be affected by the choice of $\alpha$) we can observe %
that a deeper tree may not necessarily improve the overall fit. 

%
%
Pruning is needed to get a good fit and prediction
in a tensor-input tree regression model. We may adjust $\alpha$ in the term $\alpha T_{l}$
 to favor shallower trees. 
If we only use $\sum_{m=1}^{T}N_{m}Q_{\text{tl}_{m}}(T)$
in \eqref{eq:tl_complexity} to guide our pruning, the depth of the
tree tensor model should be 0. We provide an example showing the effect of pruning in Appendix \ref{subsec:Pruning-exp}. 
%
%

%
%

%

%
%
%

%

%

%
%
%
%
%
%
%
%
%
%
%
%
%
%
%
%
%
%

%
%
%
%
%
%
%
%

\paragraph{Summary of Single Tensor Tree 
Model.}
Our efficient \emph{tensor tree}
(TT) models are constructed for scalar responses and tensor inputs 
with $n$ observations and tensor dimensions $d_{1},d_{2}$.
At leaf nodes we propose to fit low-rank scalar-on-tensor
regression CP/Tucker models \citep{zhou2013tensor,li2018tucker} to
adapt to low-rank structures in tensors. This is a  novel
generalization to non-linear tree models \citep{chaudhuri1994piecewise,li2000interactive}
onto non-standard tensor-input domains, and also a natural generalization
of low-rank tensor models to handle heterogeneity (Figure~\ref{fig:Comparison-of-different-depths}).

In tensor regression trees, splitting involves entire tensor dimensions, based on low-rank approximation criteria using functions \eqref{eq:low-rank LAE} to \eqref{eq:mean-LRE}. 
Unlike traditional trees that split on single features, tensor trees assess multiple dimensions' interactions, increasing computational demands and potential overfitting. This complexity necessitates regularization through pruning and challenges interpretability due to complex decision boundaries. Low-rank approximations are computationally demanding, prompting scalability solutions in tensor settings \citep{kolda2008scalable,liu2017low}.
\begin{enumerate}
\item We propose criteria \eqref{eq:mean-SSE}, \eqref{eq:mean-LAE}
and \eqref{eq:mean-LRE} to split the tree structure $\mathcal{T}$,
which reduces the dimensionality of the optimization problem $\min_{(j_{0},j_{1},j_{2})}\mathcal{L}(j_{0},j_{1},j_{2})$ along with randomized and branch-and-bound optimization methods (i.e., LS and BB). 
\item We generalize complexity measures into \eqref{eq:tl_complexity} using
new criteria for tree pruning, which achieves parsimonious partition
structures. 
\end{enumerate}

\section{\label{sec:Ensemble-of-Trees}Ensemble of Trees}

Continuing our discussion of TT models, 
a single tree model tends to have large variance and overfit the data, which can produce inconsistent predictions.
The idea behind tree ensemble modeling is to combine multiple models in order to improve the performance
of a single TT model. %
Ensemble methods are foundational in modern machine learning
due to their effectiveness in improving predictive performance.

In the context of tensor regressions (see Appendix~\ref{subsec:Random-Forest-and} for detailed discussions), gradient boosting is particularly
powerful. Each tree corrects the residuals of the previous ensemble,
enabling the model to capture complex patterns in multi-dimensional
data, which is crucial for tensor inputs and outputs. Gradient Boosting (GB) can also
incorporate regularization, such as shrinkage and tree constraints,
to prevent overfitting, which is vital when modeling high-dimensional
tensor data. 
Boosting algorithms iteratively
build decision trees, and the complexity of constructing trees grows
with the dimensionality of the input space. Tensor inputs significantly
increase this dimensionality, making tree construction inefficient
and potentially leading to overfitting 
or redundant splits over the flattening space. 

Algorithm~\ref{alg:Gradient-Boosted-Tensor} combines gradient boosting and adaptive sampling with the multi-dimensional
prowess of tensor regression trees in order to achieve highly accurate
predictive models suited especially for complex, multi-dimensional
datasets. We will be able to combine $m$ different weak learners
in a residualized back-fitting scheme, and choose a learning rate
(which is usually greater than $1/m$) that allows the aggregated
model to leverage the prediction power. 

\subsection{\label{subsec:Multi-way-Output-Tensor}Tensor-on-tensor: Multi-way
Output Tensor Tree Ensemble}

We now consider a regression problem for tensor inputs $\bm{X}_{i}$ and tensor responses $\bm{y}_{i}$: 
\begin{align}\label{eq:tensor-tensor regression}
\bm{y}_{i}=\bm{g}^*(\bm{X}_{i})+\bm{E}_{i},i=1,\ldots,n.
\end{align}
for a tensor-valued function $\bm{g}^*\colon \mathbb{R}^{d_1 \times d_2}\rightarrow\mathbb{R}^{p_1\times p_2}$ and independent mean zero noises $\bm{E}_{i}$.
We focus on this special case but note that this approach can extend to higher order tensors.
\citet{lock2018tensor} proposes to treat a \emph{linear} %
tensor-on-tensor regression problem. 
Their methodology performs optimization directly on $\|\bm{y}-\bm{X}\circ\bm{B}\|_{F}$
with an optional regularization term penalizing the tensor rank of
the coefficient tensor $\bm{B}$. 
To tackle the more general tensor-on-tensor problem \eqref{eq:tensor-tensor regression}, we propose two approaches for TT models based on
ensemble modeling using tree regressions \citep{breiman1984classification,Breiman01}.
In what follows, each GB ensemble will use pruning parameter $\alpha=0.1$
and $10$ single trees unless otherwise stated.
We can also use other ensemble methods (e.g., random forest, Adaboost
\citep{breiman1999prediction}) or even different kinds of ensembles
for different entries (in entrywise approach) or components (in lowrank
approach). {(Appendix~\ref{sec:forestcomparison} contains simulation experiments} comparing to existing random-forest methods.) 
As mentioned earlier, we will focus on ``ensemble of GB'': 


\noindent \textbf{Entry-wise approach.} The first approach (TTentrywise-CP,
TTentrywise-Tucker) is to use a single GB ensemble to predict vector
slices $\bm{y}[:,i,j],i=1,\cdots,p_{1};j=1,\cdots,p_{2}$ of the tensor
output as if the slices are mutually independent.
This approach ignores dependence between slices,
but improves computational cost since the break-down into $p_{1}\times p_{2}$ single TT trees and the overall complexity is scaled up by a factor of $\mathcal{O}( p_{1}\times p_{2})$.
This method is prone to overfitting, but can work well even when evaluating on testing
data if the data has a large signal-to-noise ratio or if the elements of the output tensor have minimal
or no correlation.

\noindent \textbf{Low-rank approach.} The second approach (TTlowrank-CP, TTlowrank-Tucker) is to perform CP/Tucker decomposition on the tensor output $\bm{y}$ and use a single GB ensemble
to predict each component (i.e., $\mathbf{a}_{r},\mathbf{b}_{r},\mathbf{c}_{r}$
in \eqref{eq:CP}; $\bm{A}_{1},\bm{A}_{2},\bm{A}_{3}$ in \eqref{eq:Tucker})
in the low-rank decomposition. The low-rank decompositions
align tensor structures but can be computationally intensive (the cost of decomposing $\bm{y}$ can be obtained from 
Lemmas \ref{lem:(CP-ALS-per-iteration} and \ref{lem:(Tucker-ALS-per-iteration}),
especially when leaf nodes contain fewer samples than the selected rank
of the decomposition which may not be enough to ensure a reasonable
fit and generalization ability. {To avoid overfitting, a general rule of thumb is to choose a higher rank for tensors with highly correlated entries in order to retain more correlation, but the optimal rank of decomposition for the tensor output is generally data-dependent.} This approach allows parallelism only within
each component as matrices. In addition to higher dimensionality,
evaluating tensor gradients puts additional computational burden
for this direct generalization. %
This approach allows us to break down the regressions into vector components for fitting using single trees %
and the overall complexity is scaled up by these factors that is dependent on the ranks we choose for CP or Tucker decompositions.
%

%
    
%
%
%

%

%
%

%
%


The approach \emph{TTlowrank} offers a low-rank representation of multi-way
data by leveraging the structures and relationships within tensor
outputs. Decomposing the output tensor in this way enables powerful regression
models that can capture complex patterns in the data. {We implicitly assume the response admits a low-rank structure (which may not always be realistic) and hence perform
a low-rank decomposition on the response} and record weights (if CP) or core tensor
(if Tucker) but fit our TT regressions on the factors. Then on a new input $\bm{X}$, we predict the corresponding factors and reconstruct the tensor output predictions using the recorded weights (if CP) or core tensor
(if Tucker). 
Intuitively,
CP decomposition preserves potential diagonality, while
Tucker decomposition preserves potential orthogonality in the
output tensor. This approach is suitable when the elements of the
output tensor are correlated and exhibit complex relationships, and
yield much smaller models compared to the naïve approach. 

%
\subsection{Discussion on performance}
Here we empirically compare the predictive performance of \citet{lock2018tensor}'s approach with that of our two TT approaches. 
To generate our datasets, we
sample entries of $\bm{X}$ from a uniform distribution on $[0,1]$
and compute $\bm{y}$ using functions listed in the table in Appendix~\ref{sec:RPErrr}
(whose ranges are all within $[-1,2]$), and add a uniform noise of
scale 0.01. Training and testing sets are generated independently.
For rrr methods \citep{lock2018tensor},
we set the $\mathtt{R}$ rank parameter and leave the remaining parameters
as the package default. 
For TT methods, we set both splitting
ranks and regression ranks to be the same as $\mathtt{R}$, with{\small{}{}{}
max\_depth} = 3 and pruning measures $\alpha=0.01$. We use a GB ensemble
with 10 estimators and learning rate 0.1 uniformly.

The table in Appendix~\ref{sec:RPErrr} shows the out-of-sample errors of each model, where we see that at least one of the TT methods either ties or outperforms both rrr methods in 23 of the 24 tested scenarios.
The comparison can be separated into two groups. One group (Linear/Non-linear) explores the
power of using GB ensemble TT models (with pruning) along with the
two approaches for tensor-on-tensor regression tasks (here for simplicity we choose the same regression rank
$R$ for CP models or $(R,R,R)$ for Tucker models for both input
$\bm{X}$ and output $\bm{y}$). %
Using the RPE metric, we can observe
that TT with GB for tensor output performs quite competitively
with the rrr models and improves as the low-rank models' rank increases,
and only TT model can possibly capture non-linear interactions as
expected, since higher rank means better approximation to the $\bm{y}$. 
In the TTlowrank method
with Tucker decomposition on $\bm{y}$, the Tucker decomposition
is behaving very badly regardless of ranks and shows non-convergence
after maximal iterations are reached, since the components in $\bm{y}$ possess high correlations.
This partially explains why its RPE is greater than 1 for the linear
and non-linear examples, and its performance is the worst for this
group due to the fact that we do not assume low-rank structure in
the output $\bm{y}$. 
It seems that the practice of using rank $(R,R,R)$ for Tucker decomposition
on $\bm{y}$ does not reflect the tensor structure in the output well in this group of experiments.
In both linear and non-linear scenarios, the lowest RPE values are often seen with rrrBayes and TTlowrank\_CP methods, which closely align with rrr methods. This indicates that tree-based tensor regression methods improve in efficiency as rank increases due to better approximation quality.

The second set of experiments (Exact CP/Exact Tucker) explores scenarios with significant TT trees and pruning in GB ensembles. By setting max\_depth>0, localized effects on data from tree partitioning help capture periodic behaviors in learners. Both TTentrywise and TTlowrank methods perform well under CP decomposition, showing competitive performance against rrr models. However, Tucker decomposition can experience non-convergence issues at $(R,R,R)$ ranks, identified by $\mathtt{tensorly}$, necessitating re-runs to confirm convergence. In these cases, signals depend solely on low-rank $\bm{X}$, benefiting TT models with appropriate ranks $R$. {In practice, the mode-specific ranks can be different like $(R_1,R_2,R_3)$ for better flexibility; and the model-specific ranks can also vary for different leaf nodes to account for possible heterogeneity.} 

\paragraph{Summary of Tensor Tree Ensembles}

Boosting methods like GB employ functional gradient descent to minimize a specified loss function. The process involves calculating pseudo-residuals, fitting new trees to these residuals, and updating the model iteratively. 

To address multi-way regression problems by performing optimization directly on tensor outputs, we deploy a GB ensemble of TT along with two alternative approaches --- entry-wise and low-rank ---  for tensor regression trees (TT models) using ensemble methods.

\begin{figure}
\centering

\includegraphics[width=0.75\textwidth]{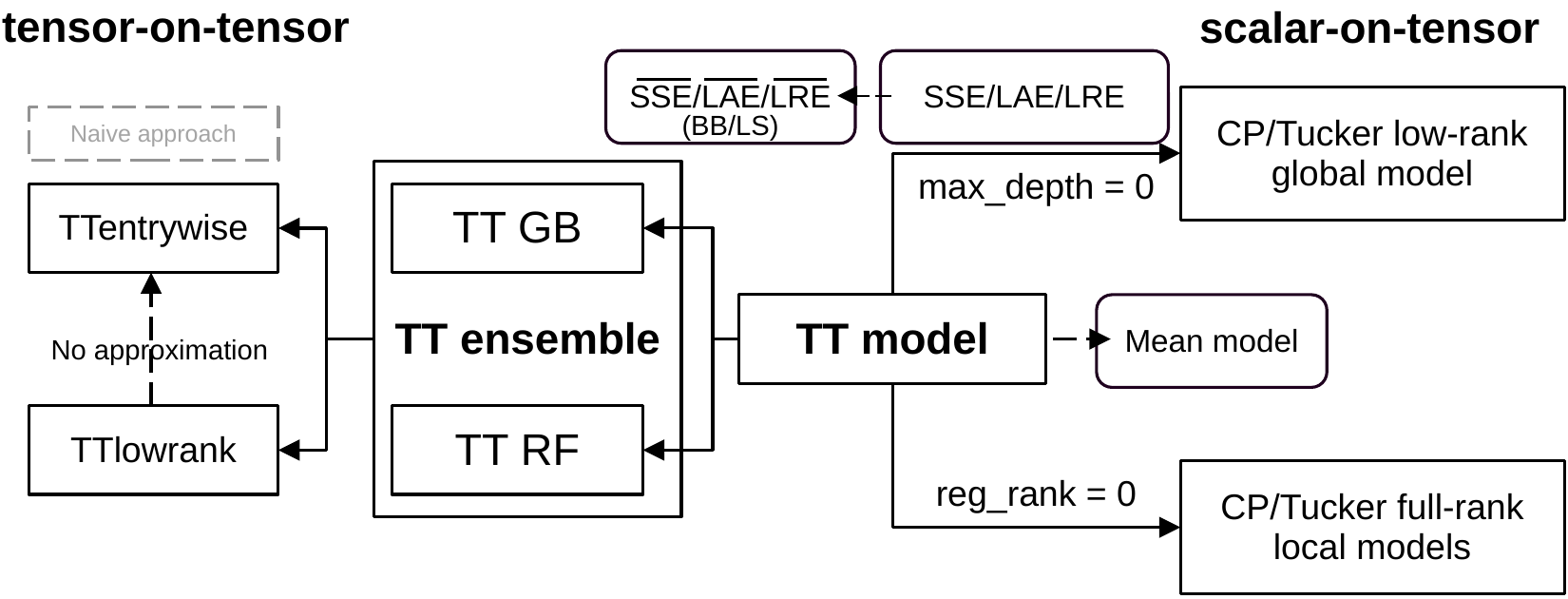}

$ $%

\caption{\label{fig:summary_figure_TT}Design of the TT models (scalar-on-tensor)
and ensembles (tensor-on-tensor).}
\end{figure}
Figure~\ref{fig:summary_figure_TT} illustrates how we construct ensemble
models and how GB (or RF) ensemble of TT models can be utilized to
perform tensor-on-tensor regressions. 
We adopt boosting for tensor output regressions for their ability to sequentially improve models by focusing on residuals, making them suitable for complex tensor-on-tensor regression tasks.

\section{\label{sec:Theoretical-Guarantees}Theoretical Guarantees}

Although this paper focuses on developing new modeling methodologies and fast training algorithms, we use existing theoretical results to support the validity of our modeling designs. The results in this section all assume scalar responses.

\subsection{Coefficient Asymptotics}

A fixed tree of depth $K$ has at most
$2^{K}$ leaf nodes. %
\textcolor{black}{In the recursive partition literature (e.g., (2.27) in \citet{gordon1978asymptotically}), it is commonly assumed that} 
for each
leaf node $t$, the number of samples $n_{t}$ tends to infinity when
the overall sample size $n$ tends to infinity. This condition obviously holds when
we split on the median in the vector-input case, but is not necessarily
true if we split according to \eqref{eq:tensor_leftright} or \eqref{eq:tensor_leftright-1}.
\textcolor{black}{Under this assumption, within each leaf node we can directly apply existing coefficient estimate consistency results for CP and Tucker regression models \citep{zhou2013tensor,li2018tucker}.}
\begin{prop}
\label{prop:For-a-fixed}For a \textit{fixed complete binary tree} of depth
$K<\infty$, 
\textcolor{black}{assume for each leaf node $t$ that the number of i.i.d. samples $n_{t}\rightarrow\infty$ and the leaf model parameter space is compact. }

\textcolor{black}{(a) (Theorem 1 in Section 4.3 of \citet{zhou2013tensor}) For the leaf node CP
model 
%
with mean-zero normal error and $\bm{X}[i,:,:]\in R_{t}$, suppose
the true coefficient $\bm{B}_{0}(t)=\sum_{r=1}^{R}\lambda_{r}(t)\mathbf{a}_{r}(t)\times\mathbf{b}_{r}(t)\times\mathbf{c}_{r}(t)$
of node $t$ is identifiable up to permutation in the sense of Proposition
4 in \citet{zhou2013tensor}. Then the MLE $\hat{\bm{B}}_{n}(t)$
converges to $\bm{B}_{0}(t)$ in probability.}

\textcolor{black}{(b) (Theorem 1 in Section 4.3 of \citet{li2018tucker}) For the leaf node Tucker model 
with mean-zero normal error and $\bm{X}[i,:,:]\in R_{t}$, suppose
the true coefficient $\bm{B}_{0}(t)=\mathbf{G}(t)\times_{1}\mathbf{A}_{1}(t)\times_{2}\mathbf{A}_{2}(t)\times_{3}\mathbf{A}_{3}(t)$
of node~$t$ is identifiable up to permutation in the sense of Proposition
3 in \citet{li2018tucker}. Then the MLE $\hat{\bm{B}}_{n}(t)$
converges to $\bm{B}_{0}(t)$ in probability.}
\end{prop}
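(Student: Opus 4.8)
The plan is to reduce the claim to a \emph{single} leaf node and then invoke the cited consistency theorems of \citet{zhou2013tensor} and \citet{li2018tucker} essentially verbatim, exploiting the fact that a fixed complete binary tree of depth $K$ has only finitely many (at most $2^K$) leaves.

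First I would fix an arbitrary leaf $t$ with associated region $R_t$. Since the tree structure $\mathcal{T}$ is held fixed (not learned from the data), $R_t$ is a deterministic subset of $\mathbb{R}^{d_1\times d_2}$, so the observations falling into leaf $t$ are exactly $\{(\bm{X}[i,:,:],y_i):\bm{X}[i,:,:]\in R_t\}$, which---because the original pairs are i.i.d.\ and the errors $E_i$ are i.i.d.\ mean-zero normal and independent of the inputs---form an i.i.d.\ sample from the leaf model $y_i=\bm{X}[i,:,:]\circ\bm{B}_0(t)+E_i$ restricted to $\bm{X}[i,:,:]\in R_t$. (Note that when $P(\bm{X}\in R_t)>0$ the assumed condition $n_t\to\infty$ is in fact automatic by the law of large numbers; it is flagged as a hypothesis because, as discussed before the statement, data-dependent mean splits \eqref{eq:tensor_left-1}--\eqref{eq:tensor_right-1} make the regions themselves move with $n$.) With $n_t\to\infty$, a compact leaf-model parameter space containing $\bm{B}_0(t)$ in its interior, and identifiability of $\bm{B}_0(t)$ up to permutation in the sense of Proposition 4 of \citet{zhou2013tensor} (resp.\ Proposition 3 of \citet{li2018tucker}), Theorem 1 (Section 4.3) of the respective paper applies to this leaf sub-model and yields $\hat{\bm{B}}_n(t)\to\bm{B}_0(t)$ in probability. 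Finally, a union bound over the at most $2^K$ leaves upgrades the per-leaf convergence to simultaneous convergence over all leaves, proving both (a) and (b).

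The step I expect to require the most care is checking that the design-regularity conditions underlying Theorem 1 of \citet{zhou2013tensor} / \citet{li2018tucker}---nonsingularity of the limiting Fisher information and the uniform (over the compact parameter set) law of large numbers driving the Wald-type consistency argument---survive the restriction of the covariates to the leaf region $R_t$. These are inherited from the population conditions as long as $P(\bm{X}\in R_t)>0$ and the restricted design retains enough variability that the information matrix stays nonsingular at $\bm{B}_0(t)$; folding exactly these requirements into the stated hypotheses (compactness, interiority, identifiability, $n_t\to\infty$) is what makes the leaf-by-leaf reduction to the cited theorems clean.
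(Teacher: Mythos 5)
Your argument is correct and matches the paper's approach: the paper likewise treats the proposition as a direct leaf-by-leaf application of Theorem 1 of \citet{zhou2013tensor} (CP) and Theorem 1 of \citet{li2018tucker} (Tucker) under the fixed-tree, $n_t\to\infty$, compactness, and identifiability hypotheses, with the finitely many leaves making the simultaneous statement immediate. Your added remarks on the automatic nature of $n_t\to\infty$ when $P(\bm{X}\in R_t)>0$ and on the leaf-restricted design regularity are sensible refinements but not required beyond what the paper assumes.
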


Though often presupposed, the condition $n_{t}\rightarrow\infty$ assumes fixed tree and leaf node partitions $R_{t}$, which is
unrealistic since typically the partition changes
as $n\rightarrow\infty$. It also requires an additive
true function $f(\bm{X})=\sum_{t\text{ is a leaf node}}\bm{X}\circ\bm{B}_{0}(t)\cdot\bm{1}\left(\bm{X}\in R_{t}\right)$
whose summands are piece-wise multilinear functions.
Also, even under these restrictions we still do not have estimates for the
ranks of tensor coefficients, as pointed out by \citet{zhou2013tensor}
and \citet{li2018tucker}.
However, Proposition \ref{prop:For-a-fixed} %
confirms one important
aspect that if $f(\bm{X})=\bm{X}\circ\bm{B}_{0}$ for some global
$\bm{B}_{0}$ (for CP or Tucker model), our tree model produces
consistent coefficient estimates for \emph{arbitrary} (hence dynamic)
partitions, since $\bm{B}_{0}=\bm{B}_{0}(t)$ and
\begin{equation}
\bm{X}\circ\bm{B}_{0}=\bm{X}\circ\bm{B}_{0}\cdot \sum_{t\text{ is a leaf node}}\bm{1}\left(\bm{X}\in R_{t}\right) =\sum_{t\text{ is a leaf node}}\bm{X}\circ\bm{B}_{0}\cdot\bm{1}\left(\bm{X}\in R_{t}\right).\label{eq:no_part}
\end{equation}
With this positive result, our tensor tree model for any \textcolor{black}{maximum depth} $K$ 
is no worse than the standalone CP or Tucker model in terms of consistency
under $f(\bm{X})=\bm{X}\circ\bm{B}_{0}$.
\subsection{Oracle Error Bounds for \eqref{eq:sse-tensor}}

As seen above, the fitted mean function from a decision tree regression
model lies in the additive class with iterative summation across tensor
dimensions $d_{1},d_{2}$:%
\begin{equation}
\mathcal{G}^{1} \coloneqq \left\{ g \colon \mathbb{R}^{d_1 \times d_2} \rightarrow \mathbb{R}  \middle|\ g(\bm{X})=\sum_{i_{2}=1}^{d_{2}}\sum_{i_{1}=1}^{d_{1}}g_{i_{1},i_{2}}(\bm{X}[i_{1},i_{2}])\right\} ,\label{eq:G1_class}
\end{equation}
which is essentially the same functional class as that in \citet{klusowski2021universal}
but with both dimensions additively separable. Within this class, a more useful
result regarding the prediction risk can be established.

\begin{thm}
\label{thm:(Empirical-bounds-for}(Empirical bounds for TT) 
Suppose data $\bm{X}^{(n)}\in\mathbb{R}^{n\times d_{1}\times d_{2}}$ and
$\bm{y}\in\mathbb{R}^{n\times1}$ are generated by model %
\eqref{eq:tensor-input regression} 
where $g^*$ is not necessarily in $\mathcal{G}^1$.
Let $g_K$ be the regression function of a TT model split by minimizing \eqref{eq:sse-tensor} with maximal depth $K$ and fitting mean models at the leaves.
Then the empirical risk $\hat{\mathcal{R}}(g)=n^{-1}\sum_{i=1}^n(y_{i} - g(\bm{X}^{(n)}[i,:,:]))^2$ has the bound:
\begin{align*}
\hat{\mathcal{R}} \left(g_{K}\right) & \leq \inf_{g\in\mathcal{G}^{1}}\left\{ \hat{\mathcal{R}} \left(g\right)+\frac{\left\Vert g\right\Vert _{\text{TV}}^{2}}{K+3}\right\}
\end{align*}
where $\left\Vert g\right\Vert _{\text{TV}}$ is (the infimum over all additive representations of $g \in \mathcal{G}^1$ of) the aggregated total variation of the individual component functions of $g$.
\end{thm}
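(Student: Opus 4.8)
The plan is to follow the now-standard analysis of greedy decision trees as iterative coordinate-wise approximations, as developed by \citet{klusowski2021universal}, and to reduce our tensor setting to theirs by a flattening. Observe that the split rule $\bm{X}\circ\bm{e}_{j_1,j_2}>c$ depends on $\bm{X}$ only through its scalar entry $\bm{X}[j_1,j_2]$, so after vectorizing the input tensor into a vector in $\mathbb{R}^{d_1 d_2}$ the class $\mathcal{G}^1$ in \eqref{eq:G1_class} becomes exactly the additive class of sums of univariate functions of individual coordinates. Thus a TT grown by minimizing \eqref{eq:sse-tensor} with mean leaves is precisely a CART tree on the flattened input, and the target oracle bound is the tensor-indexed restatement of the CART risk bound in \citet{klusowski2021universal}.

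First I would set up the one-step improvement lemma: at any node, splitting on the coordinate--threshold pair that minimizes \eqref{eq:sse-tensor} reduces the node's empirical variance by an amount that is, up to the node's mass, at least the squared correlation between the residual (current node mean minus $y$) and the best axis-aligned indicator. The key algebraic identity is that for a node $R$ split into $R_1,R_2$ with the variance criterion, the decrease in within-node sum of squares equals $N_{R_1}N_{R_2}/N_R\cdot(\hat y_{R_1}-\hat y_{R_2})^2$; bounding this below by a Cauchy--Schwarz / projection argument against the components $g_{i_1,i_2}$ of any competitor $g\in\mathcal{G}^1$ gives the recursion. Summing this recursion over the $K$ levels of the tree, and tracking the accumulated coefficient $\|g\|_{\mathrm{TV}}$ as the relevant ``size'' of the competitor (the aggregated total variation over the additive representation), yields the telescoping bound with denominator $K+3$ exactly as in the scalar CART analysis. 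I would then note that since the bound holds for every $g\in\mathcal{G}^1$, we may take the infimum, and since $g^*$ need not lie in $\mathcal{G}^1$ the inf is over the approximating class only, matching the statement.

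The main obstacle, and the place I would spend the most care, is making the total-variation bookkeeping match the tensor indexing: in \citet{klusowski2021universal} the ``size'' of the competitor is controlled by a sum of variation norms of the univariate pieces, and here the additive decomposition runs over the $d_1\times d_2$ grid of entries rather than over $d$ flat coordinates. I need to check that the per-level variance reduction can indeed be charged against $\|g\|_{\mathrm{TV}}^2$ where $\|g\|_{\mathrm{TV}}$ is defined as the infimum over additive representations of the aggregated total variation of the components --- i.e.\ that the one-step inequality uses only the total variation of the piece along the coordinate actually split, and that these aggregate correctly across levels without an extra $d_1 d_2$ factor. A secondary point to verify is that the greedy tree is compared against the full infimum over $\mathcal{G}^1$ rather than just against functions realizable by depth-$K$ trees; this is fine because the recursion lower-bounds the greedy decrease by the decrease against \emph{any} $g\in\mathcal{G}^1$ at every node, so no realizability of $g$ by a tree is needed.

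Once these points are settled, the remaining steps are routine: (i) state and prove the one-step lemma via the variance-decomposition identity and a projection inequality; (ii) iterate it down the tree, using that the greedy rule at each node does at least as well as splitting on any fixed coordinate; (iii) telescope the resulting arithmetic recursion $a_{k+1}\le a_k - a_k^2/(\text{const})$-type inequality to get the $1/(K+3)$ rate; (iv) take the infimum over $g\in\mathcal{G}^1$. I expect step (ii)'s bookkeeping --- in particular ensuring the constant in the denominator is exactly $K+3$ and that node masses telescope correctly under the normalization in \eqref{eq:sse-tensor} --- to be the fiddliest part, but it is a direct transcription of the scalar argument. No new probabilistic tools are needed since the statement is a purely empirical (in-sample) risk bound.
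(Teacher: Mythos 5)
Your proposal is correct and follows essentially the same route as the paper's own proof: the paper likewise flattens the tensor along its second and third modes, adapts the impurity-gain lemma of \citet{klusowski2021universal} (charging the gain against $\|g\|_{\mathrm{TV}}^2$ via piecewise linear interpolators along each $(j_1,j_2)$ coordinate), and then telescopes the recursion $E_K \leq E_{K-1}\bigl(1 - E_{K-1}/\|g\|_{\mathrm{TV}}^2\bigr)$ to obtain the $1/(K+3)$ rate before taking the infimum over $\mathcal{G}^1$. The points you flag as needing care (TV bookkeeping over the $d_1\times d_2$ grid with no extra dimensional factor, and comparison against arbitrary $g\in\mathcal{G}^1$ rather than tree-realizable functions) are handled in the paper exactly as you anticipate.
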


\begin{proof}
See Appendix~\ref{sec:Proof-of-Theorem-1oracle}. 
\end{proof}

Using the same argument as in Theorem
4.3 in \citet{klusowski2021universal}, the following theorem bounds the expected $L_{2}$
error between the true signal $g^{*}$ (not necessarily in $\mathcal{G}^1$) and the mean function $g_{K}$
of a complete tree of depth $K\geq1$ constructed by minimizing
\eqref{eq:sse-tensor}.

\begin{thm}
(Oracle inequalities for TT) Suppose data $\bm{X}^{(n)}\in\mathbb{R}^{n\times d_{1}\times d_{2}}$
and $\bm{y}\in\mathbb{R}^{n\times1}$ are generated by the model $y=g^{*}(\bm{X})+\varepsilon$
for additive sub-Gaussian noise
$\varepsilon$ (i.e., $\mathbb{P}(\varepsilon\geq u)\leq2\exp(-u^{2}/2\sigma^{2})$
for some $\sigma^{2}>0$). 
Let $g_K$ be the regression function of a TT model split by minimizing \eqref{eq:sse-tensor} with maximal depth $K$ and fitting mean models at the leaves.
Then
\begin{align*}
\mathbb{E}\left(\left\Vert g^{*}-g_{K}\right\Vert ^{2}\right) & \leq2\inf_{g\in\mathcal{G}^{1}}\left\{ \left\Vert g^{*}-g\right\Vert ^{2}+\frac{\left\Vert g\right\Vert _{\text{TV}}^{2}}{K+3}+C_1\cdot\frac{2^{K}\log^{2}(n)\log(nd_1d_2)}{n}\right\} 
\end{align*}
where $C_{1}$ is a positive constant depending only on $\|g^{*}\|_{\infty}$
and $\sigma^{2}$. 
\end{thm}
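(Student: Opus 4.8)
The plan is to follow the route of Theorem~4.3 in \citet{klusowski2021universal}, bootstrapping from the empirical-risk bound of the preceding theorem. Abbreviate $\bm{X}_i\equiv\bm{X}^{(n)}[i,:,:]$, write $\|h\|_n^2 = n^{-1}\sum_{i=1}^n h(\bm{X}_i)^2$ for the empirical $L_2$ norm and $\|h\|^2$ for its population counterpart. Expanding the squares in $\hat{\mathcal{R}}$ and substituting $y_i=g^*(\bm{X}_i)+\varepsilon_i$, the common term $n^{-1}\sum_i\varepsilon_i^2$ cancels from both sides of the inequality $\hat{\mathcal{R}}(g_K)\le \hat{\mathcal{R}}(g)+\|g\|_{\text{TV}}^2/(K+3)$, leaving, for every $g\in\mathcal{G}^1$,
\begin{align*}
\|g_K-g^*\|_n^2 &\le \|g-g^*\|_n^2 + \frac{\|g\|_{\text{TV}}^2}{K+3} + \frac{2}{n}\sum_{i=1}^n \varepsilon_i\bigl(g_K(\bm{X}_i)-g(\bm{X}_i)\bigr).
\end{align*}
The fixed-comparator part $2n^{-1}\sum_i\varepsilon_i(g^*(\bm{X}_i)-g(\bm{X}_i))$ is conditionally mean zero and, after splitting $g_K-g=(g_K-g^*)-(g-g^*)$, the first step is to control the remaining empirical-process term uniformly over the \emph{data-dependent} regression function $g_K$.

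Second, I would bound the cardinality of the class $\mathcal{F}_K$ of regression functions realizable by a complete depth-$K$ tree with sample-mean leaves: each of the $2^K-1$ internal nodes carries a split coordinate pair in $\{1,\dots,d_1\}\times\{1,\dots,d_2\}$ and a threshold among the at most $n$ observed values along that coordinate, so the induced partitions---and hence the candidate functions, whose leaf values are then pinned down by the data---number at most $(n d_1 d_2)^{2^K}$; taking logarithms produces the $2^K\log(n d_1 d_2)$ factor. On a high-probability event where $\max_i|\varepsilon_i|\le C\sigma\sqrt{\log n}$ (so that every leaf mean, and hence every $h=g_T-g^*$ with $g_T\in\mathcal{F}_K$, is bounded by a multiple of $\|g^*\|_\infty+\sigma\sqrt{\log n}$), a sub-Gaussian maximal inequality combined with a peeling argument over dyadic shells of $\|h\|_n$ yields, simultaneously for all such $h$,
\begin{align*}
\Bigl|\tfrac{2}{n}\sum_{i=1}^n\varepsilon_i h(\bm{X}_i)\Bigr| &\le \tfrac14\|h\|_n^2 + C'\sigma^2\,\frac{2^K\log(n d_1 d_2)+\log\log n}{n},
\end{align*}
where the additional logarithmic factors, which accumulate into the $\log^2 n$, come from the truncation level and from the dyadic peeling.

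Third, I would pass from the empirical to the population norm: since the functions in $\mathcal{F}_K$ are uniformly bounded on the same event, a Bernstein/bounded-differences concentration, again union-bounded over the $(n d_1 d_2)^{2^K}$ candidate partitions, gives $\|h\|^2\le 2\|h\|_n^2 + C''\,2^K\log(n d_1 d_2)/n$ (and the reverse comparison for the fixed $g-g^*$). Substituting the remainder bound with $h=g_K-g$, absorbing the $\tfrac14\|g_K-g^*\|_n^2$ and $\tfrac14\|g-g^*\|_n^2$ pieces via $ab\le\tfrac12a^2+\tfrac12 b^2$, rearranging, and then applying the norm-equivalence inequalities leaves, on the good event, a bound of the form $\|g_K-g^*\|^2\le 2\|g-g^*\|^2 + 2\|g\|_{\text{TV}}^2/(K+3) + C_1\,2^K\log^2(n)\log(n d_1 d_2)/n$; taking the infimum over the arbitrary $g\in\mathcal{G}^1$ and then the expectation---the complement of the good event has polynomially small probability and there $\|g_K-g^*\|^2$ is controlled by $\|g^*\|_\infty^2+\sigma^2\log n$ through the sub-Gaussian tail, contributing only a lower-order term---yields the stated inequality with $C_1$ depending only on $\|g^*\|_\infty$ and $\sigma^2$.

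The main obstacle is the middle step: the \emph{localized} uniform control of the empirical process over depth-$K$ trees whose leaves are fitted by noisy sample means. The difficulty is twofold---the leaf values are themselves functions of the $\varepsilon_i$, so one cannot treat $\mathcal{F}_K$ as a fixed finite class but must first restrict to a truncation event, and the split thresholds range over a continuum that must be discretized to the observed values per coordinate before a union bound applies. Reconciling truncation, discretization, and peeling so that the slack is the localized $\tfrac14\|h\|_n^2$ rather than a crude $\sigma\|h\|_n\sqrt{\log N/n}$---exactly as in \citet{klusowski2021universal}, but with $d$ replaced by $d_1 d_2$ and exploiting the additive-separable structure \eqref{eq:G1_class}---is where essentially all the work lies; the empirical-to-population norm comparison and the final expectation are routine once this is in place.
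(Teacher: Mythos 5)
Your plan is essentially the paper's own route: the paper gives no separate proof of this theorem, but simply invokes ``the same argument as in Theorem 4.3 of \citet{klusowski2021universal}'' applied on top of the empirical-risk bound, with the coordinate count $d$ replaced by $d_1 d_2$ --- exactly the combination of the training-error bound, the $(nd_1d_2)^{2^K}$-type complexity of depth-$K$ trees with data-chosen splits, sub-Gaussian truncation, and empirical-to-population norm comparison that you describe. Your sketch is a faithful (if necessarily incomplete at the localized empirical-process step, which in \citet{klusowski2021universal} is handled by the standard least-squares-over-data-dependent-partitions machinery that produces the $\log^2(n)$ factor) expansion of that cited argument, so no substantive divergence from the paper.
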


This result does not cover CP
or Tucker regressions at the leaf nodes. Also,
if the tree is split using the LAE criterion \eqref{eq:low-rank LAE}, then the correlation-based expansion in \citet{klusowski2021universal}
will no longer work. 
It remains an open question whether the LAE-based Algorithm~\ref{alg:Tensor-input-decision-LS} is consistent even when $K=1$.

\section{\label{sec:Data-Experiments}Data Experiments}

\subsection{\label{subsec:Effect-of-different}Effect of Different Splitting
Criteria}

\begin{figure}[H]
\includegraphics[width=1\textwidth]{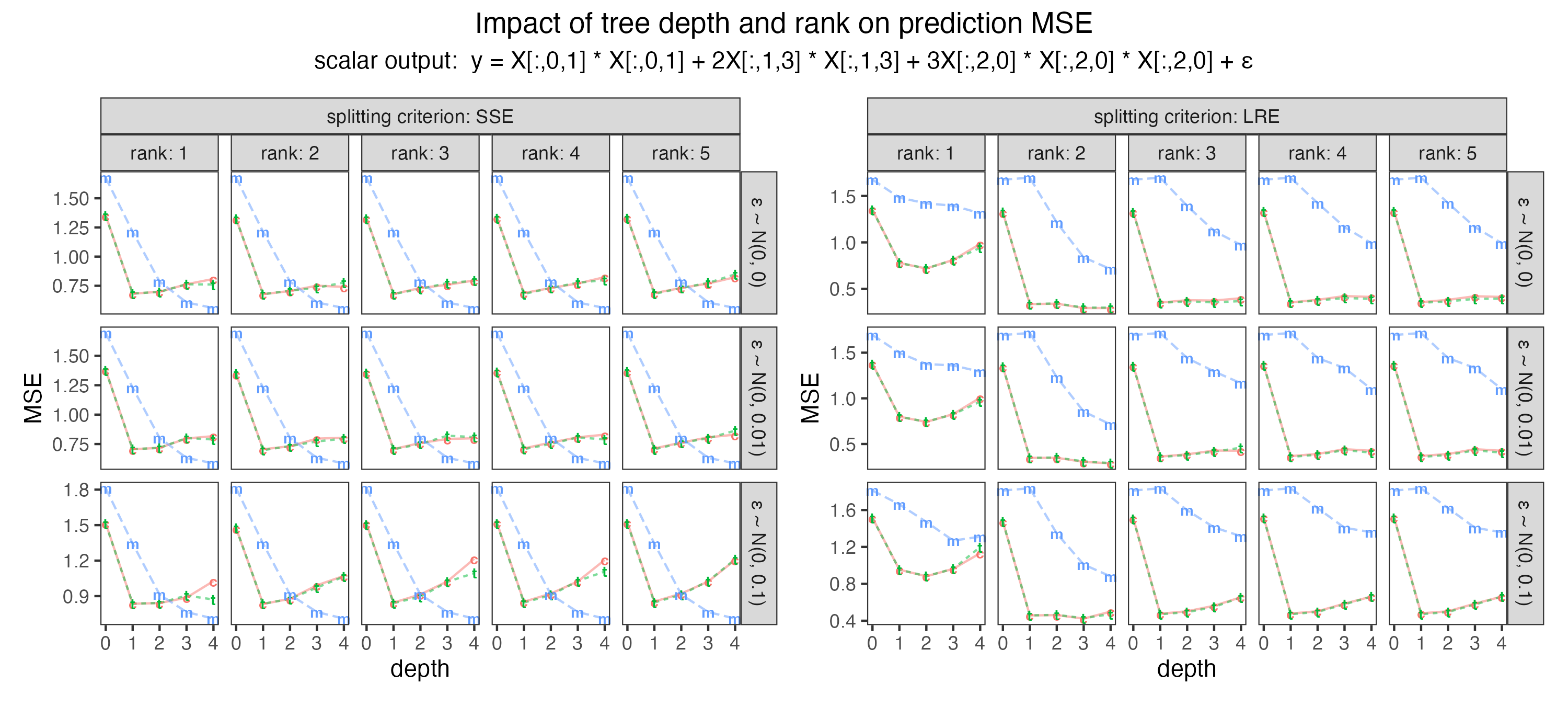} 
\includegraphics[width=1\textwidth]{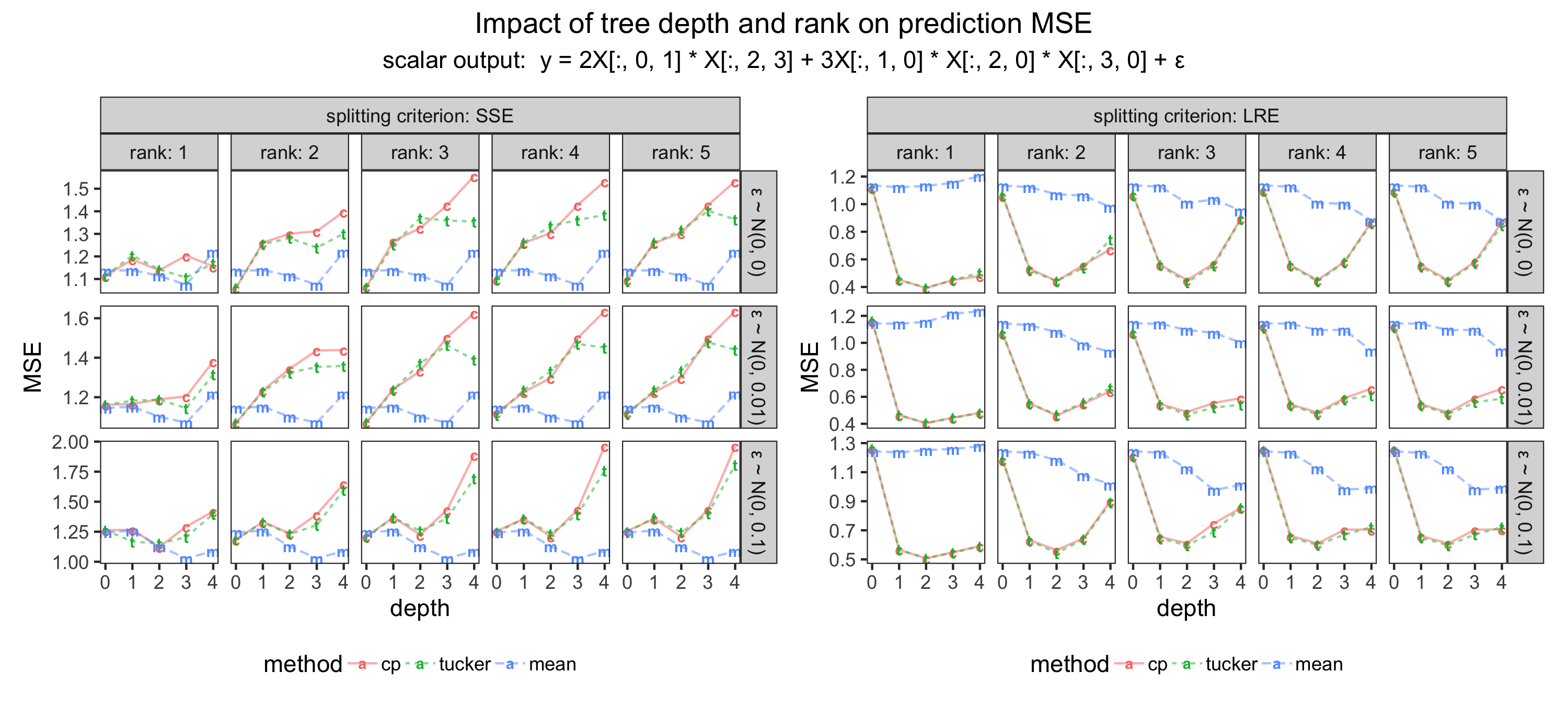}
\caption{\label{fig:COM-2}
Out-of-sample MSE comparison of CP, Tucker 
($\tau=1$), and mean predictions for tree models of different
max\_depth and split by either the SSE \eqref{eq:sse-tensor} or LRE
\eqref{eq:low-rank-reg LRE}, we use the same splitting and fitting ranks. 
We sample a $(1000,5,4)$ input tensor uniformly from $[-1,1]$. MSEs are averaged over 10 seeds.  
}
\end{figure}
 %

%
%
%
%
%
%
%
%
%
%
%
%
%
%
%
%
%
%
%
%
%
%
%




Splitting criteria allow tree-based regression models to capture interactions between the
predictors that would not be captured by a standard regression model \citep{hastie2009elements,klusowski2020sparse}. This approach can also be generalized
to tensors of any order. Conditioned on the splitted partitions, we
may take the sample mean as predictive values, or fit a regression
model (e.g., linear or tensor regression) to the data on each partition.

The standard setting of our model is
to let the split\_rank parameter to be the same as CP\_reg\_rank (or
Tucker\_reg\_rank, see Table \ref{tab:Supported-combination-methods:} for detailed parameter specification). 
In most of the applications considered in this paper, we
set split\_rank to be the same as the rank chosen for the leaf
node regressions. This practice reflects our belief that the
model and recursive partition regime share the same low-rank structure
assumption. However, it is possible to choose split and regression
ranks to be different and even adaptively when the low-rank structure within each group or as the tree grows deeper. 

\textbf{Selecting splitting criteria.} We examine the MSE in Figure~\ref{fig:COM-2}\footnote{In the SSE experiment with rank 2 and $N(0,0.1)$ noise,
we found 4 out of 10 cases raising possible non-convergence warnings
from $\mathtt{tensorly}$, for other experiments there are 1 or 2.}. 
When split using the SSE criterion \eqref{eq:sse-tensor}, the tensor models perform comparably with the mean model, with neither consistently outperforming the other. The underlying mechanisms that split the data according to the variation of $\bm{y}$ do not necessarily capture the tensor structure better.

In contrast, when split using the LRE criterion \eqref{eq:low-rank-reg LRE}, the tensor models perform better than the mean models. 
Hence the remainder of the subsection highlights the enhanced performance of tensor models when utilizing the LRE criterion.
Interestingly, the MSE of the CP/Tucker models does not monotonically decrease as decomposition rank and tree depth increase. 
In Figure~\ref{fig:COM-2}, the MSEs at rank 1 are different from the MSEs at ranks 2,3,4,5. Rank also seems to affect MSEs at larger tree depths. Otherwise, rank seems to not make much difference in these two examples. 
Regarding depth, the top half of Figure~\ref{fig:COM-2} shows that with an appropriate rank (i.e., ranks 2,3,4,5), 
our tree-based low-rank model exhibit an L-shaped MSE pattern with respect to depth can outperform the mean model for any of the tested tree depths. 
However, in the bottom half of Figure~\ref{fig:COM-2},
the CP/Tucker models exhibit a U-shaped MSE pattern with respect to depth and may perform worse than a mean model if the tree depth is not appropriately selected.
%
%
%
%
%
%
%
%
%
%
%
%
%
%
%
%
%
%
%
%

\textbf{Interaction in signals.} The tensor model is quite good at
capturing the non-separable interactions like $\bm{X}[:,1,3]*\bm{X}[:,1,3]$
and $\bm{X}[:,1,0]*\bm{X}[:,2,0]*\bm{X}[:,3,0]$ in the tensor input.
In Figure~\ref{fig:COM-2}, where triplet interactions
$\bm{X}[:,2,0]*\bm{X}[:,2,0]*\bm{X}[:,2,0]$ or $\bm{X}[:,1,0]*\bm{X}[:,2,0]*\bm{X}[:,3,0]$
exist, the 3-way tensor decomposition helps to fit a low-rank regression
model to outperform the usual mean model used along with tree regressions.

On one hand, increasing the regression
ranks\footnote{We choose the split\_rank and CP\_reg\_rank/Tucker\_reg\_rank to be
the same as our default.} will have a saturation effect, where the performance may not change
after a certain rank threshold. This is clear in both LAE and LRE
columns. The difference is that increasing LAE may deteriorate the
performance (Figure \ref{fig:COM-2}) while LRE cannot deteriorate
the performance, since it is purely splitting based on reducing the tensor regression ranks of the chosen leaf models. %

On the other hand, the tree depth (max\_depth) will not have this
saturation effect. We can observe that there is usually a 'sweetspot'
depth (e.g., depth 1,2,3 in the bottom half of Figure~\ref{fig:COM-2}). If we keep increasing
the tree depth, we will enter the second half of the U-shape and deteriorate
the tree-based tensor-model.
As shown by Figure~\ref{fig:COM-2}, the low-rank
CP regression models at leaf nodes improve prediction
MSE compared to the traditional mean predictions but depends on the depth. 
%
%
%
%
%
%

\subsection{\label{subsec:Comparison-with-Other}Comparison with Other Tensor
Models}

The Tensor Gaussian Process (TensorGP, \citet{yu2018tensor}) extends
Gaussian Processes (GPs, \citet{hrluo_2019a}) to high-dimensional
tensor data using a multi-linear kernel with a low-rank approximation.
In regular GPs, the covariance matrix $\bm{K}$ is derived from a
kernel function $k(\bm{X},\bm{X}')$ which typically depends on the
Euclidean distance between points $\bm{X},\bm{X}'$. For $n$ data points, the matrix $\bm{K}$ has size $n\times n$ with elements $\bm{K}_{ij}=k(\bm{X}[i,:],\bm{X}[j,:])$.


%

\begin{figure}[H]
\centering

\includegraphics[width=\textwidth]{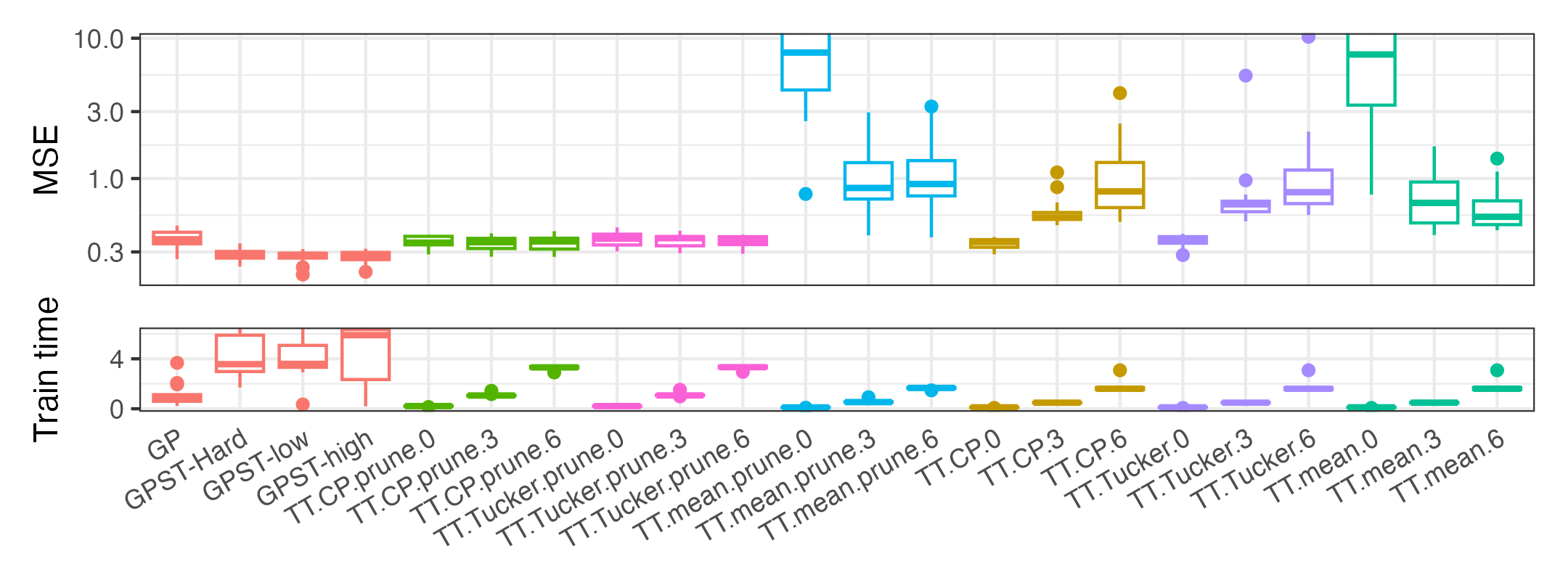}

\caption{\label{fig:MSE-time-of-nonpara-models-GP} 
{Out-of-sample 
MSE and training time (minutes)} of our tensor-input tree
(TT), tensor-GP, GPST, CP (ranks = 3), and Tucker (ranks = {[}3,3,3{]}). 
All models are fitted (with 75\% training
set of size 1000 4-dimensional tensor) on 20 batches of the synthetic
datasets in Table~1 of \citet{sun2023tensor}. 
The final character of each TT method label indicates the method's set maximum depth.
Our TT model is split
using SSE criterion \eqref{eq:sse-tensor}. For the TT\_prune methods, the tree is pruned with $\alpha=0.1$ in \eqref{eq:tl_complexity}.
}
\end{figure}

Our TT method is significantly faster than the $\mathcal{O}(n^{3})$ complexity of GP models, making it feasible for high-dimensional data. The TensorGP model optimization seeks optimal low-rank matrices to preserve data structures using a covariance kernel in tensor form. Similarly, the Tensor-GPST model \citep{sun2023tensor}, uses tensor contraction for dimension reduction, akin to pre-PCA in standard GP models. This includes anisotropic total variation regularization for generating sparse, smooth latent tensors.

\begin{figure}[H]
\centering

\includegraphics[width=\textwidth]{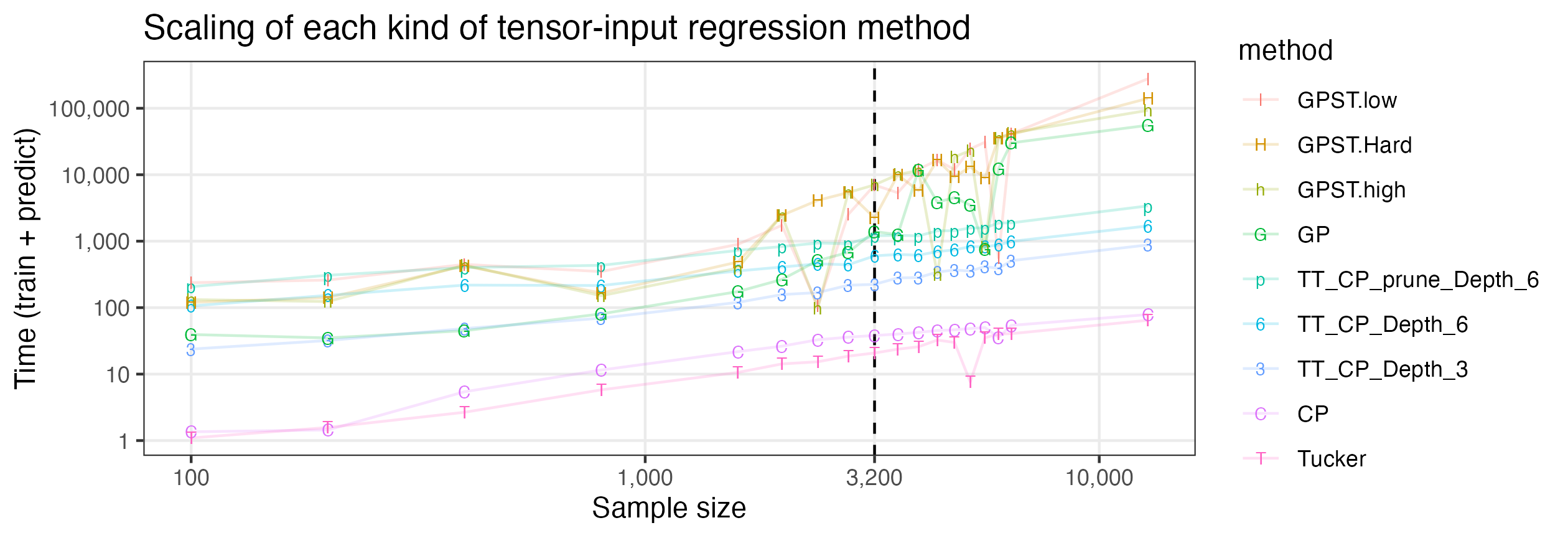}

\caption{\label{fig:Scaling_ssize}  In-sample and out-of-sample training
time (seconds) against the sample size for the experiments
in Figure~\ref{fig:MSE-time-of-nonpara-models-GP}. For TT models with
pruning, we experiment with max\_depth=3,6 and select $\alpha=0.5$ in \eqref{eq:tl_complexity}
and perform only one fit on an Intel i9-10885H 2.40GHz machine. 
}
\end{figure}

{Figures \ref{fig:MSE-time-of-nonpara-models-GP} and \ref{fig:Scaling_ssize} in the main text and Figure~\ref{fig:Scaling_ssize_d2} in Appendix~\ref{sec:increased2}} compare model performances {and training times} using \citet{sun2023tensor}'s synthetic data. GPST models excel in testing MSE, outperforming tensor-GP models in fitting and generalization. However, TT models show lower training but higher testing MSE, indicating overfitting; pruning improves CP/Tucker leaf models' MSE, making them nearly comparable to GPST but with less depth pruning compared to mean models. Although GPST has longer training and prediction times, unpruned TT models are faster, with even pruned versions having lower computational times than tensor GP models, as shown by Proposition~\ref{prop:(Computational-complexity-for}.
With pruning, our TT models
take longer to compute, but all TT models with depth $\leq3$
are still much faster to compute than the tensor GP models, echoing our Proposition~\ref{prop:(Computational-complexity-for}.

\subsection{More Applications}
\label{subsec:Tensor-on-tensor:-Image-Recovery}

\paragraph{EEG Data}

Our first example uses the EEG dataset from  
in the $\mathtt{TRES}$ R package \citep{zeng2021tres}. We use
$61\times64\times64$ tensor inputs of $n=61,d_{1}=d_{2}=64$ parsed
from EEG dataset and a binary {label}  $\bm{y}$. {Using TRR ($\bm{y}$ as vector predictor) and TPR ($\bm{X}$ as tensor predictor) from this package, we illustrate the fitted coefficients from TRS models with different fitting methods (i.e., 1D/FG/OLS/PLS).}

Analogous to Figure~\ref{fig:Comparison-of-different-depths}, the {fourth and fifth} rows in the figure in Appendix~\ref{sec:EEGcoeff} each consist of eight panels; each panel represents a {$64\times 64$} coefficient corresponding to a low-rank
CP or Tucker model at leaf nodes, which are based on a subset (induced by low-rank splitted trees with target rank $R=5$) along
the first mode of the input tensor $\bm{X}$. 
The eight panels in the {fourth and fifth} rows appear to overlap
and collectively approximate the first two {panels (1D and FG coefficients) in the first (TRES.TRR) and second (TRES.TPR) rows.}
This can be attributed to several factors. One is the redundancy in the
subsets along the first dimension; if these subsets either overlap
significantly or are highly correlated, the resulting reconstructions
in each panel would naturally appear similar, otherwise dis-similar. 
{We find that the TT coefficients with sufficient depth are more similar to the TPR coefficients, seeing as both models take tensors as inputs, than they are to the TRR coefficients, where TRR takes tensors as the response. In particular, the 3,5,6-th leaf node models' coefficients seem to 
capture the TRES.TPR (1D, FG, PLS) coefficient patterns.}
Another explanation could be the complementarity of the features captured
in each panel. If each panel captures unique but complementary structural
or informational elements of the original tensor, their collective
representation would naturally approximate the entire tensor effectively; {this may explain why the deeper layer models seem like a decomposition of the depth 0 model}.
Lastly, the inherent nature of CP and Tucker decomposition as a low-rank
approximation implies that even a subset of the first dimension's
factor vectors, when integrated with the corresponding vectors from
other modes, can capture significant global features of the original
tensor $\bm{X}$.

\paragraph{Facial Image}

Our second example uses frontalized facial images as in \citet{lock2018tensor}
showing forward-facing faces achieved by rotating, scaling, and cropping
original images. The highly aligned images facilitate appearance
comparisons. Each image has $90\times90$ pixels,
with each pixel containing red, green, and blue channel intensities.
We randomly
sample 500 images, so the predictor tensor $\bm{X}$ has dimensions
$500\times90\times90\times3$, and the response tensor $\bm{Y}$ has
dimensions $500\times72$. We center each image tensor by subtracting
the mean of all image tensors. Another randomly sampled 500 images
and response are used as a validation set.

\citet{lock2018tensor} found the rrr method with $\lambda=10^{5}$ achieved an RPE of 0.568, ranging from 0.5 to 0.9, but fairly comparing regularizations is complex. At each leaf $t$, introducing a unique regularization term $\lambda(t)$ is challenging due to varying sample sizes across leaf nodes. Hence, all models in our study use no regularization ($\lambda_{TT}=0$), simplifying comparisons. TTentrywise and TTlowrank perform reasonably (see Table~\ref{tab:The-tensor-input-decision}), but they do not match the tensor-on-tensor rrr and Bayesian rrrBayes models \citep{lock2018tensor}. TT offers an alternative for tensor-on-tensor regression without complex prior specifications or MCMC computations \citep{gahrooei2021multiple,luo2022tensor,wang2024bayesian}, but managing the high-dimensional input and computational demands for 720 different tree models in TTentrywise, each requiring intricate splitting and pruning, remains a significant challenge.

\begin{table}
\centering

{\small{}{}}%
\begin{tabular}{ccccccccc}
\toprule 
 & \multicolumn{2}{c}{{\small{}{}{}TTentrywise\_CP}} & \multicolumn{2}{c}{{\small{}{}{}TTentrywise\_Tucker}} & \multicolumn{2}{c}{{\small{}{}{}TTlowrank\_CP}} & \multicolumn{2}{c}{{\small{}{}{}TTlowrank\_Tucker}}\tabularnewline
\midrule 
{\small{}{}{}rank }  & {\small{}{}{}RMSE }  & {\small{}{}{}RPE }  & {\small{}{}{}RMSE }  & {\small{}{}{}RPE }  & {\small{}{}{}RMSE }  & {\small{}{}{}RPE }  & {\small{}{}{}RMSE }  & {\small{}{}{}RPE}\tabularnewline
\midrule
\midrule 
{\small{}{}{}3 }  & {\small{}{}{}9.0007}  & {\small{}{}{}0.7083}  & {\small{}{}{}9.0218}  & {\small{}{}{}0.7150}  & {\small{}{}{}8.8316}  & {\small{}{}{}0.6566}  & {\small{}{}{}11.7500}  & {\small{}{}{}2.0573}\tabularnewline
\midrule 
{\small{}{}{}5 }  & {\small{}{}{}9.0523}  & {\small{}{}{}0.7247}  & {\small{}{}{}9.0729}  & {\small{}{}{}0.7313}  & {\small{}{}{}8.8761}  & {\small{}{}{}0.6699}  & {\small{}{}{}12.072}  & {\small{}{}{}2.2924}\tabularnewline
\midrule 
{\small{}{}{}15 }  & {\small{}{}{}9.0582}  & {\small{}{}{}0.7266 }  & {\small{}{}{}9.1162 }  & {\small{}{}{}0.7454}  & {\small{}{}{}8.9741 }  & {\small{}{}{}0.7000 }  & {\small{}{}{}10.5614 }  & {\small{}{}{}1.3447}\tabularnewline
\bottomrule
\end{tabular}

$ $
\newline

\caption{\label{tab:The-tensor-input-decision}
RMSE and RPE from TT models trained on a resampled Facial Image dataset \citep{lock2018tensor} 
with CP/Tucker methods and TTentrywise/TTlowrank approaches. 
%
}
\end{table}

\section{\label{sec:Conclusion}Conclusion}

Our paper contributes to scalar-on-tensor and tensor-on-tensor regressions using tree-based models (Algorithm \ref{alg:Tensor-input-decision-exhaustive}), especially designed for high-dimensional tensor data. Unlike traditional regression trees which handle vector inputs, our models are tailored for multi-way array inputs, significantly enhancing their utility for complex data types.

We developed both randomized and deterministic algorithms to efficiently fit these models (Algorithm \ref{alg:Tensor-input-decision-LS} and \ref{alg:Tensor-input-decision-BB}), crucial due to the high computational demand of tensor operations. These models perform competitively against established tensor-input Gaussian Process models \citep{sun2023tensor, yu2018tensor} and are computationally more efficient, utilizing techniques like leverage score sampling and branch-and-bound optimization.

{There are other types of tensor products, such as the Khatri-Rao products,
which offer granular feature spaces and more detailed splitting rules} (Section
\ref{subsec:Splitting-Criterion}) analogous to the oblique trees \citep{murthy1994system,breiman1984classification}. {Additionally, kernel-based approaches
are well-suited for capturing complex, non-linear relationships and
could enhance the predictive power for mixed inputs} \citep{hrluo_2022c}. {However, tensor product inequalities are not necessarily computationally tractable, and efficient formats like tensor-train are needed for practical implementations} \citep{oseledets2011tensor}.

Further, we expand our methods to tensor-on-tensor regressions through tree ensembles, addressing a gap in existing non-parametric models that can process both tensor inputs and outputs. This includes implementing new tensor-specific splitting criteria like variance splitting and low-rank approximation, and exploring potential applications such as tensor compression \citep{kielstra2023tensor}.

While we demonstrate the effectiveness of these models and their ensembling methods, theoretical aspects like the consistency of dynamic trees and error bounds for ensemble tensor models remain open for further research. {Our tree-based method could be extended to be robust to rotations}\citep{van2018learning} {via rotation forests}~\citep{rodriguez2006rotation}, {which remains to be explored in tensor data setting}. Our work paves the way for future studies to explore these areas within the framework of complex data analysis. 
\section*{Acknowledgment}
HL thanks LBNL for providing computational resources in pilot experiments.
HL was supported by U.S. Department of Energy under Contract DE-AC02-05CH11231
and U.S. National Science Foundation NSF-DMS 2412403. LM and AH were partly supported by NSF grants DMS-1749789 and NIGMS grant R01-GM135440. LM was also partly supported by NSF grant DMS-2152999.
\newline 
{We thank the anonymous referees and the Associate Editor for their constructive feedback.}
\singlespacing
\bibliographystyle{chicago}
\bibliography{fastTensorTree_refs}

\newpage
\renewcommand\thesection{\Alph{section}}
\renewcommand\thesubsection{\thesection.\arabic{subsection}}
\setcounter{section}{0}

\begin{center}
{\Large\textbf{SUPPLEMENTARY MATERIAL}}
\end{center}
\onehalfspacing
\section{Performance Metrics}

The performance of a tensor-input predictive model $\hat{\bm{y}}$ can be quantified
using the classical regression metric Mean Square Error (MSE) $\|\bm{y}- \hat{\bm{y}}\|_{2}^{2}$
(or root MSE). We also consider the Relative Prediction Error
(RPE) \citep{lock2018tensor} defined as ${\rm RPE}=\|\bm{y}-\hat{\bm{y}}\|_{F}^{2} / \|\bm{y}\|_{F}^{2}$
on testing data (a.k.a. SMSPE \citep{gahrooei2021multiple}), where
$\|\cdot\|_{F}$ denotes the Frobenius norm on the vectorized
form of a tensor. Unlike MSE, RPE is a \textit{normalized} discrepancy between
the predicted and actual values. 
To supplement these prediction error metrics, 
we also provide complexity analysis with actual time benchmarks. 
Many existing tensor models \citep{sun2023tensor} including the low-rank models \citep{liu2017low}
have been shown to be bottlenecked by scalability, and our tree models and ensemble variant
provide a natural divide-and-conquer approach addressing this.
\section{Methods of efficient search for splitting coordinates.}\label{sec:LS_BB_methods}

Existing efficient methods \citep{hrluo_2022e,chickering2001efficient}
for splitting point optimization (i.e., search among possible points
for the above loss functions) usually rely on the distribution assumptions
on the candidates. In our case, this assumption is usually not realistic
due to the complex correlation between dimensions of the input tensor.
To reduce the $N^{*}\cdot\left(\max_{i}d_{i}\right)^{K}$ factor further,
we propose two methods for shrinking the search space of these optimization problems in order 
to quickly solve the minimization problems for \eqref{eq:low-rank LAE}
or \eqref{eq:low-rank-reg LRE}. Both methods have existed in the optimization
community for a while and exhibit trade-offs between efficiency and
accuracy, but to our best knowledge are applied in
regression trees for the first time.

\begin{table}
\begin{adjustbox}{center}
{\smaller
\begin{tabular}{ccccc}
\toprule 
\backslashbox{Item}{Model}  & Decision Trees  & Extra-Trees (ERT)  & Algorithm \ref{alg:Tensor-input-decision-LS}  & Algorithm \ref{alg:Tensor-input-decision-BB}\tabularnewline
\midrule
\midrule 
Reference  & \citep{breiman1984classification}  & \citep{geurts2006extremely}  & \multicolumn{2}{c}{This paper}\tabularnewline
\midrule 
split coordinate  & Exact optimization  & Exact optimization  & Importance sampling  & Branch-and-bound\tabularnewline
\midrule 
\multirow{2}{*}{split value} & \multirow{2}{*}{Exact optimization} & \multirow{2}{*}{Random selection} & Random $\tau<1$/  & Random $\xi>0$/\tabularnewline
 &  &  & Exact $\tau=1$  & Exact $\xi=0$\tabularnewline
\cmidrule{1-5} \cmidrule{1-5} 
\end{tabular}
}
\end{adjustbox}
\caption{\label{tab:Comparison of different tree methods}Comparison of state-of-the-art
methods, where we essentially replace the splitting value and coordinate
optimization $\min_{(j_{0},j_{1},j_{2})}\mathcal{L}(j_{0},j_{1},j_{2})$
with a surrogate problem and support approximate choices of splitting
values to allow more expressive complexity trade-offs.}
\end{table}

Leverage score sampling (LS) uses a sample rate $\tau$
to shrink the search space size from $d_{1}d_{2}$ down to $\tau d_{1}d_{2}$
for constrained optimization (see Algorithm \ref{alg:Tensor-input-decision-LS}). When $\tau=1$, LS reduces  to exhaustive
search. The subset $\mathcal{D}$ of dimension pairs $(j_{1},j_{2})$
is chosen from $\{1,2,...,d_{1}\}\times\{1,2,...,d_{2}\}$ following
a LS (without replacement) scheme \citep{murray2023randomized,malik2018low}.
Pairs with higher variance are preferred since they offer diversity
and may benefit low-rank approximations for $R_{1},R_{2}$. 
The probability of choosing a constant-value dimension is zero. 
LS keeps the same asymptotic computational complexity but sacrifices
optimal solutions for faster computation during the greedy search for
splits. This per-split strategy also shares the spirit of the traditional
$\mathtt{mtry}$-strategy used in random forests \citep{mentch2020randomization}
which searches $\mathtt{mtry}$ (rather than all) features. 

Branch-and-bound optimization (BB) considers a divide-and-conquer
strategy for this optimization problem. In Algorithm \ref{alg:Tensor-input-decision-BB},
we introduce the tolerance $\xi$ to divide the search
space via the branch-and-bound strategy \citep{lawler1966branch,morrison2016branch}
and consider the optimization of \eqref{eq:low-rank LAE} or \eqref{eq:low-rank-reg LRE}
within the parameter tolerance $\xi$. When $\xi=0$, this method reduces
to regular exhaustive search.
BB divides the entire space of $d_{1}d_{2}$ potential splits into
smaller search subspaces (i.e., branching). For each subspace, it
estimates a bound on the quality of the best split coordinates (minimizing
the chosen criterion) that can be found. If this estimated bound
is worse than the best split (within the prespecified tolerance
$\xi$) found so far, the subspace is excluded from further
consideration (i.e., bounding). This method efficiently narrows
the search to the most relevant splits %
and provides
a practical alternative to exhaustive search at the price of possible
local minima. 
To support our claim, Figure \ref{fig:Comparison-of-different-SMs}
shows that in the zero-noise cases, both LS and BB tend to have larger MSE compared to exhaustive search, regardless
of the choice of leaf node fitting models, and the model quality deterioration becomes severe when
the maximal depth increases.
When there is more noise
(i.e., lower signal-to-noise ratio in data), such differences in MSE
between exhaustive search and BB/LS become less
obvious. As rank of the leaf model increases,
the performance of LS is less stable compared to BB. As the depth
of the model increase, BB takes more time to fit compared to the linearly
increasing sampling time of LS. Table~\ref{tab:Comparison of different tree methods}
summarizes the difference between these fast search tree methods.

On one hand, our LS sampling (Algorithm~\ref{alg:Tensor-input-decision-LS})
\textit{randomly} chooses split coordinates but
not the splits. It also serves as a surrogate importance
measure for each coordinate combination. The LS design of choosing
the optimal split among a fraction of all possible splits is related
to that of extra-trees \citep{geurts2006extremely}. Extra-trees differ
from classic decision trees in that instead of exhaustively
optimizing for the optimal split, random splits are drawn using randomly
selected features and the sub-optimal split among those candidates
is chosen. When the candidate possible features is 1, this builds
a totally random decision tree.

On the other hand, our BB (Algorithm \ref{alg:Tensor-input-decision-BB})
\textit{deterministically} selects those split coordinates
with the best approximation error at a certain tolerance level. BB exhibits
a trade-off between the accuracy of the best pair $(j_{1},j_{2})$
minimizing splitting criterion and search complexity but its quality
will be affected by the additional tolerance hyper-parameter, which
is not always straightforward to tune. 
BB also generalizes the classical constraint of limiting the maximal
number of features used in all possible splits: limiting the number
of features restricts the possible partitions induced by
the tree, whereas BB also ensures that the ``resolution'' of tree-induced
partition will not exceed a certain tolerance.
\newpage
\section{Algorithms for Fitting Tensor Trees}

\begin{algorithm}

\SetAlgoLined \KwResult{Train a vector-input single decision tree
regressor and predict with it}

\textbf{Function} fit($\bm{X}\in\mathbb{R}^{n\times d}$, $y\in\mathbb{R}^{n}$):\
\Begin{ Initialize root node

\For{node in tree}{ Calculate for all potential splits along each
column $j_{1}\in\{1,2,...,d\}$: Find $\arg\min_{j_{0}\in\{1,\cdots,n\},j_{1}\in\{1,\cdots,d\}}SSE(j_{0},j_{1})$\;
With the minimizer $(j_{0}^{*},j_{1}^{*})$ we split the dataset on the chosen column $j_{1}^{*}$ and splitting value $\bm{X}[j_{0}^{*},j_{1}^{*}]$, creating
left and right child nodes using data in $R_{1},R_{2}$\; }
Fit the chosen mean/CP/Tucker models at each of the leaf nodes.
}


\caption{\label{alg:Vector-input-decision}Fitting vector-input decision tree
regressor with SSE criterion \eqref{eq:SSE}.}
\end{algorithm}

\begin{algorithm}[h!]

\SetAlgoLined \KwResult{Train a tensor input single decision tree
regressor with leverage score sampling }

\textbf{Function} fit($\bm{X}\in\mathbb{R}^{n\times d_{1}\times d_{2}}$,
$\bm{y}\in\mathbb{R}^{n}$):

\Begin{ Initialize root node and sample\_rate $\tau\in(0,1]$ (when
$\tau=1$ there is no subsampling).\ Compute the variance matrix
$\bm{V}=\left(\bm{X}[:,i,j]\right)_{i,j=1}^{d_{1},d_{2}}$\; \For{node
in tree}{ Take a random subset $D\subset\{1,2,...,d_{1}\}\times\{1,2,...,d_{2}\}$
of cardinality $\tau d_{1}d_{2}$ such that the probability of selecting
dimension pair $(j_{1},j_{2})$ is proportional to $\bm{V}_{j_{1},j_{2}}$\;
Calculate for all potential splits along each dimension pair $(j_{1},j_{2})\in D$:
Find $\arg\min_{j_{0}\in\{1,\cdots,n\},(j_{1},j_{2})\in D}LAE(j_{0},j_{1},j_{2})$
or $LRE(j_{0},j_{1},j_{2})$\; 
With the minimizer $(j_{0}^{*},j_{1}^{*},j_{2}^{*})$ we split the dataset on
the chosen dimension pair $(j_{1}^{*},j_{2}^{*})$ and splitting value $\bm{X}[j_{0}^{*},j_{1}^{*},j_{2}^{*}]$, creating left and right
child nodes using data in $R_{1},R_{2}$\; } 
Fit the chosen mean/CP/Tucker models at each of the leaf nodes.
}


\caption{\label{alg:Tensor-input-decision-LS}\textbf{Method 1: Leverage score
sampling for} fitting tensor input decision tree regressor with low-rank
splitting criteria \eqref{eq:low-rank LAE} and \eqref{eq:low-rank-reg LRE}.}
\end{algorithm}

\begin{algorithm}[!t]

\SetAlgoLined \KwResult{Train a tensor input single decision tree
regressor with branch-and-bound optimization}

\textbf{Function} fit($\bm{X}\in\mathbb{R}^{n\times d_{1}\times d_{2}}$,
$\bm{y}\in\mathbb{R}^{n}$):

\Begin{ Initialize root node\; \For{node in tree}{ Take the
full set $D=\{1,2,...,d_{1}\}\times\{1,2,...,d_{2}\}$ of cardinality
$d_{1}d_{2}$\; Calculate for all potential splits along each dimension
pair $(j_{1},j_{2})\in D$: Use $best\_idx$ as $(j_{1},j_{2})$ from
the return of BnB\_minimize($LAE$,$D$,$\xi$) or BnB\_minimize($LRE$,$D$,$\xi$)\;
With the minimizer $(j_{0}^{*},j_{1}^{*},j_{2}^{*})$ we split the dataset on the chosen dimension pair $(j_{1}^{*},j_{2}^{*})$ and splitting value $\bm{X}[j_{0}^{*},j_{1}^{*},j_{2}^{*}]$,
creating left and right child nodes using data in $R_{1},R_{2}$\;
} 
Fit the chosen mean/CP/Tucker models at each of the leaf nodes.
}

\textbf{Function} BnB\_minimize($obj$,$D$,$\xi$):

\Begin{ Set $best\_obj$ to infinity; Set initial bounds $D$ for
tensor dimensions and append $D$ to queue\;

\While{queue is not empty}{ $D_{c}$ = queue.pop(0)\;

mid\_feature\_index = tuple((b{[}0{]} + b{[}1{]}) // 2 \textbf{for}
b \textbf{in} current $D_{c}$)\;Calculate $obj$ at mid\_feature\_index\;
\textbf{if} $obj<best\_obj$ \textbf{then} Update $best\_obj$, $best\_idx$\;
\For{i in range(len($D_{c}$))}{ \If{$D_{c}${[}i{]}{[}1{]}
- $D_{c}${[}i{]}{[}0{]} \textgreater{} $\text{tolerance}$ $\xi$}{
mid\_point = ($D_{c}${[}i{]}{[}0{]} + $D_{c}${[}i{]}{[}1{]}) //
2\; left\_bounds = right\_bounds = $D_{c}$\; left\_bounds{[}i{]}
= ($D_{c}${[}i{]}{[}0{]}, mid\_point)\; right\_bounds{[}i{]} = (mid\_point
+ 1, current\_bounds{[}i{]}{[}1{]})\; queue.append(left\_bounds,right\_bounds)\;
break\; } } } Return $best\_obj$, $best\_idx$ \; }


\caption{\label{alg:Tensor-input-decision-BB}\textbf{Method 2: Branch-and-bound
optimization for }fitting tensor input decision tree regressor with
low-rank splitting criteria \eqref{eq:low-rank LAE} and \eqref{eq:low-rank-reg LRE}.}
\end{algorithm}
\newpage
\section{\label{subsec:Pruning-exp} Pruning Experiment}

Here we show the effect of pruning using the tensor-input scalar-output synthetic function 
\begin{align}
    f(\bm{X})=
    \begin{cases}
     5 & \text{if }\bm{X}[:,0,1,0] \geq 0.4\\
    -1 & \text{if }\bm{X}[:,0,1,0] < 0.4 \text{ and } \bm{X}[:,2,2,0] \geq 0.65\\
    -4 & \text{otherwise}
    \label{eq:prune_fn}
    \end{cases},
\end{align}
which can be represented by a tree with three leaves. %
As shown in Figure~\ref{fig:prune}, %
we fit trees using the variance splitting criterion and various pruning parameter values $\alpha$.
First we observe that for all shown $\alpha$ values that the MSE decreases with tree's maximal depth, and that this decrease levels off once the maximal depth is two or larger. 
We also see that trees without pruning ($\alpha=0$) can grow to have many leaves, but once pruning is introduced ($\alpha>0$), trees with max\_depth $\geq 2$ are typically pruned to have three or four leaves, especially at $\alpha=0.1$ (i.e., the largest tested $\alpha$ value).
Hence, in this example, an appropriately large $\alpha$ value induces sufficient pruning for trees to have the ``correct'' number of leaves.

\begin{figure}[h]
\centering
\includegraphics[width=0.85\textwidth]{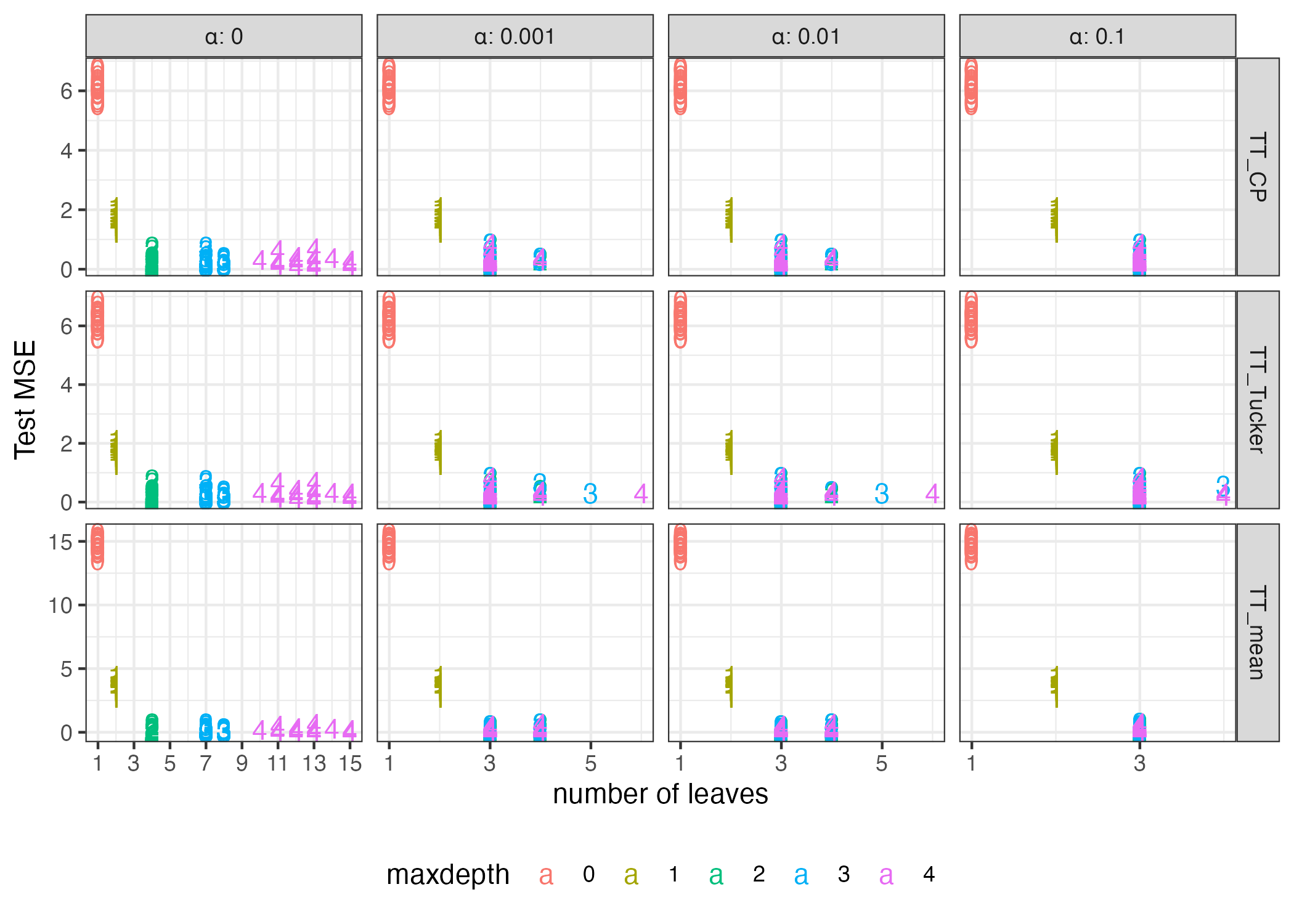}
\caption{\label{fig:prune}Out-of-sample MSE of trees trained on data generated by 
first sampling $(500,4,4,4)$ input tensor $\bm{X}$ uniformly randomly from $[0,1]$
and then evaluating the function \eqref{eq:prune_fn}  %
at the input tensor before adding i.i.d.\ Gaussian noise with mean zero and variance $0.1$. 
The figure shows 20 replicates for each combination of $\alpha$, {\small{}{}{}
max\_depth}, and leaf model.}
\end{figure}

\section{\label{subsec:Random-Forest-and} Boosted Trees}
Boosting is an ensemble learning technique that enhances model performance by combining multiple weak learners, usually decision trees, into a strong learner. This method sequentially trains base models, with each new model correcting the errors of the previous ones, thus reducing bias and variance. Pioneering boosting algorithms include AdaBoost \citep{Freund97}, which adjusts weights of misclassified instances, and Gradient Boosting \citep{Friedman01,friedman2004discussion}, which fits new models to residual errors.

Before describing boosted trees in detail, we first introduce a different tree ensemble method called Random Forests (RFs).
Random Forests \citep{Breiman01,amit1997shape} create an ensemble of independent decision trees using bootstrap samples and random feature subsets, enhancing diversity and robustness through randomization over the feature space. For a given set of $n$ data pairs $(\bm{X}_{i},y_{i})$, where $\bm{X}_{i}$ are input tensors and $y_{i}$ are scalar responses, a RF builds $B$ trees. Each tree is trained on a bootstrap sample of the data, and at each split, a random subset of features is considered as shown in Algorithm~\ref{alg:Random-Forest-with}. 
This approach reduces the variance of the model by averaging the predictions of independently trained trees, which helps mitigate overfitting. The main challenge of ensembling trees is ensuring that the trees are diverse enough to capture different patterns in the data without making inconsistent predictions. 

Although RFs combined with TT (See Algorithm \ref{alg:Random-Forest-with}) are powerful, its performance on predicting entries in a tensor output is not ideal. We focus on boosting methods, which iteratively improve the model by focusing on previously mispredicted instances. Boosting constructs trees sequentially, where each new tree corrects errors from the previous ones, leading to a strong aggregated model. Therefore, the concept behind boosting diverges fundamentally from RF. Boosting does not aggregate weak learners; instead, it averages the predictions of strong learners.  This method is particularly effective for our tensor regression tree models, as detailed in the following sections.
Gradient Boosting \citep{Friedman01} constructs an ensemble of decision trees sequentially, where each new tree is trained to correct the errors of the existing ensemble. Unlike RFs, where trees are built independently, GB involves a functional gradient descent approach to minimize a specified loss function \citep{hastie2009elements}. 

In the general case, gradient boosting involves computing ``pseudo-residuals,'' which are the negative gradients of the loss function with respect to the model's predictions. For a given loss function $L(y, F(\bm{X}))$, the pseudo-residuals at iteration $b$ are given by:
\[
r_{i}^{(b)} = -\left[ \frac{\partial L(y_{i}, F(\bm{X}_{i}))}{\partial F(\bm{X}_{i})} \right]_{F(\bm{X}) = F_{\text{GB}}^{(b-1)}(\bm{X})}.
\]
Each new tree $g_{b}$ is then fit to these pseudo-residuals, and the model is updated as follows:

\[
F_{\text{GB}}^{(b)}(\bm{X}) = F_{\text{GB}}^{(b-1)}(\bm{X}) + \eta g_{b}(\bm{X};T_{b},M_{b}),
\]
where $\eta$ is a learning rate that controls the contribution of each tree.
The new tree minimizes the loss function:
\[
\text{argmin}_{T_{b},M_{b}} \sum_{i=1}^{n} L\left(y_{i}, F_{\text{GB}}^{(b-1)}(\bm{X}_{i}) + g_{b}(\bm{X}_{i};T_{b},M_{b})\right).
\]
In our case, we focus on a simpler version of tree boosting algorithm, which uses the squared error loss (for the scalar response $y$). This approach iteratively fits trees to the residuals of the previous trees without explicitly computing pseudo-residuals.
For a set of $n$ data pairs $(\bm{X}_{i}, y_{i})$, one popular tree boosting model constructs an additive series of $m$ regression trees:
\begin{align}
F_{\text{GB}}(\bm{X}) = \sum_{b=1}^{m} g_{b}(\bm{X};T_{b},M_{b}),
\end{align}
where each tree $g_{b}(\bm{X};T_{b},M_{b})$ is fit to the residuals of the previous ensemble:
\begin{align}
r_{i}^{(b)} = y_{i} - F_{\text{GB}}^{(b-1)}(\bm{X}_{i}).
\end{align}
Each new tree is trained to minimize the squared error loss on these residuals:
\[
\text{argmin}_{T_{b},M_{b}} \sum_{i=1}^{n} \left(r_{i}^{(b)} - g_{b}(\bm{X}_{i};T_{b},M_{b})\right)^2.
\]
By sequentially fitting trees to the residuals, the forward-stagewise fitting approach builds a strong aggregated model, effectively capturing the complex patterns in the data while mitigating overfitting. This method is particularly suitable for our tensor regression tree models.

In addition, Bayesian regression trees \citep{chipman1998bayesian,denison1998bayesian,chipman2010bart,Chipman12} extend this approach by placing a prior
distribution on the space of possible trees and their hyper-parameters
like splitting coordinates and values, hence providing uncertainty
quantification over predictions. Posterior distributions of the trees
are then explored and updated using Markov chain Monte Carlo (MCMC)
methods. The Bayesian approach allows for full posterior inference,
including point and interval estimates of the unknown regression function
and the effects of potential predictors \citep{Pratola14, Pratola16,hrluo_2022e}. Bayesian Additive Regression Trees (BART), introduced by \citet{chipman2010bart}, combine Bayesian methods with boosting principles, modeling responses as a sum of regression trees and providing a probabilistic framework with uncertainty quantification.

\section{Proof of Proposition \ref{prop:(Computational-complexity-for}}\label{sec:Proof-of-Complexity}
\begin{proof}
Since each decomposition when constructing the tree structure takes no more than $N^*<\infty$ iterations, it suffices to consider per iteration complexity as stated in Lemmas \ref{lem:(CP-ALS-per-iteration} and \ref{lem:(Tucker-ALS-per-iteration}.

(1) The computational complexity at any node $t$ with $n_t$ samples is $\mathcal{O}(n_t d_1d_2)$ for finding the best split. 
In a balanced binary tree with $n$ samples in $\mathbb{R}^{d_1\times d_2}$, $i$-th level has $2^i$ nodes, each processing $\mathcal{O}(n2^{-i})$ samples. The complexity per node is then $\mathcal{O}\left(n2^{-i} \cdot d_1d_2\right)$. 
Hence, the total complexity for level $i$ is $2^i \cdot \mathcal{O}\left(n 2^{-i}\cdot d_1d_2\right) = \mathcal{O}(nd_1d_2)$. Summing over all $K$ levels, the overall complexity is $\mathcal{O}(n \cdot d_1d_2 \cdot \log k)$.

(2) Now the total complexity for level $i$ is $2^i \cdot \mathcal{O}\left(d_{1}d_{2}n2^{-i} \right)\cdot\mathcal{O}\left(d_{1}d_{2}n2^{-i} \right)$. The first factor $\left(d_{1}d_{2}n2^{-i} \right)$ comes from the search of all samples at each coordinate combination in each node at this level. The second factor $\left(d_{1}d_{2}n2^{-i} \right)$ comes from setting $R=n2^{-i}$ and $K=2$ in Lemma \ref{lem:(CP-ALS-per-iteration} since the maximum rank of tensor decomposition cannot exceed the number of sample sizes. Summing over all $\log k$ levels, the overall complexity is $\mathcal{O}(n^2d_1^2 d_2^2 \cdot \log k)$. 

(3) Now the total complexity for level $i$ is $$2^i \cdot \mathcal{O}\left(d_{1}d_{2}n2^{-i} \right) \cdot \mathcal{O}\left( n2^{-i}d_1 d_2\cdot \min( d_1 d_2, n2^{-i}) + n2^{-i}d_1 d_2\cdot\min(n2^{-i} d_1, n2^{-i} d_2, d_2, d_1) \right).$$ The first factor $\left(d_{1}d_{2}n2^{-i} \right)$ comes from the search of all samples at each coordinate combination in each node at this level. The second factor come from setting $R=n2^{-i}$ and $K=2$ in Lemma \ref{lem:(Tucker-ALS-per-iteration}. Summing over all $\log k$ levels, the overall complexity is   $$\mathcal{O} \left( \log k \cdot n^2d_1^2 d_2^2 \cdot \left( \min( d_1 d_2, n) + \min(n d_1, n d_2, d_2, d_1) \right) \right).$$ Then we can simplify the term $\min(n d_1, n d_2, d_2, d_1)$ to $\min(d_2, d_1)$ since we know that $n \geq 1$ by assumption.
\end{proof}

\section{Proof of Lemma \ref{lem:(Tucker-ALS-per-iteration}}
\begin{proof}
The argument is as follows, since the complexity bottleneck per iteration
comes from the SVD step: 
\begin{align*}
 & \mathcal{O}\left(\min\left\{n\prod_{j=1}^{K}d_{j}^{'2},n^{2}\prod_{j=1}^{K}d_{j}^{'}\right\}+\sum_{i=1}^{K}\min\left\{R^{2}\cdot d_{i}\prod_{j\neq i}d_{j}^{'2},R\cdot d_{i}^{2}\prod_{j\neq i}d_{j}^{'}\right\}\right)\\
\asymp & \mathcal{O}\left(\min\left\{n\prod_{j=1}^{K}d_{j}^{'2},n^{2}\prod_{j=1}^{K}d_{j}^{'}\right\}+\sum_{i=1}^{K}\min\left\{R^{2}\cdot\frac{d_{i}}{d_{i}^{'2}}\prod_{j=1}^{K}d_{j}^{'2},R\cdot\frac{d_{i}^{2}}{d_{i}^{'}}\prod_{j=1}^{K}d_{j}^{'}\right\}\right)\\
\asymp & \mathcal{O}\left(\min\left\{n\prod_{j=1}^{K}d_{j}^{2},n^{2}\prod_{j=1}^{K}d_{j}\right\}+\sum_{i=1}^{K}\min\left\{R^{2}\cdot\frac{1}{d_{i}}\prod_{j=1}^{K}d_{j}^{2},R\cdot d_{i}\prod_{j=1}^{K}d_{j}\right\}\right)\text{ since }d_{i}^{'}\asymp d_{i},\\
\lesssim & \mathcal{O}\left(n\prod_{j=1}^{K}d_{j}^{2}+\sum_{i=1}^{K}R\cdot d_{i}\prod_{j=1}^{K}d_{j}\right)
\end{align*}
Because $R<n$, in the preceding line the first summand dominates the second summand.
We conclude the proof by noting that $\prod_{i=1}^{K}d_{i}^{2} \leq (\max_{i}d_{i})^{2K}$.
\end{proof}

\section{\label{sec:Proof-of-Theorem-1oracle}Proof of Theorem \ref{thm:(Empirical-bounds-for}}
\begin{proof}
The proof is almost identical to the proof of Theorem 4.2 in \citet{klusowski2021universal}
since when minimizing \eqref{eq:sse-tensor} and splitting at each
interior node we indeed ``flatten'' the input tensor $\bm{X}$ along
its second and third coordinates. We consider the maximum impurity
gain $IG(t)$ for $N_{t}$ samples in node $t$ to decrease the
empirical risk for the regressor $g$ 
\begin{align*}
\hat{\mathcal{R}}_{t}(g)=\frac{1}{N_{t}}\sum_{\bm{X}^{(n)}[i,:,:]\in t}\mathcal{L}(y_{i},g(\bm{X}^{(n)}[i,:,:])) & ,
\end{align*}
for the chosen loss function $\mathcal{L}$ (e.g., \eqref{eq:sse-tensor})
in the parent node $t$.
When we split node $t$ into left and right nodes 
\begin{align*}
t_{L} & \coloneqq\left\{ \bm{X}\in t\colon\bm{X}[:,j_{1},j_{2}]\leq s\right\}, N_{t_{L}}=\#\{\bm{X}^{(n)}[i,:,:] \in t_{L}\}, \\ t_{R} & \coloneqq\left\{ \bm{X}\in t\colon\bm{X}[:,j_{1},j_{2}]>s\right\}, N_{t_{R}}=\#\{\bm{X}^{(n)}[i,:,:] \in t_{R}\}
\end{align*}
we have an analog to Lemma A.1 in \cite{klusowski2021universal} for the $\ell_{2}$ inner product
between multivariate functions $u$ and $v$. At node $t$, this
inner product is evaluated only at those observed locations $\bm{X}^{(n)}[i,:,:]\in t$:
\begin{align*}
\left\langle u,v\right\rangle _{t}  \coloneqq\frac{1}{N_{t}}\cdot\sum_{\bm{X}^{(n)}[i,:,:]\in t}u(\bm{X}^{(n)}[i,:,:])\cdot v(\bm{X}^{(n)}[i,:,:]) & \\=\frac{1}{N_{t}}\cdot\sum_{\bm{X}^{(n)}[i,:,:]\in t}\sum_{j,k}u(\bm{X}[i,j,k])\cdot v(\bm{X}[i,j,k]) \text{ for \eqref{eq:sse-tensor}}.
\end{align*}
The key idea in proving the impurity gain lower bound is to use a linear
interpolator over the domain $\mathbb{R}$ indexed by $\mathbb{N}\times\mathbb{N}$,
for the ordered statistics $\bm{X}[(1),j_{1},j_{2}]\leq\cdots\leq\bm{X}[(n),j_{1},j_{2}]$,
to control the empirical risk. To attain this goal, we define
the probability measure $\Pi(j_{1},j_{2},s)$ for splitting point
$s$ and variables $\bm{X}[:,j_{1},j_{2}]$ where its RN derivative
with respect to the counting measure on $\mathbb{N}\times\mathbb{N}$
and the Lebesgue measure on $\mathbb{R}$ (this is still defined on
$\mathbb{R}$ not $\mathbb{R}^{d_{1}\times d_{2}}$ since the split
still happens by one threshold $s$) can be written as 
\[
\frac{d\Pi(j_{1},j_{2},s)}{d(j_{1},j_{2})\times ds}\coloneqq\frac{\left|Dg_{j_{1},j_{2}}(s)\right|\sqrt{N_{t_{L}}N_{t_{R}}/N_{t}^{2}}}{\sum_{i_{1}=1}^{d_{1}}\sum_{i_{2}=1}^{d_{2}}\int_{\mathbb{R}}\left|Dg_{i_{1},i_{2}}(s')\right|ds'}
\]
\begin{eqnarray*}
Dg_{j_{1},j_{2}}(s) & = & \begin{cases}
\frac{g_{j_{1},j_{2}}\left(\bm{X}[(i+1),j_{1},j_{2}]\right)-g_{j_{1},j_{2}}\left(\bm{X}[(i),j_{1},j_{2}]\right)}{\bm{X}[(i+1),j_{1},j_{2}]-\bm{X}[(i),j_{1},j_{2}]} & \bm{X}[(i),j_{1},j_{2}]<s<\bm{X}[(i+1),j_{1},j_{2}]\\
0 & \text{otherwise}
\end{cases}.
\end{eqnarray*}
Using these bi-indexed linear interpolators $Dg_{j_{1},j_{2}}(s)$
as piece-wise density functions, %
we obtain an impurity gain formula for a tensor-input tree when splitting with the SSE criterion and fitting with mean leaf models.

\textbf{Analog of Lemma 4.1 (Tensor Input Case):}
Let \( g \in \mathcal{G}_1 \) and \( K \geq 1 \) be any depth. For any terminal node \( t \) of the tree \( T_{K-1} \) such that \( \hat{\mathcal{R}}_{t}(g_{T_{K-1}}) > \hat{\mathcal{R}}_{t}(g) \), we have
\[
IG(t) \geq \frac{\left(\hat{\mathcal{R}}_{t}(\hat{g}_{T_{K-1}}) - \hat{\mathcal{R}}_{t}(g)\right)^2}{V^2(g)}
\]
where \( V(g)=\|g\|_{TV} \) is a complexity constant dependent on \( g \). %

Next, we proceed with the proof of Theorem \ref{thm:(Empirical-bounds-for}, which is an analog of Theorem 4.2 of \citet{klusowski2021universal}. Since the output is still scalar, we use recursion for empirical risk reduction and get the total risk
   \[
   \hat{\mathcal{R}}(g_{T_{K}}) = \hat{\mathcal{R}}(g_{T_{K-1}}) - \sum_{t \in T_{K-1}} \frac{N_{t}}{N} IG(t)
   \]
   where \( t \in T_{K-1} \) means that \( t \) is a terminal node of tree \( T_{K-1} \).
By the above tensor analog of Lemma 4.1, the total impurity gain over all terminal nodes \( t \in T_{K-1} \) such that \( \hat{\mathcal{R}}_{t}(g_{T_{K-1}}) > \hat{\mathcal{R}}_{t}(g) \) is bounded by
   \[
   \sum_{t \in T_{K-1}\colon\hat{\mathcal{R}}_{t}(g_{T_{K-1}}) > \hat{\mathcal{R}}_{t}(g)} \frac{N_{t}}{N} IG(t) \geq 
   \sum_{t \in T_{K-1}\colon\hat{\mathcal{R}}_{t}(g_{T_{K-1}}) > \hat{\mathcal{R}}_{t}(g)} \frac{N_{t}}{N} \frac{\left(\hat{\mathcal{R}}_{t}(\hat{g}_{T_{K-1}}) - \hat{\mathcal{R}}_{t}(g)\right)^2}{V^2(g)}.
   \]
   Applying Jensen's inequality to the sum, we get
   \begin{align*}
   \sum_{t \in T_{K-1}\colon\hat{\mathcal{R}}_{t}(g_{T_{K-1}}) > \hat{\mathcal{R}}_{t}(g)} \frac{N_{t}}{N} \left(\hat{\mathcal{R}}_{t}(\hat{g}_{T_{K-1}}) - \hat{\mathcal{R}}_{t}(g)\right)^2 
   &\geq \left( \sum_{t \in T_{K-1}\colon\hat{\mathcal{R}}_{t}(g_{T_{K-1}}) > \hat{\mathcal{R}}_{t}(g)} \frac{N_{t}}{N} \left(\hat{\mathcal{R}}_{t}(\hat{g}_{T_{K-1}}) - \hat{\mathcal{R}}_{t}(g)\right) \right)^2 \\
   &\geq \left( \sum_{t \in T_{K-1}} \frac{N_{t}}{N} \left(\hat{\mathcal{R}}_{t}(\hat{g}_{T_{K-1}}) - \hat{\mathcal{R}}_{t}(g)\right) \right)^2.
   \end{align*}
   The global excess risk can be defined as
   \[
   E_{K} := \hat{\mathcal{R}}(g_{T_{K}}) - \hat{\mathcal{R}}(g).
   \]
   We can rewrite the total impurity gain as
   \[
   \sum_{t \in T_{K-1}} \frac{N_{t}}{N} \left(\hat{\mathcal{R}}_{t}(\hat{g}_{T_{K-1}}) - \hat{\mathcal{R}}_{t}(g)\right) = E_{K-1}.
   \]

Combining the results from steps above, we have
   \[
   E_{K} \leq E_{K-1} \left(1 - \frac{E_{K-1}}{V^2(g)}\right).
   \]
   By induction, we iterate this inequality to obtain
   \[
   E_{K} \leq \frac{V^2(g)}{K + 3}.
   \]
   Finally, substituting back into the empirical risk, we obtain
   \[
   \hat{\mathcal{R}}(g_{T_{K}}) \leq \hat{\mathcal{R}}(g) + \frac{V^2(g)}{K + 3}.
   \]
   Taking the infimum over all functions \( g \), we get 
   \[
   \hat{\mathcal{R}}(g_{T_{K}}) \leq \inf_{g\in{\mathcal{G}_1}} \left\{ \hat{\mathcal{R}}(g) + \frac{V^2(g)}{K + 3} \right\}.
   \]

Hence, we have proved an analog of Theorem 4.2 for the tensor input case, where the only difference is the statement and definitions used in the analog of Lemma 4.1. \end{proof}

\section{Algorithms for Ensembles of Trees}

\begin{table}[h!]
\centering
{\smaller
\begin{tabular}{cccc}
\toprule 
\backslashbox{Leaf model $m_{j}$}{Split criteria}  & variance  & clustering  & low-rank\tabularnewline
\midrule
\midrule 
Sample average of the $y$  & max\_depth  & max\_depth, cluster\_method  & split\_rank, max\_depth\tabularnewline
\midrule 
\multirow{3}{*}{CP regression on $y$} & max\_depth,  & max\_depth,  & split\_rank,\tabularnewline
 & CP\_reg\_rank  & cluster\_method,  & max\_depth,\tabularnewline
 &  & CP\_reg\_rank  & CP\_reg\_rank\tabularnewline
\midrule 
\multirow{3}{*}{Tucker regression on $y$} & max\_depth,  & max\_depth,  & split\_rank,\tabularnewline
 & Tucker\_reg\_rank  & cluster\_method,  & max\_depth,\tabularnewline
 &  & Tucker\_reg\_rank  & Tucker\_reg\_rank\tabularnewline
\cmidrule{1-4} %
\end{tabular}
}
$ $%

\caption{\label{tab:Supported-combination-methods:}Supported combination methods:
TensorDecisionTreeRegressor and associated parameters, and how it
generalizes existing models.}
\end{table}

\begin{algorithm}[h]

\SetAlgoLined \KwResult{Forest of decision trees $\{T_{t}\}$}
\textbf{Input:} Tensor data $\bm{X}\in\mathbb{R}^{n\times p_{1}\times p_{2}\times\ldots\times p_{d}}$,
Target values $y\in\mathbb{R}^{n}$, Number of trees $T$\; \For{$t\leftarrow1$
\KwTo $T$}{ $(X_{t},y_{t})\leftarrow\text{Bootstrap sample from }(X,y)$\;
$T_{t}\leftarrow\text{Train a decision tree on }(X_{t},y_{t})$ with random subsampling of predictors at root node\;
Store all observations in each leaf of $T_{t}$\; 
Fit the chosen mean/CP/Tucker models at each of the leaf nodes of $T_{t}$\;
}
$\bm{y}_{i}^{pred}=\sum_{t=1}^{T}F_{t}(\bm{X}^{(n)}[i,:,:])$\;

\caption{\label{alg:Random-Forest-with}Random Forest with Tensor Input}
\end{algorithm}

\begin{algorithm}[t!]

\SetAlgoLined \KwResult{Train a generalized boosting regressor
model and predict with it}

\textbf{Function} GeneralizedBoostingRegressor($\bm{X}\in\mathbb{R}^{n\times d_{1}\times d_{2}}$,
$\bm{y}\in\mathbb{R}^{n}$, $M$, $\eta$, $p_{resample}$):

\Begin{ Initialize $F_{0}(x)$ to the mean of $\bm{y}$\;

\For{$m=1$ \KwTo $M$}{ Compute residuals: $r_{im}=y_{i}-F_{m-1}(\bm{X}^{(n)}[i,:,:])$
for all $i$\;

\eIf{$p_{resample}>0$}{ Sample indices according to weights:
$\bm{I}_{resample}\sim\text{Sample}(\text{Indices},\text{Weights},\text{Size}=n\times p_{resample})$\;
$\bm{h}_{m}$ = fit($\bm{X}[\bm{I}_{resample},:,:]$, $r_{im}[\bm{I}_{resample}]$)\;
Update weights: $\text{Weights}=\text{Weights}\times\exp(\left|r_{im}\right|)$\;
Normalize weights: $\text{Weights}=\text{Weights}/\sum\text{Weights}$\;
}

\For{each sample $i$}{ $\bm{y}_{i}^{pred}=F_{m-1}(\bm{X}^{(n)}[i,:,:])+\eta\bm{h}_{m}(\bm{X}^{(n)}[i,:,:])$\;
} Update $F_{m}(x)=F_{m-1}(x)+\eta\bm{h}_{m}(x)$\; } \Return{$F_{M}(x)$}\;
}


\caption{\label{alg:Gradient-Boosted-Tensor}Generalized Boosting Regressor
with Optional AdaBoost-Like Resampling}
\end{algorithm}

\newpage
\section{Training time for fixed $N$, increasing $d_2$}
\label{sec:increased2}

Here we perform an experiment to illustrate how the training time of our and other tensor methods increases as tensor dimension $d_2$ increases under a fixed value of the sample size $N=100$ and tensor dimension $d_1=25$.
The data is generated in the same way as in the experiments in Figure~\ref{fig:MSE-time-of-nonpara-models-GP} of the main text.

Figure~\ref{fig:Scaling_ssize_d2} shows that as tensor dimension $d_2$ increases, the GP-based methods (TensorGP and GPST) have the largest increase and total training time, followed by the TT methods, then by CP, then by Tucker.
(All TT models with the same max\_depth have roughly the same training time, so for each depth we show only TT\_CP without pruning. Also, we show only max\_depth=3,6 to reduce visual clutter.)
These results complement and are similar in spirit to the findings from our analysis of Figures \ref{fig:MSE-time-of-nonpara-models-GP} and \ref{fig:Scaling_ssize} in the main text.

\begin{figure}[H]
\centering

\includegraphics[width=\textwidth]{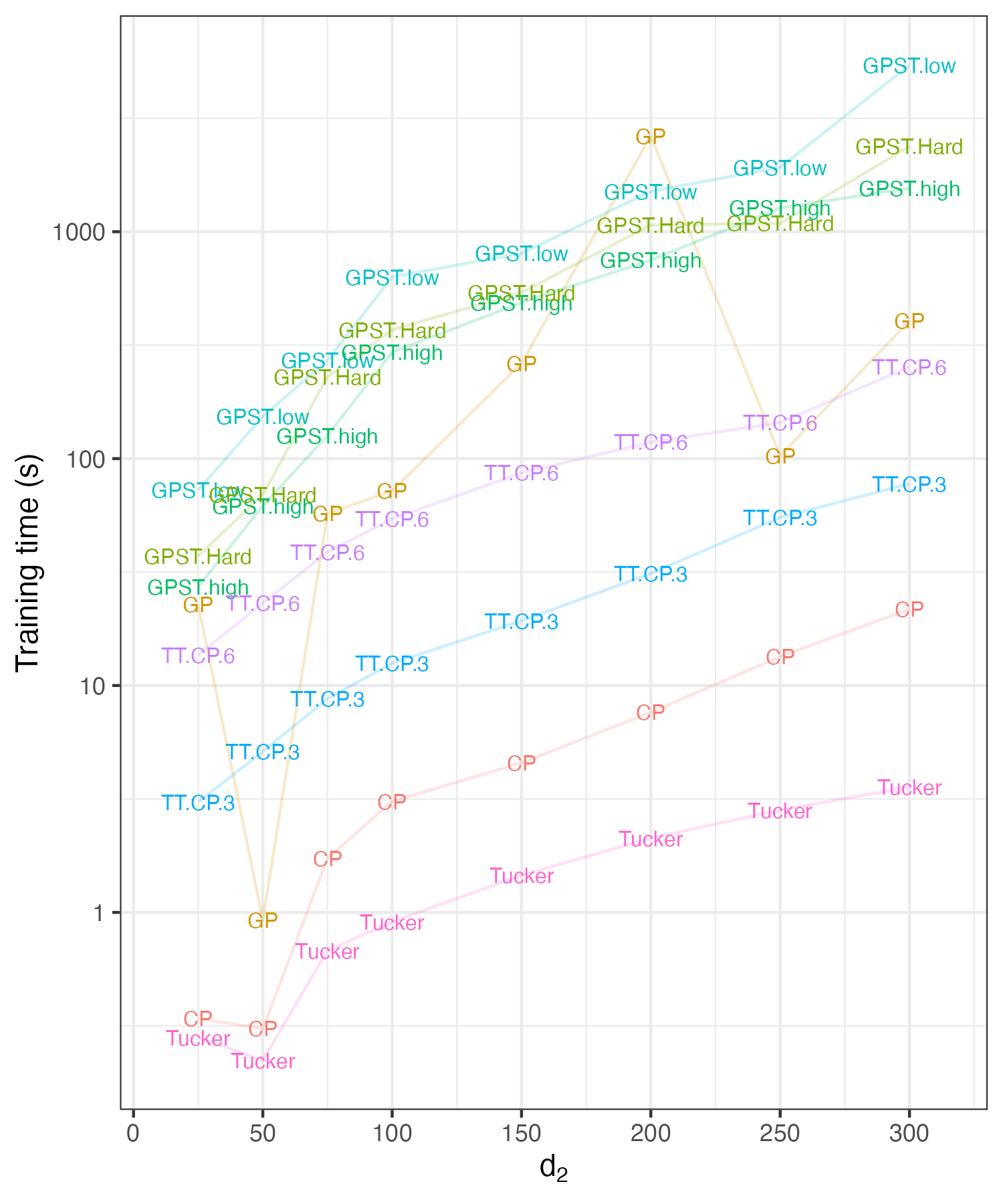}

\caption{\label{fig:Scaling_ssize_d2}  In-sample and out-of-sample training
time (seconds) against the second tensor dimension $d_2$ for the experiments
in Figure~\ref{fig:MSE-time-of-nonpara-models-GP}. 
Here the sample size is $N=100$ and the first tensor dimension is $d_1=25$. For TT models, we experiment with max\_depth=3,6 and select $\alpha=0.5$ in \eqref{eq:tl_complexity}
and perform only one fit on an AMD Ryzen 5 3600XT 6-core, 12-thread processor. 
}
\end{figure}

\section{Out-of-sample MSE with entrywise input noise}

Here we perform an experiment to illustrate the robustness of our and other tensor methods to independent entrywise noise in the input tensors.
The experiment includes various levels of input noise; we set the standard deviation of each entry's noise equal to the value \texttt{input\_noise\_sd} times the input tensor's standard deviation.

Figure~\ref{fig:inputnoise} shows that TT\_CP and TT\_Tucker with pruning are robust to \texttt{input\_noise\_sd} up to $0.3$, regardless of the max\_depth. We also see that without pruning, TT\_CP and TT\_Tucker perform dramatically worse as depth increases, which indicates that the pruning procedure is correctly decreasing the depth of the tree to mitigate overfitting the input noise.
Once \texttt{input\_noise\_sd} equals 1, all methods (except for TT\_mean with pruning, which has a large test MSE even with no input noise) display a noticeable increase in test MSE, indicating that TT\_CP and TT\_Tucker with pruning are among the most robust methods tested in this experiment.

\begin{figure}[H]
\centering
\includegraphics[width=\textwidth]{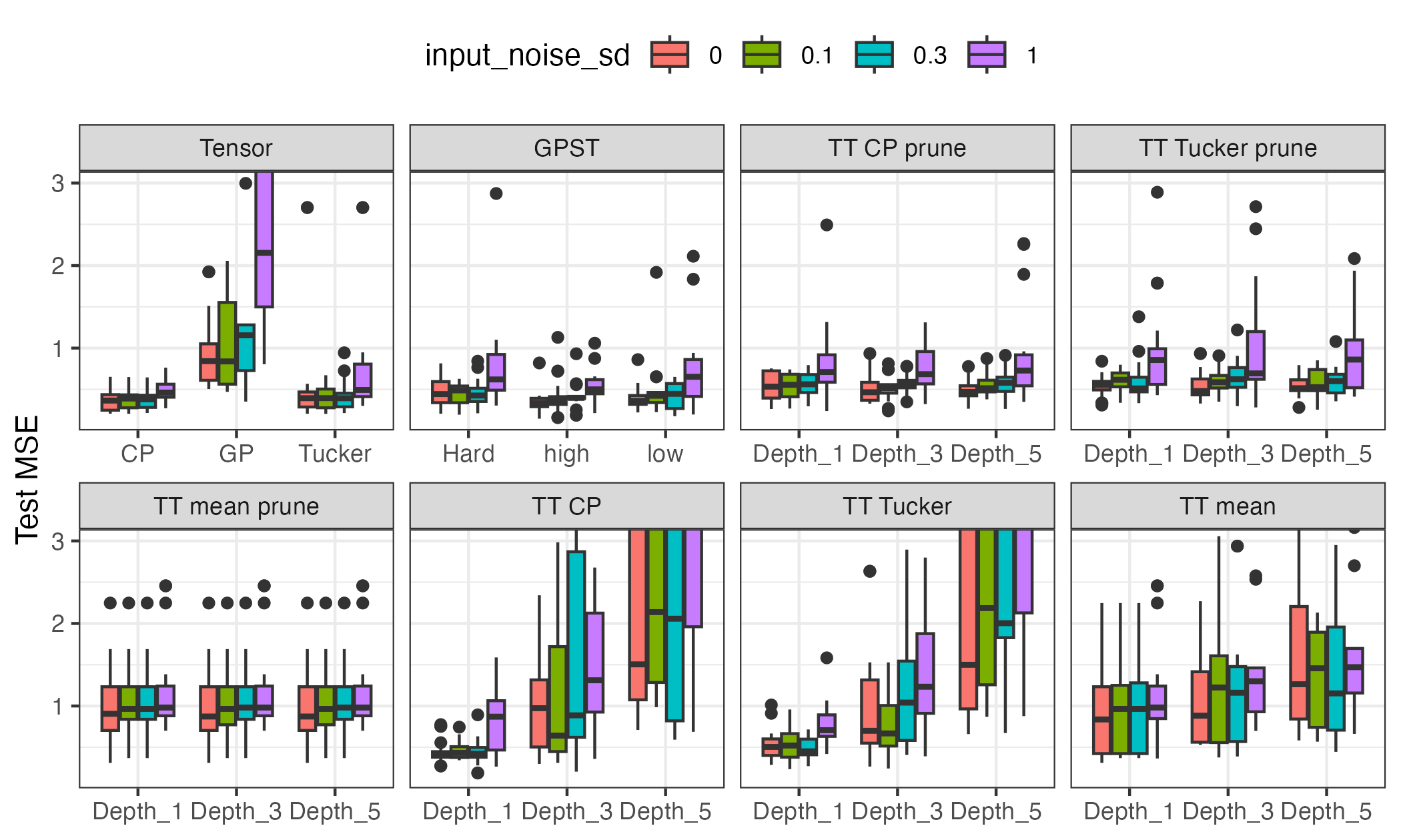}
\caption{\label{fig:inputnoise} Test MSE for the experiments
in Figure~\ref{fig:MSE-time-of-nonpara-models-GP} for input tensors of size $(N, d_1, d_2) = (100, 25, 25)$.
Zero-mean Gaussian noise is added entrywise to the input tensors so that each model is provided only noisy versions of the input tensors. The noise standard deviation is \texttt{input\_noise\_sd} multiplied by the input's standard deviation.
For TT models, we experiment with max\_depth=1,3,5 and select $\alpha=0.5$ in \eqref{eq:tl_complexity}
and perform 20 replicates. 
}
\end{figure}

\newpage
\section{MSE comparison to existing forest methods}
\label{sec:forestcomparison}

This section explores the performance of regression models --- Fréchet
Random Forest (FRF, bootstrap ratio = 0.75 and other parameters are default
values), global Fréchet regression (GRF), and Tensor Tree (TT) with gradient boosting (GB) --- on synthetic
datasets.
For code implementation, pyfrechet
(\url{https://github.com/matthieubulte/pyfrechet}) by \citet{bulte2024medoid} only supports 2-mode tensor
as a Fréchet space input and scalar outputs, since their focus is spherical
data. FrechForest (CRAN package, last maintained 2019, used by
\citet{capitaine2024frechet,qiu2024random}) supports mainly curves as
Fréchet space input and scalar output, but also supports 2- and 3-mode tensors. 
In contrast, our open-source
implementation supports up to 4-mode tensor inputs
and outputs, with improved efficiency, significantly extending the
usability of the existing tensor nonparametric regression software.

The first experiment explores different noise levels and
number of trees in the ensemble methods. 
The target variable is $y=2\cdot X[:,0]\cdot X[:,2]+3\cdot X[:,1]\cdot X[:,2]\cdot X[:,3]+\varepsilon$,
where $\varepsilon$ is Gaussian noise with a configurable noise variance.
The input $X$ is an $n=1000,d=5$ matrix with i.i.d.\ standard uniform entries, and we embed $X$ as a 3-tensor (adding an all-zero
third dimension to be compatible with TT implementation) tailored
to simulate structured data scenarios. 
For the parameter settings of the tested methods,
pyfrechet's FRF has no option for limiting each tree's maximum depth, so we set the minimum number of observations in each leaf node to two.
For FrechForest's FRF we use the default parameter settings other than the number of trees to grow.
The TT models use max\_depth=4, split\_rank=4 and full exploration rate
$\alpha=1$, and we also execute TTentrywise GB ensembles (with maximum depth 4 and at least two observations per leaf node) in order to compete with the two mentioned ensemble methods.
(If desired, Algorithm~\ref{alg:Random-Forest-with} shows how to train a random forest using TT.)
The models are trained and evaluated over 10 random seeds to ensure
robustness in MSE (on a held-out
test data) and time measurements shown in Table~\ref{tbl:forest}. 
Comparing single-tree models, 
TT has competitive performance and fast fits compared to either FRF implementation with one tree.
We expect that TT will outperform FRF for higher mode tensor inputs for a similarly simple input-response relationship, since higher mode tensors require more samples to have accurate estimates of Fréchet distances.
Comparing ensemble methods, 
TT GB with 5 or 50 trees has larger test MSEs than FRF with the same number of trees.
However, FRF seems to improve very little past 50 trees, whereas TT GB with 500 trees has a much smaller test MSE than FRF with any of the tested number of trees.
Thus, of all tested ensemble methods, TT GB is able to provide by far the smallest test MSE for this scenario.

{}
\begin{table}
{ }%
\centering
\begin{tabular}{rrlll}
\toprule 
{method } & \# trees & {noise variance 0 } & {noise variance 0.01 } & {noise variance 0.1}\tabularnewline
\midrule
\midrule 
{TT} & 1 & {0.108 (1s) } & {0.089 (1s) } & {0.138 (1s)}\tabularnewline
\midrule 
{TTentrywise GB} & $5$  & {1.136 (12s) } & {1.135 (11s)} & {1.138 (9s)}\tabularnewline
\midrule 
{TTentrywise GB} & $50$ & {0.510 (33s)} & {0.521 (41s)} & {0.516 (42s)}\tabularnewline
\midrule 
{TTentrywise GB} & $250$ & {0.039 (196s) } & {0.038 (221s)} & {0.038 (204s)}\tabularnewline
\midrule 
{TTentrywise GB} & $500$ & {0.015 (588s)} & {0.014 (574s)} & {0.019 (557s)}\tabularnewline
\midrule 
{GFR} & n/a & {0.258 (1s) } & {0.194 (1s) } & {0.199 (2s)}\tabularnewline
\midrule
\midrule 
{pyfrechet FRF} & $1$ & {0.098 (4s) } & {0.077 (4s) } & {0.114 (5s)}\tabularnewline
\midrule 
{pyfrechet FRF} & $50$ & {0.055 (24s) } & {0.045 (19s) } & {0.067 (15s)}\tabularnewline
\midrule 
{pyfrechet FRF} & $250$ & {0.052 (111s) } & {0.043 (109s) } & {0.066 (114s)}\tabularnewline
\midrule 
{pyfrechet FRF} & $500$ & {0.047 (246s) } & {0.049 (245s) } & {0.068 (241s)}\tabularnewline
\midrule
\midrule 
{FrechForest FRF} & $1$ & {0.145 (2s) } & {0.150 (2s) } & {0.176 (2s)}\tabularnewline
\midrule 
{FrechForest FRF} & $5$ & {0.054 (4s) } & {0.055 (4s) } & {0.071 (4s)}\tabularnewline
\midrule 
{FrechForest FRF} & $50$ & {0.033 (23s) } & {0.033 (23s) } & {0.044 (23s)}\tabularnewline
\midrule 
{FrechForest FRF} & $250$ & {0.031 (108s) } & {0.031 (108s) } & {0.042 (108s)}\tabularnewline
\midrule 
{FrechForest FRF} & $500$ & {0.031 (210s) } & {0.030 (209s) } & {0.042 (210s)}\tabularnewline
\bottomrule
\end{tabular}
\caption{Out-of-sample MSEs and fit-predict time (seconds) for each forest method with different number of trees in the ensemble method.\label{tbl:forest}}
\end{table}

The second experiment compares TT with one tree against (FrechForest's) FRF for higher mode tensor inputs and a more complex input-response relationship.
To be compatible with FrechForest, we reduce the three channels to one in the simulation code in the experiments in Figure~\ref{fig:MSE-time-of-nonpara-models-GP} of the main text in order to reduce the number of modes of the input tensors from 4 to 3. This 3-tensor input has dimensions $n=750$ and $d_1=d_2=25$. The test MSE is computed on a held-out set with $n=250$. Figure~\ref{fig:FRF} shows that with pruning, TT\_CP and TT\_Tucker with max\_depth 1 have a similar test MSE to FRF with at least five trees but fit much faster (FRF with 250 trees took around 11 minutes to fit). These two TT methods also have about half of the test MSE \emph{and} half of the training time than FRF with one tree.
In this scenario, TT with one tree is faster to fit and has competitive performance to FRF with any number of trees.

\begin{figure}[H]
\centering
\includegraphics[width=.8\textwidth]{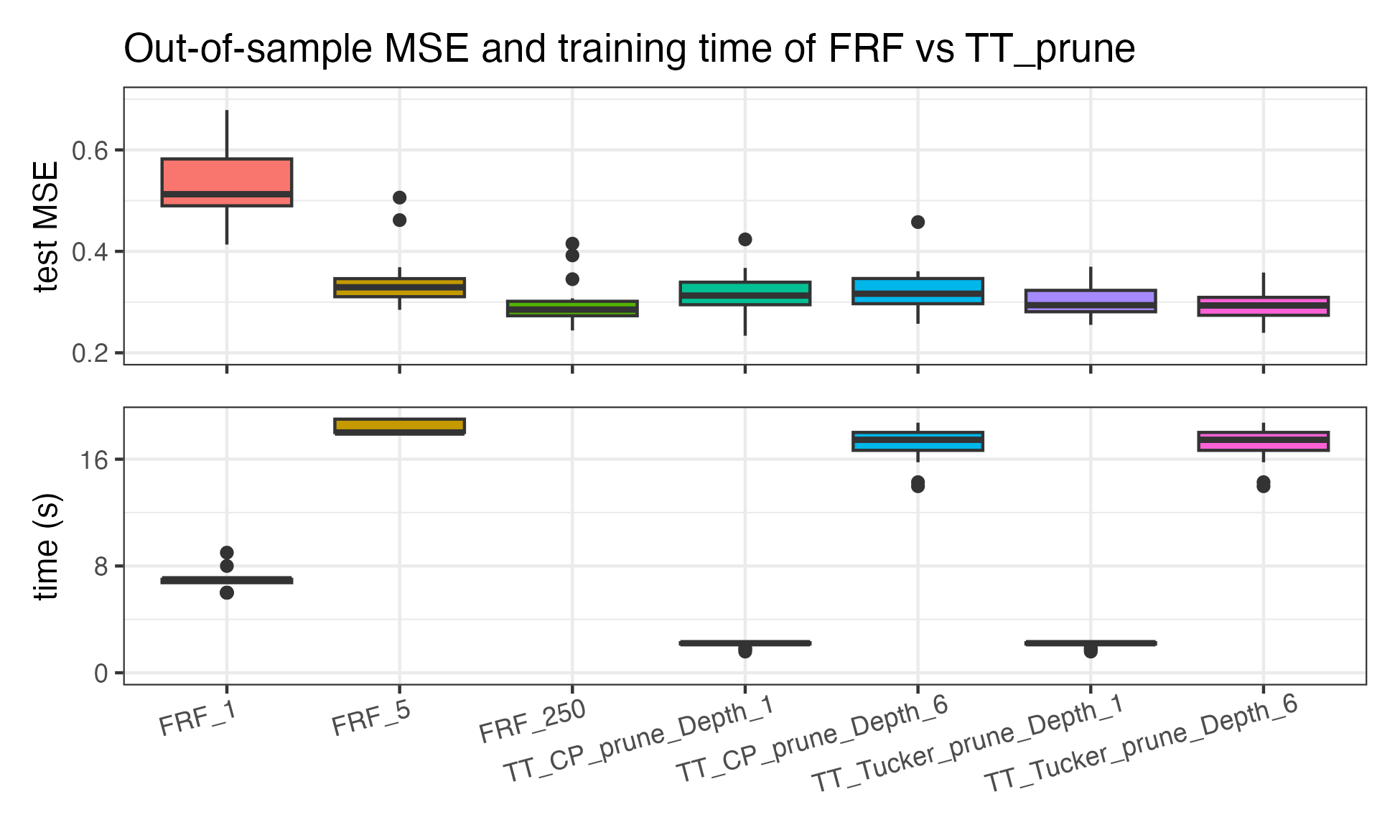}
\caption{\label{fig:FRF} Test MSE for modified experiments
in Figure~\ref{fig:MSE-time-of-nonpara-models-GP} for input tensors of size $(N, d_1, d_2) = (1000, 25, 25)$. 
The number after the ``Depth\_'' label indicates the max\_depth of the TT method.
The number after the ``FRF\_'' label indicates the number of trees.
All models are run with 20 replicates. 
Each FRF with 250 trees took about 11 minutes (660 seconds) to fit.
}
\end{figure}

\section{RPE results comparing the \texttt{rrr} method to TT}
\label{sec:RPErrr}

\begin{table}
{\scriptsize{}\centering }{\scriptsize\par}

{\scriptsize{}}\subfloat[{{Linear. $(d_{1},d_{2})=(3,4)$. $p_{1}=15$. $\bm{y}[:,i]=\protect\begin{cases}
\bm{X}[:,0,1]+\bm{X}[:,1,1] & \text{if }i\mod3=0\protect\\
\bm{X}[:,1,1]+\bm{X}[:,2,0] & \text{if }i\mod3=1\protect\\
\bm{X}[:,2,2]+\bm{X}[:,0,3] & \text{if }i\mod3=2
\protect\end{cases}$.}}]{{\scriptsize{}{}{}{}\centering }%
\begin{tabular}{ccccccc}
\toprule 
{\scriptsize{}{}{}{}Rank } & {\scriptsize{}{}{}{}rrr } & {\scriptsize{}{}{}{}rrrBayes } & {\scriptsize{}{}{}{}TTentrywise\_CP } & {\scriptsize{}{}{}{}TTentrywise\_Tucker } & {\scriptsize{}{}{}{}TTlowrank\_CP } & {\scriptsize{}{}{}{}TTlowrank\_Tucker}\tabularnewline
\midrule 
{\scriptsize{}{}{}{}2 } & \textbf{\scriptsize{}143 \textpm{} 9}{\scriptsize{} } & \textbf{\scriptsize{}136 \textpm{} 9}{\scriptsize{} } & {\scriptsize{}218 \textpm{} 3 } & {\scriptsize{}223 \textpm{} 4 } & {\scriptsize{}172 \textpm{} 2 } & {\scriptsize{}3581 \textpm{} 79}\tabularnewline
{\scriptsize{}{}{}{}3 } & {\scriptsize{}172 \textpm{} 9 } & {\scriptsize{}162 \textpm{} 7 } & {\scriptsize{}219 \textpm{} 3 } & {\scriptsize{}219 \textpm{} 3 } & \textbf{\scriptsize{}132 \textpm{} 2}{\scriptsize{} } & {\scriptsize{}3633 \textpm{} 84}\tabularnewline
{\scriptsize{}{}{}{}4 } & {\scriptsize{}177 \textpm{} 10 } & {\scriptsize{}171 \textpm{} 9 } & {\scriptsize{}219 \textpm{} 3 } & {\scriptsize{}219 \textpm{} 3 } & \textbf{\scriptsize{}134 \textpm{} 2}{\scriptsize{} } & {\scriptsize{}3678 \textpm{} 90}\tabularnewline
{\scriptsize{}{}{}{}5 } & {\scriptsize{}178 \textpm{} 5 } & {\scriptsize{}170 \textpm{} 5 } & {\scriptsize{}216 \textpm{} 3 } & {\scriptsize{}216 \textpm{} 3 } & \textbf{\scriptsize{}133 \textpm{} 2}{\scriptsize{} } & {\scriptsize{}3628 \textpm{} 92}\tabularnewline
{\scriptsize{}{}{}{}6 } & {\scriptsize{}174 \textpm{} 7 } & {\scriptsize{}207 \textpm{} 24 } & {\scriptsize{}219 \textpm{} 4 } & {\scriptsize{}219 \textpm{} 4 } & \textbf{\scriptsize{}137 \textpm{} 2}{\scriptsize{} } & {\scriptsize{}3715 \textpm{} 106}\tabularnewline
{\scriptsize{}{}{}{}7 } & {\scriptsize{}260 \textpm{} 17 } & {\scriptsize{}319 \textpm{} 24 } & {\scriptsize{}219 \textpm{} 3 } & {\scriptsize{}219 \textpm{} 3 } & \textbf{\scriptsize{}138 \textpm{} 2}{\scriptsize{} } & {\scriptsize{}3659 \textpm{} 85}\tabularnewline
\bottomrule
\end{tabular}{\scriptsize{}{}{}{}}{\scriptsize\par}

{\scriptsize{} }{\scriptsize\par}

{\scriptsize{}}{\scriptsize\par}}{\scriptsize\par}

{\scriptsize{}}\subfloat[{{Non-linear. $(d_{1},d_{2})=(3,4)$. $p_{1}=6$. $\bm{y}[:,i]=\sin(\bm{X}[:,i\mod3,i\mod4])$.}}]{{\scriptsize{}{}{}{}\centering }%
\begin{tabular}{ccccccc}
\toprule 
{\scriptsize{}{}{}{}Rank } & {\scriptsize{}{}{}{}rrr } & {\scriptsize{}{}{}{}rrrBayes } & {\scriptsize{}{}{}{}TTentrywise\_CP } & {\scriptsize{}{}{}{}TTentrywise\_Tucker } & {\scriptsize{}{}{}{}TTlowrank\_CP } & {\scriptsize{}{}{}{}TTlowrank\_Tucker}\tabularnewline
\midrule 
{\scriptsize{}{}2 } & {\scriptsize{}219 \textpm{} 5 } & \textbf{\scriptsize{}209 \textpm{} 3}{\scriptsize{} } & \textbf{\scriptsize{}205 \textpm{} 2}{\scriptsize{} } & \textbf{\scriptsize{}206 \textpm{} 3}{\scriptsize{} } & {\scriptsize{}297 \textpm{} 2 } & {\scriptsize{}1249 \textpm{} 18}\tabularnewline
{\scriptsize{}{}3 } & {\scriptsize{}231 \textpm{} 8 } & \textbf{\scriptsize{}211 \textpm{} 4}{\scriptsize{} } & \textbf{\scriptsize{}205 \textpm{} 2}{\scriptsize{} } & \textbf{\scriptsize{}206 \textpm{} 3}{\scriptsize{} } & {\scriptsize{}260 \textpm{} 2 } & {\scriptsize{}1265 \textpm{} 25}\tabularnewline
{\scriptsize{}{}4 } & {\scriptsize{}240 \textpm{} 9 } & {\scriptsize{}218 \textpm{} 6 } & \textbf{\scriptsize{}205 \textpm{} 2}{\scriptsize{} } & \textbf{\scriptsize{}206 \textpm{} 3}{\scriptsize{} } & {\scriptsize{}231 \textpm{} 3 } & {\scriptsize{}1290 \textpm{} 27}\tabularnewline
{\scriptsize{}{}5 } & {\scriptsize{}241 \textpm{} 11 } & {\scriptsize{}222 \textpm{} 8 } & {\scriptsize{}205 \textpm{} 2 } & {\scriptsize{}206 \textpm{} 3 } & \textbf{\scriptsize{}200 \textpm{} 2}{\scriptsize{} } & {\scriptsize{}1279 \textpm{} 12}\tabularnewline
{\scriptsize{}{}6 } & {\scriptsize{}257 \textpm{} 8 } & {\scriptsize{}240 \textpm{} 8 } & {\scriptsize{}203 \textpm{} 2 } & {\scriptsize{}204 \textpm{} 2 } & \textbf{\scriptsize{}169 \textpm{} 2}{\scriptsize{} } & {\scriptsize{}1328 \textpm{} 23}\tabularnewline
{\scriptsize{}{}7 } & {\scriptsize{}288 \textpm{} 31 } & {\scriptsize{}276 \textpm{} 32 } & \textbf{\scriptsize{}205 \textpm{} 2}{\scriptsize{} } & \textbf{\scriptsize{}206 \textpm{} 3}{\scriptsize{} } & {\scriptsize{}261 \textpm{} 34 } & {\scriptsize{}1332 \textpm{} 38}\tabularnewline
\bottomrule
\end{tabular}{\scriptsize{}{}{}{}}{\scriptsize\par}

{\scriptsize{}}{\scriptsize\par}}{\scriptsize\par}

{\scriptsize{}}\subfloat[{{\small{}Exact CP (4). $(d_{1},d_{2})=(12,6)$. $p_{1}=7$. $\bm{y}[:,i]=\tilde{\bm{X}}[:,0,1]^{2}-\bm{X}[:,0,0]$({{\small{}{}$\tilde{\bm{X}}$
is rank-4 CP reconstructed $\bm{X}$}}{}{}).}}]{{\scriptsize{}{}{}{}\centering }%
\begin{tabular}{ccccccc}
\toprule 
{\scriptsize{}{}{}{}Rank } & {\scriptsize{}{}{}{}rrr } & {\scriptsize{}{}{}{}rrrBayes } & {\scriptsize{}{}{}{}TTentrywise\_CP } & {\scriptsize{}{}{}{}TTentrywise\_Tucker } & {\scriptsize{}{}{}{}TTlowrank\_CP } & {\scriptsize{}{}{}{}TTlowrank\_Tucker}\tabularnewline
\midrule 
{\scriptsize{}{}{}{}2 } & {\scriptsize{}342 \textpm{} 69 } & {\scriptsize{}326 \textpm{} 65 } & {\scriptsize{}455 \textpm{} 36 } & {\scriptsize{}464 \textpm{} 35 } & {\scriptsize{}392 \textpm{} 43 } & \textbf{\scriptsize{}204 \textpm{} 37}\tabularnewline
{\scriptsize{}{}{}{}3 } & {\scriptsize{}433 \textpm{} 91 } & {\scriptsize{}404 \textpm{} 86 } & {\scriptsize{}456 \textpm{} 36 } & {\scriptsize{}465 \textpm{} 34 } & {\scriptsize{}395 \textpm{} 42 } & \textbf{\scriptsize{}204 \textpm{} 37}\tabularnewline
{\scriptsize{}{}{}{}4 } & {\scriptsize{}531 \textpm{} 106 } & {\scriptsize{}490 \textpm{} 97 } & {\scriptsize{}456 \textpm{} 36 } & {\scriptsize{}466 \textpm{} 34 } & {\scriptsize{}398 \textpm{} 42 } & \textbf{\scriptsize{}205 \textpm{} 38}\tabularnewline
{\scriptsize{}{}{}{}5 } & {\scriptsize{}609 \textpm{} 115 } & {\scriptsize{}553 \textpm{} 107 } & {\scriptsize{}456 \textpm{} 36 } & {\scriptsize{}468 \textpm{} 34 } & {\scriptsize{}400 \textpm{} 42 } & \textbf{\scriptsize{}204 \textpm{} 37}\tabularnewline
{\scriptsize{}{}{}{}6 } & {\scriptsize{}666 \textpm{} 127 } & {\scriptsize{}596 \textpm{} 117 } & {\scriptsize{}456 \textpm{} 36 } & {\scriptsize{}468 \textpm{} 34 } & {\scriptsize{}403 \textpm{} 42 } & \textbf{\scriptsize{}206 \textpm{} 38}\tabularnewline
{\scriptsize{}{}{}{}7 } & {\scriptsize{}666 \textpm{} 127 } & {\scriptsize{}596 \textpm{} 117 } & {\scriptsize{}456 \textpm{} 36 } & {\scriptsize{}468 \textpm{} 34 } & {\scriptsize{}406 \textpm{} 42 } & \textbf{\scriptsize{}205 \textpm{} 38}\tabularnewline
\bottomrule
\end{tabular}{\scriptsize{}{}{}{}}{\scriptsize\par}

{\scriptsize{}}{\scriptsize\par}}{\scriptsize\par}

{\scriptsize{}}\subfloat[{{\small{}Exact Tucker (4, 4, 4). $(d_{1},d_{2})=(12,6)$. $p_{1}=7$.
$\bm{y}[:,i]=\tilde{\bm{X}}[:,0,1]^{2}-\bm{X}[:,0,0]$ ({{\small{}{}$\tilde{\bm{X}}$
is rank-(4,4,4) or rank-(4,4) Tucker reconstructed $\bm{X}$}}{}{}).}}]{{\scriptsize{}{}{}{}\centering }%
\begin{tabular}{ccccccc}
\toprule 
{\scriptsize{}{}{}{}Rank } & {\scriptsize{}{}{}{}rrr } & {\scriptsize{}{}{}{}rrrBayes } & {\scriptsize{}{}{}{}TTentrywise\_CP } & {\scriptsize{}{}{}{}TTentrywise\_Tucker } & {\scriptsize{}{}{}{}TTlowrank\_CP } & {\scriptsize{}{}{}{}TTlowrank\_Tucker}\tabularnewline
\midrule 
{\scriptsize{}{}{}{}2 } & {\scriptsize{}376 \textpm{} 114 } & {\scriptsize{}335 \textpm{} 98 } & {\scriptsize{}452 \textpm{} 42 } & {\scriptsize{}463 \textpm{} 41 } & {\scriptsize{}395 \textpm{} 47 } & \textbf{\scriptsize{}208 \textpm{} 57}\tabularnewline
{\scriptsize{}{}{}{}3 } & {\scriptsize{}480 \textpm{} 134 } & {\scriptsize{}427 \textpm{} 122 } & {\scriptsize{}451 \textpm{} 41 } & {\scriptsize{}464 \textpm{} 42 } & {\scriptsize{}395 \textpm{} 46 } & \textbf{\scriptsize{}208 \textpm{} 57}\tabularnewline
{\scriptsize{}{}{}{}4 } & {\scriptsize{}537 \textpm{} 153 } & {\scriptsize{}491 \textpm{} 141 } & {\scriptsize{}451 \textpm{} 41 } & {\scriptsize{}464 \textpm{} 42 } & {\scriptsize{}397 \textpm{} 46 } & \textbf{\scriptsize{}209 \textpm{} 57}\tabularnewline
{\scriptsize{}{}{}{}5 } & {\scriptsize{}609 \textpm{} 177 } & {\scriptsize{}545 \textpm{} 153 } & {\scriptsize{}451 \textpm{} 41 } & {\scriptsize{}464 \textpm{} 42 } & {\scriptsize{}400 \textpm{} 46 } & \textbf{\scriptsize{}206 \textpm{} 56}\tabularnewline
{\scriptsize{}{}{}{}6 } & {\scriptsize{}668 \textpm{} 189 } & {\scriptsize{}591 \textpm{} 166 } & {\scriptsize{}451 \textpm{} 41 } & {\scriptsize{}464 \textpm{} 42 } & {\scriptsize{}403 \textpm{} 46 } & \textbf{\scriptsize{}206 \textpm{} 56}\tabularnewline
{\scriptsize{}{}{}{}7 } & {\scriptsize{}668 \textpm{} 189 } & {\scriptsize{}591 \textpm{} 166 } & {\scriptsize{}451 \textpm{} 41 } & {\scriptsize{}464 \textpm{} 42 } & {\scriptsize{}406 \textpm{} 46 } & \textbf{\scriptsize{}209 \textpm{} 57}\tabularnewline
\bottomrule
\end{tabular}{\scriptsize{}{}{}{}}{\scriptsize\par}

{\scriptsize{}}{\scriptsize\par}}\caption{\label{fig:Tensor-output-regression-compari} Out-of-sample RPE (multiplied
by 1000 to highlight decimal differences) of various tensor-output
regression models trained on synthetic data. 
Bold font indicates the smallest RPE of a given rank and signal; we
use $\pm$ to represent the standard error over 10 repeats.}
\end{table}

\section{Regression coefficients on an EEG dataset}
\label{sec:EEGcoeff}

\begin{figure}[H]
\centering
\includegraphics[width=0.22\textwidth]{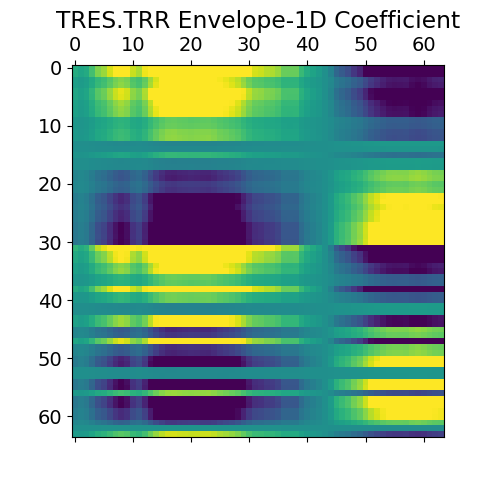}
\includegraphics[width=0.22\textwidth]{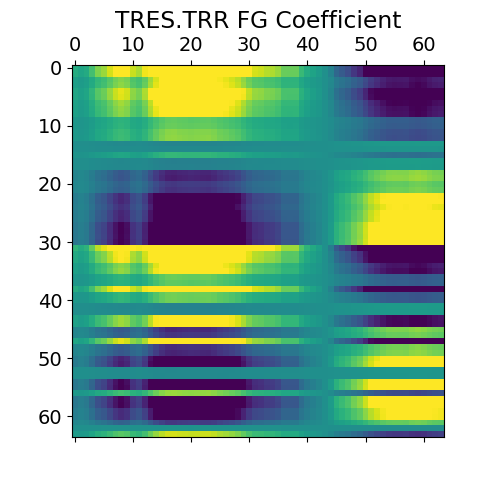}
\includegraphics[width=0.22\textwidth]{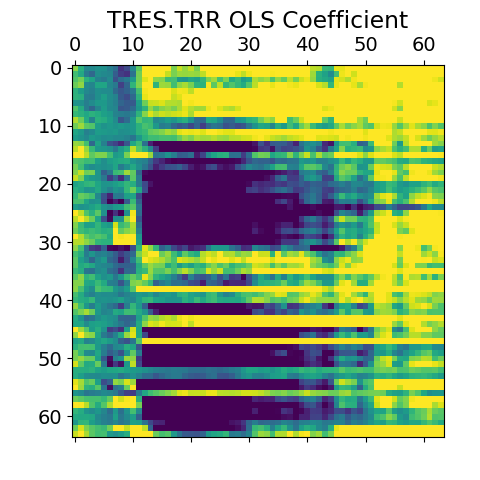}
\includegraphics[width=0.22\textwidth]{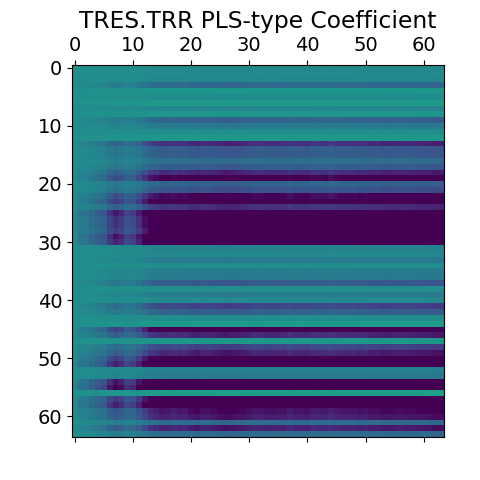}

\includegraphics[width=0.22\textwidth]{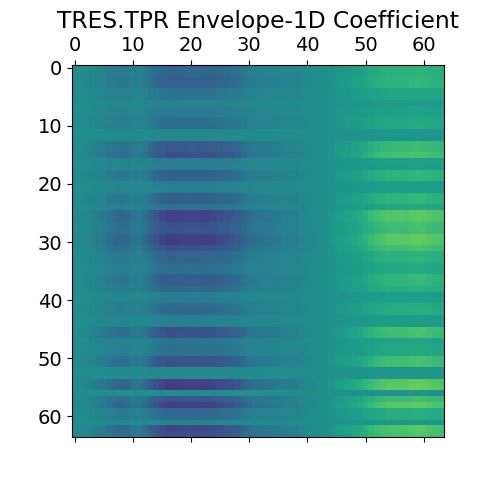}
\includegraphics[width=0.22\textwidth]{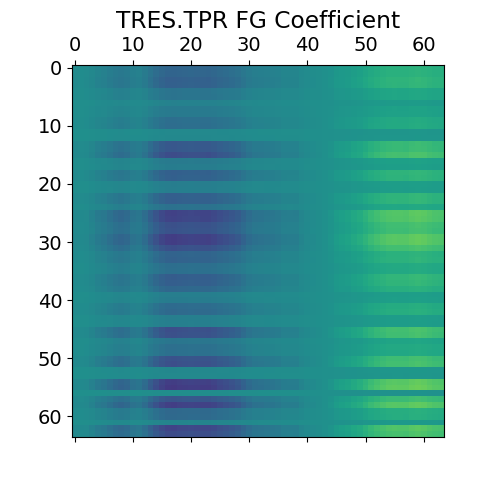}
\includegraphics[width=0.22\textwidth]{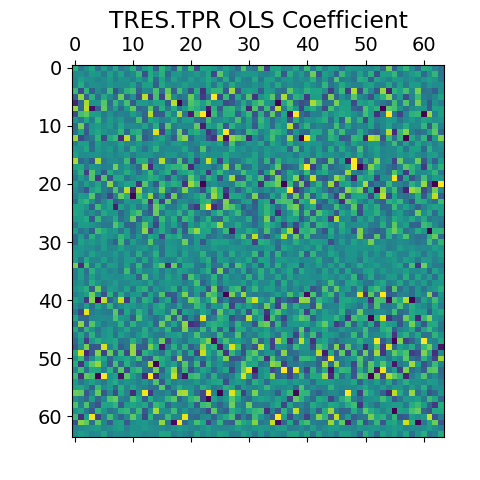}
\includegraphics[width=0.22\textwidth]{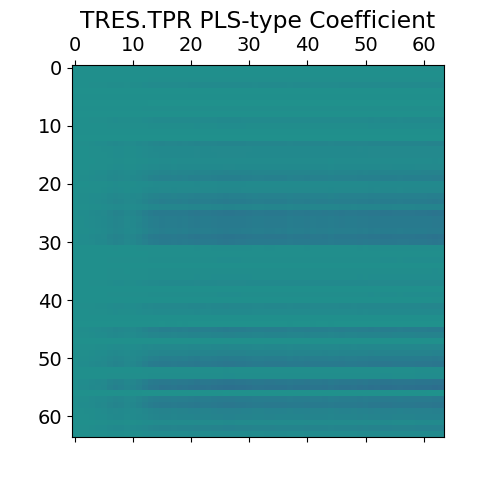}

\includegraphics[width=0.20\textwidth]{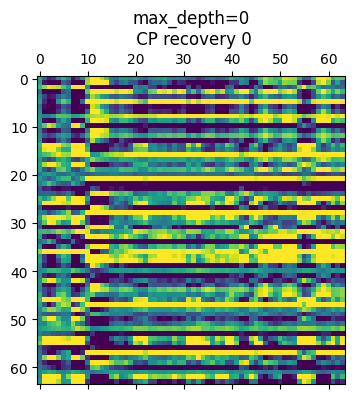}
\hspace{.2cm}
\includegraphics[width=0.20\textwidth]{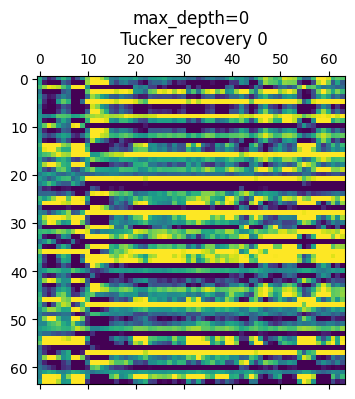}

\includegraphics[width=0.95\textwidth]{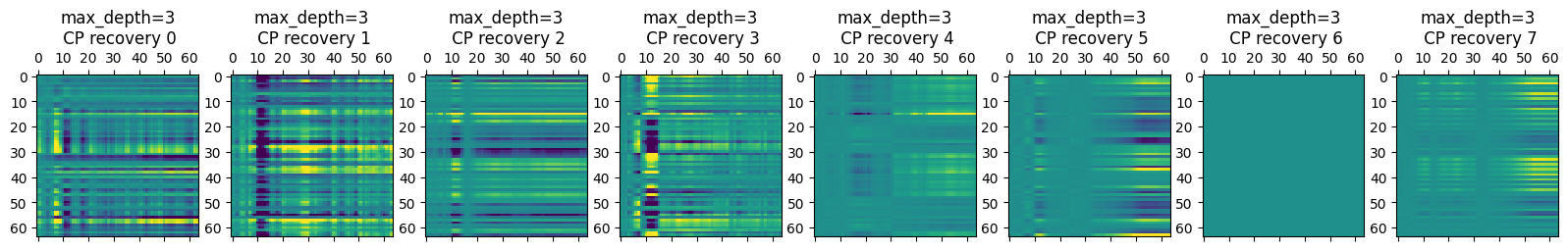}\\
\includegraphics[width=0.95\textwidth]{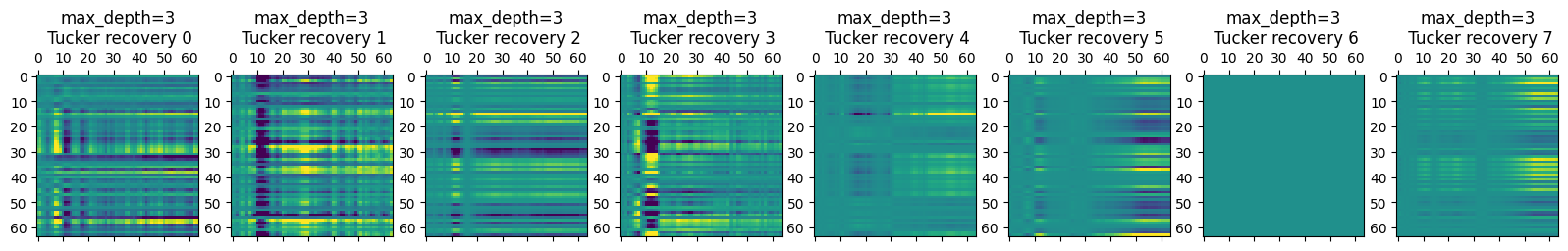}

\caption{\label{fig:Comparison-EEG}Tensor-input decision tree regression
coefficients on an EEG dataset in \citet{zeng2021tres}. 
First row: {TRES.TRR(1D/FG/OLS/PLS) tensor-response models (following the example in $\mathtt{TRES}$) 
Second row: TRES.TPR(1D/FG/OLS/PLS) tensor-input models.}
Third row: coefficients from CP ($R=5$), Tucker ($R=5$),
Fourth row: coefficients
from a tensor-input tree model with depth 3 and leaf node
CP models $m_{j}$ ($R=5$). Fifth row: coefficients from a 
tensor-input tree model with depth 3 and leaf node Tucker models
$m_{j}$ ($R=5$).}
\end{figure}

\bibliographystyleSM{chicago}
\bibliographySM{SHAPEFUN}

\end{document}